\documentclass{article}
\usepackage[hmarginratio=1:1,top=10mm,right=10mm,left=10mm,bottom=10mm,columnsep=20pt, headsep=7mm]{geometry}
\usepackage{fullpage}
\usepackage[english]{babel}
\usepackage[utf8x]{inputenc}

\usepackage{cite}
\usepackage[usenames,dvipsnames]{xcolor}
\usepackage{lastpage} 
\usepackage{fancyhdr} 								
\fancyhf{}
\usepackage[protrusion=true,expansion=true]{microtype}
\usepackage{xargs}


\usepackage{amsfonts, amsmath, amssymb, amsthm}
\usepackage{mathtools}
\usepackage{mathrsfs}
\usepackage{upgreek}
\usepackage{bbm}

\setcounter{secnumdepth}{3}
\setcounter{tocdepth}{3}
\theoremstyle{plain}
	\newtheorem{theorem}{\sffamily Theorem}[section]
	
	\newtheorem{lemma}[theorem]{\sffamily Lemma}
	
	\newtheorem{assumption}[theorem]{\sffamily Assumption}
	\newtheorem{remark}[theorem]{\sffamily Remark}
	\newtheorem{definition}[theorem]{\sffamily Definition}
	\newtheorem{example}[theorem]{\sffamily Example}

\newcommand{\vbs}{{\boldsymbol{\mathsf{v}}}}
\newcommand{\wbs}{{\boldsymbol{\mathsf{w}}}}
\newcommand{\gbs}{{\boldsymbol{\mathsf{g}}}}
\newcommand{\xbs}{{\boldsymbol{\mathsf{x}}}}
\newcommand{\ybs}{{\boldsymbol{\mathsf{y}}}}
\newcommand{\zbs}{{\boldsymbol{\mathsf{z}}}}
\newcommand{\VA}{{\mathbf{A}}}
\newcommand{\VB}{{\mathbf{B}}}

\providecommand{\CX}{{\cal X}}
\providecommand{\CY}{{\cal Y}}
\providecommand{\CI}{{\cal I}}
\providecommand{\CF}{{\cal F}}
\providecommand{\CH}{{\cal H}}
\providecommand{\CJ}{{\cal J}}
\providecommand{\CM}{{\cal M}}
\providecommand{\CT}{{\cal T}}
\providecommand{\CO}{{\cal O}}
\providecommand{\CL}{{\cal L}}
\providecommand{\bbR}{\mathbb{R}}
\providecommand{\bbE}{\mathbb{E}}
\providecommand{\bbP}{\mathbb{P}}
\providecommand{\bbN}{\mathbb{N}}

\newcommand{\Vone}       {\boldsymbol{1}}

\newcommand*{\DP}[2]{\left<{#1},{#2}\right>} 					
\newcommand*{\LRR}[1]{\left(#1\right)}							

\newcommand{\xref}{\xbs^\dagger}

\newcommand{\dmapX}[1]{\CJ_{#1}^{\CX}}
\newcommand{\dmapXs}[1]{\CJ_{#1^\ast}^{\CX^{\ast}}}
\newcommand{\dmapY}[1]{\CJ_{#1}^{\CY}}

\newcommand{\svaldmapY}[1]{\jmath_{#1}^{\CY}}
\newcommand{\svaldmapX}[1]{\jmath_{#1}^{\CX}}

\newcommand{\bregman}[2]{\VB_p(#1, #2)}
\newcommand{\bregmanSdef}[2]{\VB_{p^\ast}\big(#1, #2\big)}
\newcommand{\bregmanS}[2]{\VB_{p^\ast}\!\Big(\!\dmapX{p}(#1), \dmapX{p}(#2)\!\Big)}

\newcommand{\norm}[1]{\|#1\|}

\providecommand{\sign}{\operatorname{sign}}						
\providecommand{\argmin}{\operatorname*{argmin}}				

\providecommand{\liminf}{\operatorname*{lim\,inf}}
\providecommand{\range}{\operatorname{range}}                  	
\providecommand{\Null}{\operatorname{null}}						

\newcommand{\xN}[1]{\|#1\|_{\CX}}
\newcommand{\xsN}[1]{\|#1\|_{\CX^\ast}}
\newcommand{\yN}[1]{\|#1\|_{\CY}}
\newcommand{\ysN}[1]{\|#1\|_{\CY^\ast}}

\begin{document}
  \title{On the Convergence of Stochastic Gradient Descent for Linear Inverse Problems in Banach Spaces}
  \author{B. Jin and Z. Kereta}
   \date{} 
\maketitle

\begin{abstract}
In this work we consider stochastic gradient descent (SGD) for solving linear inverse problems in Banach spaces. 
SGD and its variants have been established as one of the most successful optimisation methods in machine learning, imaging and signal processing, etc.
At each iteration SGD uses a single datum, or a small subset of data, resulting in highly scalable methods that are very attractive for large-scale inverse problems.
Nonetheless, the theoretical analysis of SGD-based approaches for inverse problems has thus far been largely limited to Euclidean and Hilbert spaces.
In this work we present a novel convergence analysis of SGD for linear inverse problems in general Banach spaces: we show the almost sure convergence of the iterates to the minimum norm solution and establish the regularising property for suitable \textit{a priori} stopping criteria. Numerical results are also presented to illustrate features of the approach.
\end{abstract}


\section{Introduction}

This work considers (stochastic) iterative solutions for linear operator equations of the form
\begin{align}\label{eqn:inv}
    \VA\xbs=\ybs
\end{align}
where $\VA:\CX\rightarrow\CY$ is a bounded linear operator between Banach spaces $\CX$ and $\CY$ (equipped with the norms $\|\cdot\|_{\CX}$ and $\|\cdot\|_{\CY}$, respectively), and $\ybs\in\range{(\VA)}$ is the exact data.
In practice, we only have access to noisy data $\ybs^\delta=\ybs+{\boldsymbol{\xi}}$, where {$\boldsymbol{\xi}$} denotes the measurement noise with a noise level $\delta\geq0$ such that $\yN{\ybs^\delta-\ybs}{\leq}\delta$.
Linear inverse problems arise naturally in many applications in science and engineering, and also form the basis for studying nonlinear inverse problems. Hence, design and analysis of stable reconstruction methods for linear inverse problems have received much attention.

Iterative regularisation is a powerful algorithmic paradigm that has been successfully employed 
for many inverse problems \cite[Chapters 6 and 7]{EnglHankeNeubauer:1996} \cite{KaltenbacherNeubauerScherzer:2008}.
Classical iterative methods for inverse problems include (accelerated) Landweber method, conjugate 
gradient method, Levenberg-Marquardt method, and Gauss-Newton method, to name a few.
The per-iteration computational bottleneck of many iterative methods lies in utilising all the data at each iteration, which can be of a prohibitively large size.
For example, this occurs while computing the derivative of an objective. One promising strategy to 
overcome this challenge is stochastic gradient descent (SGD), due to Robbins and Monro \cite{RM51}. SGD 
decomposes the original problem into (finitely many) sub-problems, and then at each iteration uses 
only a single datum, or a mini-batch of data, typically selected uniformly at random. This greatly 
reduces the computational complexity per-iteration, and enjoys excellent scalability with respect to
data size. In the standard, and best 
studied setting, $\CX$ and $\CY$ are finite dimensional Euclidean spaces and the corresponding data fitting objective is the 
(rescaled) least squares $\Psi(\xbs)=\frac{1}{2N}\yN{\VA\xbs-\ybs}^2 = 
\frac{1}{N}\sum_{i=1}^N \frac{1}{2}\yN{\VA_i\xbs-\ybs_i}^2$. In this setting SGD takes the form
\begin{align*}
    \xbs_{k+1} = \xbs_k - \mu_{k+1}\VA_{i_{k+1}}^\ast(\VA_{i_{k+1}}\xbs_k-\ybs_{i_{k+1}}), \quad k=0,1,\ldots,
\end{align*}
where $\mu_k>$ is the step-size, ${i_{k+1}}$ a randomly selected index, $\VA_i$ the $i$-th row of a matrix $\VA$, and $\ybs_i$ the $i$-th entry of $\ybs$. In the seminal work \cite{RM51}, Robbins and Monro presented SGD as a Markov chain, 
laying the groundwork for the field of stochastic approximation \cite{KushnerYin:2003}. SGD has 
since had a major impact on statistical inference and machine learning, especially for the training of neural networks. SGD has been extensively studied in the Euclidean setting; see \cite{BCN18} for an overview of the convergence theory from the viewpoint 
of optimisation.

SGD has also been a popular method for image reconstruction, especially in medical imaging.
For example, the (randomised) Kaczmarz method is a reweighted version of SGD that has been extensively 
used in computed tomography \cite{HM93, NZZ15}. Other applications of SGD and its variants
include optical tomography \cite{ChenLiLiu:2018}, phonon transmission coefficient recovery 
\cite{GambaLiNair:2022}, positron emission tomography \cite{Z+21}, as well as general sparse recovery \cite{SchopferLorenz:2019,SchopferLorenzTondji:2022}. For linear inverse problems in Euclidean spaces, Jin and Lu \cite{JL19} gave a first proof of convergence 
of SGD iterates towards the minimum norm solution, and analysed the regularising behaviour in the presence of noise; see \cite{JahnJin:2020,JinZhouZou:2020,JinZhouZou:2021siuq,LuMathe:2022,RabeloLeitao:2022} for further convergence results, \textit{a posteriori} stopping rules (discrepancy principle), nonlinear problems, and general step-size schedules, etc.

Iterative methods in Euclidean and Hilbert spaces are effective for reconstructing smooth solutions but fail to capture special features of the solutions, such as sparsity and piecewise constancy. In practice, many imaging inverse problems are more adequately described in non-Hilbert settings, including
sequence spaces $\ell^p(\bbR)$ and Lebesgue spaces $\CL^p(\Omega)$, with $p\in[1,\infty]
\setminus\{2\}$, which requires changing either the solution, the data space, or both. For example, inverse problems with impulse noise are better modelled by setting the data 
space $\CY$ to a Lebesgue space $\CL^p(\Omega)$ with $p\approx 1$ \cite{ClasonJinKunisch:2010}, whereas the recovery of sparse solutions 
is modelled by doing the same to the solution space $\CX$ \cite{CandesRombertTao:2006}. Thus, it is of great importance to develop 
and analyse algorithms for inverse problems in Banach spaces, and this has received much attention \cite{SG+09,TBHK_12}. For the Landweber method for linear inverse problems in Banach spaces, Sch\"opfer et al \cite{SLS_06} were the first to prove strong convergence of the iterates under a suitable 
step-size schedule for a smooth and uniformly convex Banach space $\CX$ and an arbitrary Banach space 
$\CY$. This has since been extended and refined in various aspects, e.g. regarding acceleration \cite{SS09, W18, GHC19,ZhongWangJin:2019}, nonlinear forward models \cite{dHQS12, M18}, 
and Gauss-Newton methods \cite{KH10}.

In this work, we investigate SGD for inverse problems in Banach spaces, which has thus far lagged behind due to outstanding challenges in extending the analysis of standard Hilbert 
space approaches to the Banach space setting. The main challenges in analysing SGD-like gradient-based 
methods in Banach spaces are two-fold: 
\begin{enumerate}
\item The use of duality maps results in non-linear update rules, which greatly complicates the convergence analysis. For example, the (expected) 
difference between successive updates can no longer be identified as the (sub-)gradient of the objective.
    \item Due to geometric characteristics of Banach spaces, it is more common to use the Bregman distance 
for the convergence analysis, which results in the loss of useful algebraic tools, e.g. triangle 
inequality and bias-variance decomposition, that are typically needed for the analysis.
\end{enumerate}

In this work, 
we develop an SGD approach for the numerical solution of linear inverse problems in Banach spaces,
using the sub-gradient approach based on duality maps, and present a novel convergence analysis.
We first consider the case of exact data, and show that SGD iterates converge to a minimising 
solution (first almost surely and then in expectation) under standard assumptions on summability of step-sizes, and geometric properties of the space $\CX$, cf. Theorems
\ref{thm:as_lin_convergence} and \ref{thm:L1_lin_convergence}. This solution is identified 
as the minimum norm solution if the initial guess $\xbs_0$ satisfies the range condition $\dmapX{p}(\xbs_0)\in\overline{\rm{range}(\VA^\ast)}$.
Further, we give a convergence rate in Theorem \ref{thm:lin-main} when the forward operator $\VA$ satisfies a conditional stability 
estimate. In case of noisy observations, we show the regularising property 
of SGD, for properly chosen stopping indices, cf. Theorem \ref{thm:regularisation_property}. The analysis rests on a descent property in Lemma \ref{lem:descent_property} and Robbins-Siegmund theorem for almost super-martingales. In addition, we perform extensive numerical experiments on a model inverse problem (linear integral equation) and computed tomography (with parallel beam geometry) to illustrate distinct features of the proposed Banach space SGD, and examine the influence of various factors, such as the choice of the spaces $\CX$ and $\CY$, mini-batch size and noise characteristics (Gaussian or impulse).

When finalising the paper, we became aware of the independent and simultaneous work \cite{JinLuZhang:2022} on a stochastic mirror descent method for linear inverse problems between a Banach space $\mathcal{X}$ and a Hilbert space $\mathcal{Y}$. The method is a randomised version of the well-known Landweber-Kaczmarz method. The authors prove convergence results under \textit{a priori} stopping rules, and also establish an order-optimal convergence rate when the exact solution $\xbs^\dag$ satisfies a benchmark source condition, by interpreting the method as a randomised block gradient method applied to the dual problem. Thus, the current work differs significantly from \cite{JinLuZhang:2022} in terms of problem setting, main results and analysis techniques.

The rest of the paper is organised as follows. In Section \ref{sec:prelims}, we recall background 
materials on the geometry of Banach spaces, e.g. duality maps and Bregman distance. In Section \ref{sec:linear}, we 
present the convergence of SGD for exact data and in Section \ref{sec:regularisation}, we discuss the regularising property of SGD for noisy observations. Finally, in Section \ref{sec:experiments}, we 
provide some experimental results on a model inverse problem and computed tomography. In the {A}ppendix we collect several useful inequalities and auxiliary estimates.

Throughout, let $\CX$ and $\CY$ be two real Banach spaces, with their norms denoted by $\xN{\cdot}$ and $\yN{\cdot}$, respectively.
$\CX^\ast$ and $\CY^\ast$ are their respective dual spaces, with their norms denoted by $\|\cdot\|_{\CX^\ast}$ and $\|\cdot\|_{\CY^\ast}$, respectively.
For $\xbs\in\CX$ and $\xbs^\ast\in\CX^\ast$, we denote the corresponding duality pairing by
$\DP{\xbs^\ast}{\xbs}=\DP{\xbs^\ast}{\xbs}_{\CX^\ast\times\CX}=\xbs^\ast(\xbs)$.
For a continuous linear operator $\VA:\CX\rightarrow \CY$, we use $\|\VA\|_{\CX\to \CY}$ to denote the operator norm (often with the subscript omitted).
The adjoint of $\VA$ is denoted as $\VA^\ast:\CY^\ast\rightarrow\CX^\ast$ and it is a continuous linear operator, with $\norm{\VA}_{\CX\to\CY}=\norm{\VA^\ast}_{\CY^\ast\to\CX^\ast}$.
The conjugate exponent of $p\in (1,\infty)$ is denoted by $p^\ast$, such that $1/p+1/p^\ast=1$ holds.
The Cauchy-Schwarz inequality of the following form holds for any $\xbs\in\CX$ and $\xbs^\ast\in\CX^\ast$
\begin{equation}\label{eqn:Cauchy_Schwarz}
    | \DP{\xbs^\ast}{\xbs}|\le \xsN{\xbs^\ast}\xN{\xbs}.
\end{equation}
For reals $a,b$ we write $a\wedge b=\min\{a,b\}$ and $a\vee b=\max\{a,b\}$. By $(\CF_k)_{k\in\bbN}$, 
we denote the natural filtration, i.e. a growing sequence of $\sigma$-algebras  such that $\CF_k
\subset\CF_{k+1}\subset \CF$, for all $k\in\bbN$ and a $\sigma$-algebra $\CF$, and $\CF_k$ is generated by 
random indices $i_j$, for $j\leq k$.
In the context of SGD, $k\in\bbN$ is the iteration number and 
$\CF_k$ denotes the iteration history, that is, information available at time $k$, and for a given 
initialisation $\xbs_0$, we can identify $\CF_k=\sigma(\xbs_1,\ldots,\xbs_k)$.
For a filtration $(\CF_k)_{k\in\bbN}$ we denote by $\bbE_k[\cdot]=\bbE[\cdot\mid\xbs_1,\ldots\xbs_k]$ the conditional expectation with respect to $\CF_k$. 
{A sequence of random variables $(x_k)_{k\in\bbN}$ (adapted to the filtration $(\mathcal{F}_k)_{k\in\mathbb{N}}$) is a called super-martingale if $\bbE_k[x_{k+1}]\leq x_k.$}
Throughout, the notation a.s. denotes almost sure events.


\section{Preliminaries on Banach spaces}\label{sec:prelims}
In this section we recall relevant concepts from Banach space theory and the geometry of Banach spaces.

\subsection{Duality map}

In a Hilbert space $\CH$, for every $\xbs\in\CH$, there exists a unique $\xbs^\ast\in\CH^\ast$ such that $\DP{\xbs^\ast}{\xbs}=\|\xbs\|_{\CH}\|\xbs^\ast\|_{\CH^\ast}$ and $\|\xbs^\ast\|_{\CH^\ast}=\|\xbs\|_{\CH}$, by the Riesz representation theorem.
For Banach spaces, however, such an $\xbs^\ast$ is not necessarily unique, motivating the notion of duality maps.

\begin{definition}[Duality map]\label{defn:duality_map}
For any $p>1$, a \emph{duality map} $\dmapX{p}:\CX\rightarrow 2^{\CX^\ast}$ is the sub-differential of the {\rm(}convex{\rm)} functional $\frac{1}{p}\|\xbs\|_{\CX}^p$
\begin{align}\label{eqn:dmap_defined}
\dmapX{p} (\xbs) = \left\{\xbs^\ast\in\CX^\ast : \DP{\xbs^\ast}{\xbs}=\xN{\xbs}\xsN{\xbs^\ast}, \text{ and } \xN{\xbs}^{p-1}=\xsN{\xbs^\ast} \right\},
\end{align}
with gauge function $t\mapsto t^{p-1}$.
A single-valued selection of $\dmapX{p}$ is denoted by $\svaldmapX{p}$.
\end{definition}

In practice, the choice of the power parameter $p$ depends on geometric properties of the space $\CX$. {For single-valued duality maps, we use $\dmapX{p}$ and $\svaldmapX{p}$ interchangeably.}
Next we recall standard notions of smoothness and convexity of Banach spaces.
For an overview of Banach space geometry, we refer an interested reader to the monographs \cite{C_09, Cioranescu:1990,TBHK_12}.

\begin{definition}\label{defn:smoothness_and_convexity}
Let $\CX$ be a Banach space. 
{$\CX$ is said to be reflexive if the canonical map $\xbs\mapsto\widehat\xbs$ between $\CX$ and the bidual $\CX^{\ast\ast}$, defined by $\widehat\xbs(\xbs^\ast) = \xbs^\ast(\xbs)$, is surjective. $\CX$ is smooth if for every $0\neq\xbs\in\CX$ there is a unique $\xbs^\ast\in\CX^\ast$ such that $\DP{\xbs^\ast}{\xbs}=\xN{\xbs}$ and $\xsN{\xbs^\ast}=1$.}
The function $\delta_\CX:(0,2]\rightarrow\bbR$ defined as
\begin{align*}
    \delta_\CX(\tau) = \inf\Big\{1-\tfrac{1}{2}\xN{\zbs+\wbs} : \xN{\zbs}=\xN{\wbs}=1, \xN{\zbs-\wbs}\geq\tau \Big\}
\end{align*}
is the \emph{modulus of convexity} of $\CX$.
$\CX$ is said to be \emph{uniformly convex} if $\delta_\CX(\tau)>0$ for all $\tau\in(0,2]$, and \emph{$p$-convex}, for $p>1$, if $\delta_\CX(\tau)\geq K_p\tau^p$ for some $K_p>0$ and all $\tau\in(0,2]$. The function $\rho_\CX:{[0,\infty)\to[0,\infty)}$ defined as
\begin{align*}
    \rho_\CX(\tau) = \sup\Big\{\frac{\xN{\zbs+\tau\wbs}+\xN{\zbs-\tau\wbs}}{2}-1 : \xN{\zbs}=\xN{\wbs}=1\Big\}
\end{align*}
is the \emph{modulus of smoothness} of $\CX$, and is a convex and continuous function such that $\frac{\rho_\CX(\tau)}{\tau}$ is a non-decreasing function with $\rho_\CX(\tau)\leq\tau$.
$\CX$ is said to be \emph{uniformly smooth} if $\lim_{\tau\searrow0}\frac{\rho_\CX(\tau)}{\tau}=0$, and \emph{$p$-smooth}, for $p>1$, if $\rho_\CX(\tau)\leq K_p\tau^p$ for some $K_p>0$ and all $\tau\in(0,\infty)$.
\end{definition}
The following relationships between Banach spaces and duality maps will be used extensively. 
\begin{theorem}[{\cite[Theorems 2.52 and 2.53, and Lemma 5.16]{TBHK_12}}]\label{rem:dmap_properties}
\begin{enumerate}
\item[{\rm(i)}] For every $\xbs\in\CX$, the set $\dmapX{p}(\xbs)$ is non-empty, convex, and weakly-$\star$ closed in $\CX^\ast$.
\item[{\rm(ii)}]\label{rem:Xp_Xstarpstar} $\CX$ is $p$-smooth if and only if $\CX^\ast$ is $p^\ast$-convex.
$\CX$ is $p$-convex if and only if $\CX^\ast$ is $p^\ast$-smooth.
\item[{\rm(iii)}]\label{property:dual_inverse}
$\CX$ is smooth if and only if $\dmapX{p}$ is single valued. 
If $\CX$ is {convex of power type and smooth,} then $\dmapX{p}$ is invertible and $\big(\dmapX{p}\big)^{-1}=\dmapXs{p}$.
If $\CX$ is uniformly smooth and uniformly convex, then $\dmapX{p}$ and $\dmapXs{p}$ are both {uniformly} continuous.
\item[{\rm(iv)}] {Let $\CX$ be a uniformly smooth Banach space with duality map $\dmapX{p}$ with $p\geq 2$. Then, for all $\xbs,\widetilde\xbs\in \CX$, there holds
\begin{align*}
    \xsN{\dmapX{p}(\xbs)-\dmapX{p}(\widetilde \xbs)}^{p^\ast}\leq C \max\{1, \xN{\xbs}, \xN{\widetilde\xbs}\}^{p}\, {\overline{\rho}_\CX}(\xN{\xbs-\widetilde\xbs})^{p^\ast},
\end{align*}
where $\overline{\rho}_\CX(\tau)=\rho_\CX(\tau)/\tau$ is a modulus of smoothness function such that $\overline{\rho}(\tau)\leq1$.}
\end{enumerate}
\end{theorem}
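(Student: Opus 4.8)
\noindent\textit{Proof idea.} These assertions collect classical facts from the geometry of Banach spaces, so the plan is to assemble the standard ingredients part by part rather than to argue from scratch. For (i), the key point is that $\dmapX{p}$ is, by Definition \ref{defn:duality_map}, the subdifferential of the everywhere-finite continuous convex function $\varphi(\xbs)=\tfrac1p\xN{\xbs}^p$; hence $\partial\varphi(\xbs)$ is automatically non-empty, convex, and weak-$\star$ closed, being an intersection of the weak-$\star$ closed half-spaces $\{\xbs^\ast:\DP{\xbs^\ast}{\widetilde\xbs-\xbs}\le\varphi(\widetilde\xbs)-\varphi(\xbs)\}$. Non-emptiness can alternatively be exhibited directly by Hahn--Banach: choose a norm-one functional attaining $\xN{\xbs}$ at $\xbs$ and rescale it by $\xN{\xbs}^{p-1}$. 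The explicit description \eqref{eqn:dmap_defined} then drops out of the equality case of the Cauchy--Schwarz inequality \eqref{eqn:Cauchy_Schwarz} together with the chain rule for $t\mapsto\tfrac1p t^p$ composed with the norm.

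For (ii), I would invoke the Lindenstrauss duality formula relating $\delta_\CX$ and $\rho_{\CX^\ast}$ (and, symmetrically, $\delta_{\CX^\ast}$ and $\rho_\CX$), which presents $\rho_{\CX^\ast}$ as a Legendre-type transform of $\delta_\CX$. Substituting the power bound $\delta_\CX(\epsilon)\ge K\epsilon^p$ and carrying out the resulting one-variable optimisation yields $\rho_{\CX^\ast}(\tau)\le K'\tau^{p^\ast}$, the conjugate exponent appearing precisely because the transform sends a power $p$ to $p^\ast$; the reverse implication runs the inverse transform together with Young's inequality to balance the exponents. The second equivalence is the same statement applied to the space $\CX^\ast$, using reflexivity to identify $\CX^{\ast\ast}$ with $\CX$.

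For (iii), single-valuedness of $\dmapX{p}$ is equivalent to G\^ateaux differentiability of $\varphi$, hence (for $p>1$) of the norm away from the origin, i.e. to smoothness of $\CX$. For invertibility, convexity of power type forces reflexivity (Milman--Pettis), so $\dmapXs{p}:\CX^\ast\to\CX$ is well defined; a direct check from \eqref{eqn:dmap_defined} shows $\xbs^\ast\in\dmapX{p}(\xbs)\Rightarrow\xbs\in\dmapXs{p}(\xbs^\ast)$, the only arithmetic input being $(p-1)(p^\ast-1)=1$ so that $\xsN{\xbs^\ast}^{p^\ast-1}=\xN{\xbs}^{(p-1)(p^\ast-1)}=\xN{\xbs}$, and smoothness of both $\CX$ and $\CX^\ast$ (the latter obtained from uniform convexity of $\CX$ via (ii)) makes the two maps single-valued mutual inverses. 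Uniform continuity of $\dmapX{p}$ and $\dmapXs{p}$ under uniform smoothness and uniform convexity is the quantitative form of the characterisation of uniform smoothness by uniform Fr\'echet differentiability of the norm: this gives uniform continuity of the normalised duality selection on bounded sets, after which one re-attaches the scalar factor $\xN{\cdot}^{p-1}$; the statement for $\dmapXs{p}$ then follows by symmetry.

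Part (iv) is the quantitative core, and where I expect the real work. The plan is to begin from a modulus-of-smoothness inequality of Xu--Roach type, valid in a uniformly smooth space, of the form $\xN{\xbs+\widetilde\xbs}^p\le\xN{\xbs}^p+p\DP{\dmapX{p}(\xbs)}{\widetilde\xbs}+C\max\{1,\xN{\xbs},\xN{\widetilde\xbs}\}^{p}\,\overline{\rho}_\CX(\xN{\widetilde\xbs})^{p^\ast}$, and to symmetrise in $\xbs$ and $\widetilde\xbs$ so as to bound $\DP{\dmapX{p}(\xbs)-\dmapX{p}(\widetilde\xbs)}{\xbs-\widetilde\xbs}$ by $C\max\{1,\xN{\xbs},\xN{\widetilde\xbs}\}^{p}\,\overline{\rho}_\CX(\xN{\xbs-\widetilde\xbs})^{p^\ast}$, using $\overline{\rho}_\CX\le1$ to absorb lower-order remainder terms. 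To convert this bilinear bound into a norm estimate on $\dmapX{p}(\xbs)-\dmapX{p}(\widetilde\xbs)$, I would use that $\CX$ uniformly smooth implies $\CX^\ast$ uniformly convex with a $p^\ast$-type modulus (by (ii)), which — after substituting $\dmapXs{p}(\dmapX{p}(\xbs))=\xbs$ from (iii) — supplies a lower bound of the shape $\DP{\dmapX{p}(\xbs)-\dmapX{p}(\widetilde\xbs)}{\xbs-\widetilde\xbs}\ge c\,\xsN{\dmapX{p}(\xbs)-\dmapX{p}(\widetilde\xbs)}^{p^\ast}$, modulo homogenisation against $\max\{1,\xN{\xbs},\xN{\widetilde\xbs}\}$; combining the two inequalities gives the claim. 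The delicate part is the exponent bookkeeping and the tracking of the $\max$-of-norms prefactor through the homogenisation (via identities such as $(p-1)p^\ast=p$), so that the final powers close exactly; everything else is routine convex analysis and the duality dictionary between the moduli.
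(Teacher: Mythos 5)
The paper does not prove this theorem at all: all four items are imported wholesale from \cite[Theorems 2.52 and 2.53, Lemma 5.16]{TBHK_12}, so your attempt can only be measured against the standard proofs in that reference. Your outlines of (i)--(iii) match those standard arguments (subdifferential of the continuous convex function $\tfrac1p\xN{\cdot}^p$ plus Hahn--Banach for (i), the Lindenstrauss duality formula between $\delta_\CX$ and $\rho_{\CX^\ast}$ for (ii), \v{S}mulian's criterion together with the computation $(p-1)(p^\ast-1)=1$ for (iii)), and are correct as sketches; the only cosmetic point is that the uniform continuity in (iii) holds on bounded sets, which you note.

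Part (iv) contains a genuine gap. Your plan hinges on the lower bound $\DP{\dmapX{p}(\xbs)-\dmapX{p}(\widetilde\xbs)}{\xbs-\widetilde\xbs}\geq c\,\xsN{\dmapX{p}(\xbs)-\dmapX{p}(\widetilde\xbs)}^{p^\ast}$. This is a $p^\ast$-power-type monotonicity estimate for the duality map of $\CX^\ast$, and it requires $\delta_{\CX^\ast}(\epsilon)\geq K\epsilon^{p^\ast}$, i.e. $p^\ast$-convexity of $\CX^\ast$, i.e.\ (by your own item (ii)) $p$-smoothness of $\CX$. Item (iv) assumes only that $\CX$ is uniformly smooth, under which $\delta_{\CX^\ast}$ is positive but may vanish faster than any power of $\epsilon$, so the constant $c$ you need does not exist; the step fails exactly in the regime the statement is designed to cover. (Your postulated Xu--Roach remainder $C\max\{1,\xN{\xbs},\xN{\widetilde\xbs}\}^p\overline{\rho}_\CX(\xN{\widetilde\xbs})^{p^\ast}$ is also stronger than what the uniformly smooth Xu--Roach inequality gives, whose remainder is linear in $\rho_\CX$, but that is secondary.) The standard proof avoids the bilinear form altogether: one first shows, for unit vectors $\zbs,\wbs$, that the normalized duality selection satisfies $\xsN{\dmapX{p}(\zbs)-\dmapX{p}(\wbs)}\leq C\,\overline{\rho}_\CX(\xN{\zbs-\wbs})$ by evaluating $\DP{\dmapX{p}(\zbs)-\dmapX{p}(\wbs)}{\vbs}$ against arbitrary unit test vectors $\vbs$ and invoking the definition of $\rho_\CX$ directly, with no detour through the geometry of $\CX^\ast$; one then rescales via $\dmapX{p}(\xbs)=\xN{\xbs}^{p-1}\dmapX{p}(\xbs/\xN{\xbs})$, which is where the prefactor $\max\{1,\xN{\xbs},\xN{\widetilde\xbs}\}^{p}$ and the restriction $p\geq2$ enter, the scalar term $|\xN{\xbs}^{p-1}-\xN{\widetilde\xbs}^{p-1}|$ being absorbed using $\tau\leq C\overline{\rho}_\CX(\tau)$ on bounded sets. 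You would need to replace your step for (iv) by this direct sup-over-test-vectors argument, or else strengthen the hypothesis to $p$-smoothness.
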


Next we list some common Banach spaces, the corresponding duality maps and convexity and smoothness properties.
\begin{example}\label{eg:spaces}
\begin{enumerate}
\item[{\rm(i)}] A Hilbert space $\CX$ is $2$-smooth and $2$-convex, and $\dmapX{2}$ is the identity.
\item[{\rm(ii)}] If $\CX$ is smooth, then $\dmapX{p}$ is the Gateaux derivative of the functional $\xbs\mapsto\frac{1}{p}\xN{\xbs}^p$.
\item[{\rm(iii)}] If $\CX=\ell^r(\bbR)$ with $1<r<\infty$, then $\dmapX{p}$ is single-valued, and the duality map is given by
$\dmapX{p}(\xbs) = \|\xbs\|_r^{p-r} |\xbs|^{r-1}\sign(\xbs).$
Moreover, $\dmapX{p}=\nabla({\frac{1}{p}\xN{\cdot}^p})$ since $\CX$ is smooth.
\item[{\rm(iv)}] Lebesgue spaces $\CL^p(\Omega)$, Sobolev spaces $W^{s,p}(\Omega)$, with $s>0$, {\rm(}for an open bounded domain $\Omega${\rm)}, and sequence spaces $\ell^p(\bbR)$ are $p\wedge 2$-smooth and $p\vee2$-convex, for $1<p<\infty$.
For $p\in\{1,\infty\}$, they are neither smooth nor strictly convex.
\end{enumerate}
\end{example}

\subsection{Bregman distance}
Due to the geometry of Banach spaces, it is often more convenient to use the
Bregman distance than the standard Banach space norm $\xN{\cdot}$ in the convergence analysis.
\begin{definition}[Bregman distance]\label{defn:Bregman_distance}
For a smooth Banach space $\CX$, the functional
\begin{align*} \bregman{\zbs}{\wbs}
&= \frac{1}{p^\ast}\xN{\zbs}^p+\frac{1}{p}\xN{\wbs}^p-\DP{\dmapX{p}(\zbs)}{\wbs},
\end{align*}
is called the \emph{Bregman distance}, where $1/p+1/p^\ast=1$.
\end{definition}

Note that the dependence of the Bregman distance $\bregman{\zbs}{\wbs}
$ on the space $\CX$ is omitted, which is often clear from the context. The Bregman distance does not satisfy the triangle inequality, and is generally non-symmetric. Thus it is not a distance. The next theorem lists
useful properties of the Bregman distance, which show the relationship between the
geometry of the underlying Banach space and duality maps.
\begin{theorem}[{\cite[Theorem 2.60, Lemmas 2.62 and 2.63]{TBHK_12}}]\label{thm:bregman_properties}
The following properties hold.
\begin{enumerate}
\item[{\rm(i)}] If $\CX$ is smooth and reflexive, then
$ \bregman{\zbs}{\wbs}={\bregmanS{\wbs}{\zbs}}.$
\item[{\rm(ii)}] Bregman distance satisfies the three-point identity
\begin{align}\label{eqn:3_point_id} \bregman{\zbs}{\wbs}=\bregman{\zbs}{\vbs}+\bregman{\vbs}{\wbs}+\DP{\dmapX{p}(\vbs)-\dmapX{p}(\zbs)}{\wbs-\vbs}.\end{align}
\item[{\rm(iii)}]\label{rem:bregman_pconv} If $\CX$ is $p$-convex, then it is reflexive, $p\geq 2$ and there exists $C_p>0$ such that
\begin{align}\label{eqn:norm_leq_bregman} \bregman{\zbs}{\wbs}\geq p^{-1}C_p\xN{\wbs-\zbs}^p.\end{align}
\item[{\rm(iv)}]\label{rem:bregman_psmooth} If $\CX^\ast$ is $p^\ast$-smooth, then it is reflexive, $p^\ast\le 2$ and there exists $G_{p^\ast}>0$ such that
\begin{align}\label{eqn:norm_geq_bregman} {\bregmanSdef{\zbs^\ast}{\wbs^\ast}\leq (p^\ast)^{-1}G_{p^\ast}\xsN{\wbs^\ast-\zbs^\ast}^{p^\ast}}.\end{align}
\item[{\rm(v)}] $\bregman{\zbs}{\wbs}\geq0$, and if $\CX$ is uniformly convex, $\bregman{\zbs}{\wbs}=0$ if and only if {$\zbs=\wbs$}. 
\item[{\rm(vi)}] $\bregman{\zbs}{\wbs}$ is continuous in the second argument. If $\CX$ is smooth and uniformly convex, then $\dmapX{p}$ is continuous on bounded subsets and $\bregman{\zbs}{\wbs}$ is continuous in its first argument.
\end{enumerate}
\end{theorem}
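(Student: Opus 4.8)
The statement collects standard facts from the geometry of Banach spaces, for which complete proofs are in the cited monograph \cite{TBHK_12} (see also \cite{Cioranescu:1990,C_09}); the plan below records how the individual items arise. The organising remark is that $\bregman{\zbs}{\wbs}$ is precisely the Bregman distance generated by the convex function $f(\xbs)=\tfrac1p\xN{\xbs}^p$, whose Fenchel conjugate is $f^\ast(\xbs^\ast)=\tfrac1{p^\ast}\xsN{\xbs^\ast}^{p^\ast}$, with subdifferential $\partial f=\dmapX{p}$ and, when $\CX$ is smooth and reflexive, $\partial f^\ast=\dmapXs{p}=(\dmapX{p})^{-1}$, cf.\ Theorem~\ref{rem:dmap_properties}(iii). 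Using the Fenchel--Young equality $\DP{\dmapX{p}(\xbs)}{\xbs}=\xN{\xbs}^p=f(\xbs)+f^\ast(\dmapX{p}(\xbs))$ one rewrites $\bregman{\zbs}{\wbs}=f(\wbs)-f(\zbs)-\DP{\dmapX{p}(\zbs)}{\wbs-\zbs}$, the familiar Bregman form, which renders items (i), (ii), (v), (vi) essentially routine.

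For (ii) I would substitute the definition of each of the three Bregman distances on the right-hand side of \eqref{eqn:3_point_id}, use $\DP{\dmapX{p}(\vbs)}{\vbs}=\xN{\vbs}^p$ to kill the $\xN{\vbs}^p$-terms, and collect the duality pairings; the identity then falls out with no geometry involved. For (i), the claim is the conjugacy relation $D_f(\wbs,\zbs)=D_{f^\ast}(\partial f(\zbs),\partial f(\wbs))$: expand the right-hand side, apply the Fenchel equalities $f^\ast(\partial f(\xbs))=\DP{\partial f(\xbs)}{\xbs}-f(\xbs)$ at $\xbs=\zbs$ and $\xbs=\wbs$, and use $\dmapXs{p}\circ\dmapX{p}=\mathrm{id}$ together with reflexivity; the terms telescope back to $D_f(\wbs,\zbs)=\bregman{\zbs}{\wbs}$, the smoothness/reflexivity hypotheses being exactly what makes $\dmapX{p}$ single-valued and invertible. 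For (v), $\bregman{\zbs}{\wbs}\ge0$ is just convexity of $f$ (a Bregman distance of a convex function is nonnegative), and when $\CX$ is uniformly convex, $f$ is strictly convex and Gateaux differentiable, so $\bregman{\zbs}{\wbs}=0$ pins $\wbs$ to the affine tangent of $f$ at $\zbs$, forcing $\wbs=\zbs$. For (vi), $\wbs\mapsto\bregman{\zbs}{\wbs}=\tfrac1{p^\ast}\xN{\zbs}^p+\tfrac1p\xN{\wbs}^p-\DP{\dmapX{p}(\zbs)}{\wbs}$ is continuous because $\xN{\cdot}^p$ is continuous and $\DP{\dmapX{p}(\zbs)}{\cdot}$ is a bounded linear functional; continuity in the first argument reduces to norm-to-norm continuity of $\dmapX{p}$ on bounded sets, which I would deduce from closedness of the subdifferential graph (giving $\dmapX{p}(\xbs_n)\to\dmapX{p}(\xbs)$ weakly-$\ast$ along $\xbs_n\to\xbs$) together with $\xsN{\dmapX{p}(\xbs_n)}=\xN{\xbs_n}^{p-1}\to\xN{\xbs}^{p-1}$ and the Radon--Riesz/Kadec--Klee property of a uniformly convex space.

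The real content is in (iii) and (iv), where I would appeal to the quantitative Xu--Roach / Figiel-type inequalities (as assembled in \cite{TBHK_12}) that convert a power-type modulus of convexity, resp.\ smoothness, into two-sided power-type bounds on the Bregman distance: $p$-convexity of $\CX$ yields the lower estimate $\bregman{\zbs}{\wbs}\ge p^{-1}C_p\xN{\wbs-\zbs}^p$, while $p^\ast$-smoothness of $\CX^\ast$ --- equivalently $(p^\ast-1)$-H\"older continuity of $\dmapXs{p}$, cf.\ Theorem~\ref{rem:dmap_properties}(iv) read in $\CX^\ast$ --- yields the dual upper estimate $\bregmanSdef{\zbs^\ast}{\wbs^\ast}\le(p^\ast)^{-1}G_{p^\ast}\xsN{\wbs^\ast-\zbs^\ast}^{p^\ast}$. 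The sign constraints $p\ge2$ and $p^\ast\le2$ come from Nordlander's inequality $\delta_\CX(\tau)\le 1-(1-\tau^2/4)^{1/2}\sim\tau^2/8$ as $\tau\searrow0$, which cannot coexist with $\delta_\CX(\tau)\ge K_p\tau^p$ if $p<2$; reflexivity is the Milman--Pettis theorem (a uniformly convex or uniformly smooth space is reflexive). The main obstacle is proving the Xu--Roach inequalities from first principles --- they demand delicate estimates with the modulus functions --- which is exactly why one cites \cite{TBHK_12} rather than reproving them here.
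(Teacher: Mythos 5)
The paper offers no proof of this theorem: it is quoted verbatim from \cite[Theorem 2.60, Lemmas 2.62 and 2.63]{TBHK_12}, so the only ``approach'' to match is that citation, which you also ultimately rely on for the quantitative items (iii)--(iv); your algebraic derivations of (i), (ii) and (v) check out, and attributing (iii)--(iv) to the Xu--Roach inequalities and the constraints $p\geq 2$, $p^\ast\leq 2$ to Nordlander's bound plus Milman--Pettis is exactly the standard route. The one slip is in (vi): the sequence $\dmapX{p}(\xbs_n)$ lives in $\CX^\ast$, so the Kadec--Klee property you invoke would have to hold for $\CX^\ast$ --- which under the stated hypotheses is uniformly \emph{smooth}, not uniformly convex --- rather than for $\CX$; note, though, that for continuity of $\bregman{\zbs}{\wbs}$ in its first argument (the only consequence the paper actually uses) weak-$\star$ convergence of $\dmapX{p}(\xbs_n)$, i.e. the demicontinuity already delivered by your graph-closedness step together with single-valuedness, is sufficient.
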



\section{Convergence analysis for exact data}\label{sec:linear}

Now we develop an SGD type approach for problem \eqref{eqn:inv} and analyse its convergence. 
{Throughout, we make the following assumption on the Banach spaces $\CX$ and $\CY$, unless indicated otherwise.}
{\begin{assumption}\label{ass:space-basic}
The Banach space $\CX$ 
is $p$-convex and smooth, and $\CY$ is arbitrary.
\end{assumption}}

To recover the solution $\xbs^\dag$, we minimise a least-squares type objective
$\argmin_{\xbs\in\CX} \tfrac{1}{p}\yN{\VA\xbs-\ybs}^p,$ for some $p>1$.
By $\CX_{\min}$, we denote the (non-empty) set of minimisers over $\CX$.
Among the elements of $\CX_{\min}$, the regularisation theory focuses on the so-called minimum norm solution.

\begin{definition}\label{defn:minimum_norm}
An element $\xref\in\CX$ is called a \emph{minimum norm solution} {\rm(}MNS\,{\rm)} of \eqref{eqn:inv} if
\[ \VA\xref=\ybs\quad\text{ and }\quad \xN{\xref}=\inf \{\xN{\xbs}:\xbs\in\CX,\, \VA\xbs=\ybs \}.\]
\end{definition}

The MNS $\xref$ is not unique for general Banach spaces. The following 
lemma states sufficient geometric assumptions on $\CX$ for uniqueness. 
\begin{lemma}[{\cite[Lemma 3.3]{TBHK_12}}]
\label{lem:minnorm}
Let Assumption \ref{ass:space-basic} hold.
Then there exists a unique MNS $\xref$.
Furthermore, $\dmapX{p}(\xref)\in\overline{\range(\VA^\ast)}$, for $1<p<\infty$.
If some $\widehat{\xbs}\in\CX$ satisfies $\dmapX{p}(\widehat{\xbs})\in\overline{\range(\VA^\ast)}$ and $\widehat{\xbs}-\xref\in\Null(\VA)$, then $\widehat{\xbs}=\xref$.
\end{lemma}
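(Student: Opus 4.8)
The plan is to treat the three assertions of the statement in turn --- existence and uniqueness of the MNS, the range condition $\dmapX{p}(\xref)\in\overline{\range(\VA^\ast)}$, and the characterisation of $\xref$ --- using only the geometry recalled in Section~\ref{sec:prelims}. For existence and uniqueness I would set $S=\{\xbs\in\CX:\VA\xbs=\ybs\}$, which is non-empty because $\ybs\in\range(\VA)$ and is a translate of the norm-closed subspace $\Null(\VA)$, hence a closed affine set and in particular convex and weakly closed. Since $\CX$ is $p$-convex it is uniformly convex, hence strictly convex and reflexive (reflexivity is recorded in Theorem~\ref{thm:bregman_properties}(iii)). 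Taking a minimising sequence $(\xbs_n)\subset S$ with $\xN{\xbs_n}\to d:=\inf_{\xbs\in S}\xN{\xbs}$, boundedness together with reflexivity yields a weakly convergent subsequence with some limit $\xref$; weak closedness of $S$ gives $\xref\in S$, and weak lower semicontinuity of the norm gives $\xN{\xref}\le d$, so $\xN{\xref}=d$. Uniqueness then follows from strict convexity, since two distinct minimisers $\xbs_1\neq\xbs_2$ would give $\tfrac12(\xbs_1+\xbs_2)\in S$ with $\xN{\tfrac12(\xbs_1+\xbs_2)}<d$.

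For the range condition, I would use that $\CX$ smooth makes $\dmapX{p}$ single-valued and equal to the Gateaux derivative of $\xbs\mapsto\tfrac1p\xN{\xbs}^p$ (Example~\ref{eg:spaces}(ii)). As $\xref$ minimises this convex functional over the affine set $\xref+\Null(\VA)$, the directional derivative at $\xref$ along any $h\in\Null(\VA)$ vanishes; applying this to $h$ and $-h$ gives $\DP{\dmapX{p}(\xref)}{h}=0$ for every $h\in\Null(\VA)$, i.e. $\dmapX{p}(\xref)\in\Null(\VA)^\perp$. It then remains to invoke the functional-analytic identity $\Null(\VA)^\perp=\overline{\range(\VA^\ast)}$ to conclude.

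For the characterisation, suppose $\widehat\xbs\in\CX$ satisfies $\dmapX{p}(\widehat\xbs)\in\overline{\range(\VA^\ast)}=\Null(\VA)^\perp$ and $\widehat\xbs-\xref\in\Null(\VA)$, so that $\widehat\xbs\in S$. Using $\DP{\dmapX{p}(\widehat\xbs)}{\widehat\xbs}=\xN{\widehat\xbs}^p$ and $\DP{\dmapX{p}(\widehat\xbs)}{\xref-\widehat\xbs}=0$ (as $\xref-\widehat\xbs\in\Null(\VA)$), expanding the definition of the Bregman distance gives the compact identity $\bregman{\widehat\xbs}{\xref}=\tfrac1p\big(\xN{\xref}^p-\xN{\widehat\xbs}^p\big)$. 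Non-negativity of $\bregman{\cdot}{\cdot}$ forces $\xN{\widehat\xbs}\le\xN{\xref}$, while $\widehat\xbs\in S$ and the MNS property of $\xref$ give the reverse inequality; hence $\bregman{\widehat\xbs}{\xref}=0$, and uniform convexity of $\CX$ yields $\widehat\xbs=\xref$ by Theorem~\ref{thm:bregman_properties}(v).

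The only step beyond routine bookkeeping will be the identity $\Null(\VA)^\perp=\overline{\range(\VA^\ast)}$: the inclusion $\overline{\range(\VA^\ast)}\subseteq\Null(\VA)^\perp$ is immediate from $\DP{\VA^\ast y^\ast}{\xbs}=\DP{y^\ast}{\VA\xbs}$ and norm-closedness of the annihilator, but the reverse inclusion requires a Hahn--Banach separation argument in which reflexivity of $\CX$ is exactly what lets one replace the weak-$\ast$ closure of $\range(\VA^\ast)$ by its norm closure. Everything else is a matter of assembling strict/uniform convexity, reflexivity, smoothness of the duality map, and the elementary Bregman-distance properties of Section~\ref{sec:prelims}.
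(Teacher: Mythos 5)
Your proposal is correct. Note that the paper itself supplies no proof of this lemma: it is quoted verbatim from \cite[Lemma 3.3]{TBHK_12}, so there is no in-paper argument to compare against. Your self-contained proof follows the standard route and all three steps check out: the direct method plus strict convexity (which follows from $p$-convexity, with reflexivity supplied by Theorem \ref{thm:bregman_properties}(iii)) gives existence and uniqueness; the first-order optimality condition for the Gateaux-differentiable functional $\frac1p\xN{\cdot}^p$ over the affine set $\xref+\Null(\VA)$, combined with the identity $\Null(\VA)^\perp=\overline{\range(\VA^\ast)}$ (valid in norm closure precisely because $\CX$ is reflexive, as you correctly flag), gives the range condition for every gauge exponent $p\in(1,\infty)$; and the computation $\bregman{\widehat\xbs}{\xref}=\frac1p\big(\xN{\xref}^p-\xN{\widehat\xbs}^p\big)$, obtained from $\DP{\dmapX{p}(\widehat\xbs)}{\xref-\widehat\xbs}=0$ and $\DP{\dmapX{p}(\widehat\xbs)}{\widehat\xbs}=\xN{\widehat\xbs}^p$, together with non-negativity of the Bregman distance, minimality of $\xN{\xref}$, and Theorem \ref{thm:bregman_properties}(v), gives the characterisation. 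This is essentially the argument in the cited reference; no gaps.
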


By Lemma \ref{lem:minnorm}, the MNS $\xref$ is unique modulo 
the null space of $\VA$, under certain smoothness and convexity assumptions on $\CX$.
These conditions exclude Lebesgue and sequence spaces $\CL^1(\Omega)$ and $\ell^1(\bbR)$, cf. Example \ref{eg:spaces}(iv).
The standard Landweber method \cite{Landweber:1951,SLS_06} constructs an approximation to the MNS $\xref$ by running the  iterations
\begin{align}\label{eqn:Landweber}
\xbs_{k+1} = \dmapXs{p}\LRR{\dmapX{p}(\xbs_k) - \mu_{k+1}\VA^\ast\svaldmapY{p}(\VA\xbs_k-\ybs)}, \quad k=0,1,\ldots,
\end{align}
where $\mu_{k+1}>0$ is the step-size.
Asplund's theorem \cite[Theorem 2.28]{TBHK_12} allows characterising the duality map as the 
sub-differential, $\dmapX{p}=\partial({\frac{1}{p}\xN{\cdot}^p})$ for $p>1$.
This identifies the descent direction $\VA^\ast\svaldmapY{p}(\VA\xbs_k-\ybs)$ as the sub-gradient: $\VA^\ast\svaldmapY{p}(\VA\xbs_k-\ybs)=\partial(\frac{1}{p}\yN{\VA\cdot-\ybs})(\xbs_k)$.
{Note that $\dmapX{p}$ is single valued by Assumption \ref{ass:space-basic} and Theorem \ref{rem:dmap_properties}, though $\dmapY{p}$ is not.}
For well selected step-sizes, Landweber iterations \eqref{eqn:Landweber} converge to an MNS of \eqref{eqn:inv} \cite[Theorem 3.3]{SLS_06}. 

The evaluation of the sub-gradient $\VA^\ast \svaldmapY{p}(\VA\xbs_k-\ybs)$ represents the main
per-iteration cost of the iteration \eqref{eqn:Landweber}. {In this work, we consider the following Kaczmarz type setting: 
\begin{align}\label{eqn:Kaczmarz_model}
    \VA=\begin{pmatrix}\VA_1\\\vdots\\\VA_N\end{pmatrix}\quad \text{and} \quad\VA\xbs = \begin{pmatrix}\VA_1\xbs\\\vdots\\\VA_N\xbs\end{pmatrix}=\begin{pmatrix}\ybs_1\\\vdots\\\ybs_N\end{pmatrix},
\end{align}
where $\VA_i:\CX\rightarrow\CY_i$, $\ybs_i\in\CY_i$, for $i\in[N]=\{1,\ldots,N\}$. Problem \eqref{eqn:Kaczmarz_model} is defined on the direct product $(\otimes_{i=1}^N\CY_i, \ell^r)$, equipped with the $\ell^r$ norm, for $r\geq 1$
\begin{align}\label{eqn:Norm-Y}
    \|\ybs\|_{\mathcal{Y}}:=\|(\ybs_1,\ldots,\ybs_N)\|_{\mathcal{Y}}= \|(\|\ybs_1\|_{\CY_1},\ldots,\|\ybs_N\|_{\CY_N})\|_{\ell^r}=\Big(\sum_{i=1}^N \|\ybs_i\|_{\mathcal{Y}_i}^r\Big)^{1/r}.
\end{align}
Below we identify $\CY_i=\CY$ for notational brevity, and use $\yN{\cdot}$ to denote both the norm of the direct product space and the component spaces, though all the relevant proofs and concepts easily extend to the general case. Then the objective $\Psi(\xbs)$ is given by 
\begin{equation*}
    \Psi(\xbs)=\frac{1}{N}\sum_{i=1}^N \Psi_i(\xbs), \quad \mbox{with } \Psi_i(\xbs)=\frac{1}{p}\yN{\VA_i\xbs-\ybs_i}^p.
\end{equation*}
Note that for many common imaging problems we use $\mathcal{Y}=\ell^p(\mathbb{R})$, which then naturally gives 
$\Psi(\xbs)=\frac{1}{pN}\yN{\VA\xbs-\ybs}^p$. To reduce the computational cost per-iteration, we exploit the finite-sum structure of the objective $\Psi(\xbs)$ and adopt SGD iterations of the form
\begin{align}\label{eqn:sgd}
\xbs_{k+1} = \dmapXs{p}\LRR{\dmapX{p}(\xbs_k) - \mu_{k+1}\gbs_{k+1}},
\end{align}
where $\gbs_{k+1}=g(\xbs_k,\ybs,i_{k+1})$ is the stochastic update direction given by
\begin{align}\label{eqn:Kaczmarz_linear_gradients} g(\xbs,\ybs,i)=\VA_i^\ast \svaldmapY{p}(\VA_i\xbs-\ybs_i)= \partial\Big(\tfrac{1}{p}\yN{\VA_i\cdot-\ybs_i}^p\Big)(\xbs),
\end{align}
and the random index $i_{k}$ {is sampled uniformly over the index set $[N]$,}
independent of $\xbs_k$. Clearly, it is an unbiased estimator of the sub-gradient $\partial\Psi(\xbs)$, i.e. $\bbE[g(\xbs,\ybs,i)]=\partial\Psi(\xbs)$, and the per-iteration cost is reduced by a factor of $N$.}

\begin{remark}\label{rmk:product}
In the model \eqref{eqn:Kaczmarz_model}, if $\CY$ admits a complemented sum $\CY=\sum_{i=1}^N\CY_i$, we can take the {\rm(}internal\,{\rm)} direct sum $(\oplus_{i=1}^N \CY_i, \ell^r)$, so that $\ybs=\ybs_1+\ldots+\ybs_N$ and the corresponding norm 
$\|\ybs\|=\|(\|{\rm Proj}_{\CY_1}(\ybs) \|_{\CY_1},\ldots, \|{\rm Proj}_{\CY_N}(\ybs) \|_{\CY_N})\|_r$.
With this identification the spaces $(\otimes_{i=1}^N\CY_i, \ell^r)$ and $(\oplus_{i=1}^N \CY_i, \ell^r)$ are isometrically isomorphic \cite{Unser:2022} and the norms are equivalent for all $r\ge1$.
\end{remark}

{We now collect some useful properties about the objective $\Psi$ and the Bregman divergence. Throughout, $L_{\max}=\max_{i\in[N]}\|\VA_i\|.$ Note that $c_N=1/N$ if $\CY=\CL^p(\Omega)$.
\begin{lemma}\label{lem:Kaczmarz-basic}
For all $i\in[N]$, $\xbs\in\CX$, and any $\widehat\xbs\in\mathcal{X}_{\min}$ (such that $\VA\widehat\xbs=\ybs$), we have
\begin{align}\label{eqn:id-subgrad}
\DP{\partial\Psi_i(\xbs)}{\xbs-\widehat\xbs}  = p\Psi_i(\xbs)\quad \text{and}\quad  \DP{\partial\Psi(\xbs)}{\xbs-\widehat\xbs}  = p\Psi(\xbs).
\end{align}
Moreover, $\Psi_i(\xbs)\leq \frac{\|\VA_i\|^p}{C_p}\bregman{\xbs}{\widehat\xbs}$, $\Psi(\xbs)\leq \frac{L_{\max}^p}{C_p}\bregman{\xbs}{\widehat\xbs}$, and for some $C_N>0$ we have
$\Psi(\xbs)\geq \frac{C_N}{p}\yN{\VA\xbs-\ybs}^p.$
\end{lemma}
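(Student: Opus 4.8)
The plan is to dispatch the three assertions in turn, each reducing to a short computation once the Kaczmarz structure \eqref{eqn:Kaczmarz_model} and the defining relations of the duality map are invoked.

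\emph{Identities \eqref{eqn:id-subgrad}.} Since $\ybs\in\range(\VA)$, the minimum of $\tfrac1p\yN{\VA\cdot-\ybs}^p$ is $0$, so $\CX_{\min}=\{\xbs:\VA\xbs=\ybs\}$ and the parenthetical condition $\VA\widehat\xbs=\ybs$ is automatic; by \eqref{eqn:Kaczmarz_model} it forces $\VA_i\widehat\xbs=\ybs_i$ for every $i\in[N]$. Writing $\partial\Psi_i(\xbs)=\VA_i^\ast\svaldmapY{p}(\VA_i\xbs-\ybs_i)$ as in \eqref{eqn:Kaczmarz_linear_gradients} and moving $\VA_i^\ast$ across the duality pairing gives $\DP{\partial\Psi_i(\xbs)}{\xbs-\widehat\xbs}=\DP{\svaldmapY{p}(\VA_i\xbs-\ybs_i)}{\VA_i\xbs-\VA_i\widehat\xbs}=\DP{\svaldmapY{p}(\VA_i\xbs-\ybs_i)}{\VA_i\xbs-\ybs_i}$, and the two defining relations of $\dmapY{p}$ in Definition \ref{defn:duality_map} turn the last pairing into $\yN{\VA_i\xbs-\ybs_i}\,\yN{\VA_i\xbs-\ybs_i}^{p-1}=\yN{\VA_i\xbs-\ybs_i}^p=p\Psi_i(\xbs)$. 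Summing over $i$ and dividing by $N$ (with $\partial\Psi(\xbs)=\tfrac1N\sum_i\partial\Psi_i(\xbs)$ the averaged selection, although the identity holds for any selection of the duality maps) yields the second identity.

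\emph{Upper bound.} I would use $\VA_i\widehat\xbs=\ybs_i$ once more to write $\Psi_i(\xbs)=\tfrac1p\yN{\VA_i(\xbs-\widehat\xbs)}^p\le\tfrac1p\|\VA_i\|^p\xN{\xbs-\widehat\xbs}^p$, and then bound $\xN{\xbs-\widehat\xbs}^p\le\tfrac{p}{C_p}\bregman{\xbs}{\widehat\xbs}$ via \eqref{eqn:norm_leq_bregman} of Theorem \ref{thm:bregman_properties}(iii), applicable because $\CX$ is $p$-convex under Assumption \ref{ass:space-basic}. This gives $\Psi_i(\xbs)\le\tfrac{\|\VA_i\|^p}{C_p}\bregman{\xbs}{\widehat\xbs}$; averaging over $i$ and using $\|\VA_i\|\le L_{\max}$ gives the bound for $\Psi$.

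\emph{Lower bound.} Setting $a_i=\yN{\VA_i\xbs-\ybs_i}\ge0$, one has $\Psi(\xbs)=\tfrac{1}{pN}\sum_{i}a_i^p$ and, by \eqref{eqn:Norm-Y}, $\yN{\VA\xbs-\ybs}^p=\big(\sum_i a_i^r\big)^{p/r}$, so the claim is the finite-dimensional norm-equivalence inequality $\|a\|_{\ell^p}\ge c\,\|a\|_{\ell^r}$ on $\bbR^N$, with $C_N=c^p/N>0$; for $r=p$ (in particular $\CY=\CL^p(\Omega)$) the two norms coincide and $C_N=1/N$. There is no genuine obstacle here; the only points needing mild care are (a) making precise in which sense $\partial\Psi(\xbs)$ is taken — it is the averaged selection $\tfrac1N\sum_i g(\xbs,\ybs,i)$, and every step is insensitive to the choice of duality-map selection — and (b) keeping the $\ell^r$ product norm on $\CY$ separate from the component norms in \eqref{eqn:Norm-Y}.
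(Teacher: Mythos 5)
Your proposal is correct and follows essentially the same route as the paper's proof: the identity via moving $\VA_i^\ast$ across the pairing and the defining relations of the duality map, the upper bound via $p$-convexity and \eqref{eqn:norm_leq_bregman}, and the lower bound via finite-dimensional equivalence of the $\ell^p$ and $\ell^r$ norms on $\bbR^N$ (which the paper simply invokes as ``norm equivalence''). Your added care about which selection of the duality map is used, and your explicit constant $C_N=c^p/N$, are harmless refinements of the same argument.
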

\begin{proof}
It follows from the identity $\VA\widehat\xbs=\ybs$ that
\begin{align*}
\DP{\partial\Psi_i(\xbs)}{\xbs-\widehat\xbs}  = \DP{\VA_i^\ast \svaldmapY{p}(\VA_i\xbs-\ybs_i)}{\xbs-\widehat\xbs}=\DP{ \svaldmapY{p}(\VA_i\xbs-\ybs_i)}{\VA_i\xbs-\ybs_i} = p \Psi_i(\xbs).
\end{align*}
Since $\partial\Psi(\xbs)=\frac{1}{N}\sum_{i=1}^N \partial\Psi_i(\xbs)$, the second identity in \eqref{eqn:id-subgrad} follows from the linearity of the dual product.
By the $p$-convexity of the space $\CX$ and Theorem \ref{thm:bregman_properties}(iii), we get
\begin{align*}
    \Psi_i(\xbs)=\frac{1}{p}\yN{\VA_i\xbs-\ybs_i}^p =\frac{1}{p}\yN{\VA_i(\xbs-\widehat\xbs)}^p\leq  \frac{\|\VA_i\|^p}{p}\xN{\xbs-\widehat\xbs}^p\leq \frac{\|\VA_i\|^p}{C_p}\bregman{\xbs}{\widehat\xbs}.
\end{align*}
The second claim follows since $\Psi(\xbs)=\frac{1}{N}\sum_{i=1}^N\Psi_i(\xbs)$.
Lastly, by the norm equivalence \eqref{eqn:Norm-Y} for $1<r<\infty$, there exists $C_N>0$ such that
\begin{align*}
\Psi(\xbs)=\frac{1}{N}\sum_{i=1}^N\frac{1}{p}\yN{\VA_i\xbs-\ybs_i}^p\geq \frac{C_N}{p}\yN{\VA\xbs-\ybs}^p.
\end{align*}
\end{proof}}

We now focus on the convergence study of the iterations \eqref{eqn:sgd}, without and with noise in the data, and discuss convergence rates under conditional stability. 
\subsection{Convergence for the Kaczmarz model}

Below the notation $\bbE[\cdot]$ denotes taking expectation with respect to the sampling of the random indices $i_k$ and $\bbE_k[\cdot]$ denotes taking conditional expectation with respect to $\mathcal{F}_k$. The remaining variables, e.g. $\xbs$ and $\ybs$, are measurable with respect to the underlying probability measure. To study the convergence of SGD \eqref{eqn:sgd}, we first establish
a descent property in terms of the Bregman distance.
\begin{lemma}\label{lem:descent_property}
Let Assumption \ref{ass:space-basic} hold. For any $\widehat\xbs\in\CX$, the iterates in \eqref{eqn:sgd} satisfy
\begin{align}\label{eqn:descent_property}
\bregman{\xbs_{k+1}}{\widehat\xbs}\leq\bregman{\xbs_k}{\widehat\xbs}-\mu_{k+1}\DP{\gbs_{k+1}}{\xbs_k-\widehat\xbs} + \frac{G_{p^\ast}}{p^\ast}\mu_{k+1}^{p^\ast}\xsN{\gbs_{k+1}}^{p^\ast}.
\end{align}
\end{lemma}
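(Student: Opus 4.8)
The plan is to expand the Bregman distance $\bregman{\xbs_{k+1}}{\widehat\xbs}$ directly from its definition, using the update rule \eqref{eqn:sgd}, which reads $\dmapX{p}(\xbs_{k+1}) = \dmapX{p}(\xbs_k) - \mu_{k+1}\gbs_{k+1}$ because $\xbs_{k+1}=\dmapXs{p}(\dmapX{p}(\xbs_k)-\mu_{k+1}\gbs_{k+1})$ and $\dmapXs{p}=(\dmapX{p})^{-1}$ under Assumption \ref{ass:space-basic} (Theorem \ref{rem:dmap_properties}(iii)). The natural tool here is the duality/conjugacy between $\bregman{\cdot}{\cdot}$ and $\bregmanSdef{\cdot}{\cdot}$: by Theorem \ref{thm:bregman_properties}(i), $\bregman{\xbs_{k+1}}{\widehat\xbs}=\bregmanSdef{\dmapX{p}(\widehat\xbs)}{\dmapX{p}(\xbs_{k+1})}$, and similarly for $\xbs_k$. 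So I would work on the dual side, where the iteration is affine.

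The key steps, in order. First, write $\bregman{\xbs_{k+1}}{\widehat\xbs}=\bregmanSdef{\dmapX{p}(\widehat\xbs)}{\dmapX{p}(\xbs_k)-\mu_{k+1}\gbs_{k+1}}$ and $\bregman{\xbs_k}{\widehat\xbs}=\bregmanSdef{\dmapX{p}(\widehat\xbs)}{\dmapX{p}(\xbs_k)}$. Second, use the three-point identity \eqref{eqn:3_point_id} applied in the dual space $\CX^\ast$ (which is $p^\ast$-smooth by Theorem \ref{rem:dmap_properties}(ii)) with the three points $\dmapX{p}(\widehat\xbs)$, $\dmapX{p}(\xbs_k)$, and $\dmapX{p}(\xbs_k)-\mu_{k+1}\gbs_{k+1}$, to obtain
$\bregmanSdef{\dmapX{p}(\widehat\xbs)}{\dmapX{p}(\xbs_k)-\mu_{k+1}\gbs_{k+1}} = \bregmanSdef{\dmapX{p}(\widehat\xbs)}{\dmapX{p}(\xbs_k)} + \bregmanSdef{\dmapX{p}(\xbs_k)}{\dmapX{p}(\xbs_k)-\mu_{k+1}\gbs_{k+1}} + \DP{\dmapXs{p}(\dmapX{p}(\xbs_k)-\mu_{k+1}\gbs_{k+1})-\dmapXs{p}(\dmapX{p}(\xbs_k))}{\dmapX{p}(\widehat\xbs)-(\dmapX{p}(\xbs_k)-\mu_{k+1}\gbs_{k+1})}$. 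The middle Bregman term is bounded by $\tfrac{G_{p^\ast}}{p^\ast}\mu_{k+1}^{p^\ast}\xsN{\gbs_{k+1}}^{p^\ast}$ via the $p^\ast$-smoothness estimate \eqref{eqn:norm_geq_bregman}. Third, I would rewrite the inner-product cross term: since $\dmapXs{p}(\dmapX{p}(\xbs_k)-\mu_{k+1}\gbs_{k+1})=\xbs_{k+1}$ and $\dmapXs{p}(\dmapX{p}(\xbs_k))=\xbs_k$, it equals $\DP{\dmapX{p}(\widehat\xbs)-\dmapX{p}(\xbs_k)+\mu_{k+1}\gbs_{k+1}}{\xbs_{k+1}-\xbs_k}$ — but this still involves $\xbs_{k+1}$, which is undesirable. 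A cleaner route: instead of the dual three-point identity, apply the three-point identity \eqref{eqn:3_point_id} directly in $\CX$ with points $\zbs=\widehat\xbs$, $\wbs$ and $\vbs$ chosen so that the cross term becomes $\DP{\dmapX{p}(\xbs_{k+1})-\dmapX{p}(\xbs_k)}{\,\cdot\,}=-\mu_{k+1}\DP{\gbs_{k+1}}{\,\cdot\,}$. Concretely, take $\bregman{\xbs_{k+1}}{\widehat\xbs}=\bregman{\xbs_{k+1}}{\xbs_k}+\bregman{\xbs_k}{\widehat\xbs}+\DP{\dmapX{p}(\xbs_k)-\dmapX{p}(\xbs_{k+1})}{\widehat\xbs-\xbs_k}$, so the last term is exactly $\mu_{k+1}\DP{\gbs_{k+1}}{\widehat\xbs-\xbs_k}=-\mu_{k+1}\DP{\gbs_{k+1}}{\xbs_k-\widehat\xbs}$, matching the middle term of \eqref{eqn:descent_property}. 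Fourth, bound the remaining term $\bregman{\xbs_{k+1}}{\xbs_k}$: by the duality Theorem \ref{thm:bregman_properties}(i), $\bregman{\xbs_{k+1}}{\xbs_k}=\bregmanSdef{\dmapX{p}(\xbs_k)}{\dmapX{p}(\xbs_{k+1})}=\bregmanSdef{\dmapX{p}(\xbs_k)}{\dmapX{p}(\xbs_k)-\mu_{k+1}\gbs_{k+1}}$, and \eqref{eqn:norm_geq_bregman} gives $\bregman{\xbs_{k+1}}{\xbs_k}\leq \tfrac{G_{p^\ast}}{p^\ast}\xsN{\mu_{k+1}\gbs_{k+1}}^{p^\ast}=\tfrac{G_{p^\ast}}{p^\ast}\mu_{k+1}^{p^\ast}\xsN{\gbs_{k+1}}^{p^\ast}$. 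Collecting the three contributions yields \eqref{eqn:descent_property}.

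The main obstacle — and the step requiring the most care — is ensuring that the duality identity $\bregman{\xbs_{k+1}}{\xbs_k}=\bregmanSdef{\dmapX{p}(\xbs_k)}{\dmapX{p}(\xbs_{k+1})}$ is legitimately applicable: Theorem \ref{thm:bregman_properties}(i) requires $\CX$ smooth and reflexive, which holds since $\CX$ is $p$-convex (hence reflexive by Theorem \ref{thm:bregman_properties}(iii)) and smooth by Assumption \ref{ass:space-basic}; and one must check that the $p^\ast$-smoothness estimate \eqref{eqn:norm_geq_bregman} applies to $\CX^\ast$, which is valid because $\CX$ $p$-convex implies $\CX^\ast$ is $p^\ast$-smooth (Theorem \ref{rem:dmap_properties}(ii)). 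Beyond these structural checks the argument is essentially bookkeeping with the three-point identity; no probabilistic argument or delicate estimate is needed, since the lemma is a pathwise (deterministic-in-$\omega$) inequality that holds for every realisation of $i_{k+1}$ and for arbitrary $\widehat\xbs\in\CX$.
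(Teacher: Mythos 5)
Your final argument is correct, but it is organised differently from the paper's proof, so a short comparison is in order. The paper does not use the three-point identity at all: it expands $\bregman{\xbs_{k+1}}{\widehat\xbs}$ directly from Definition \ref{defn:Bregman_distance}, converts $\xN{\xbs_{k+1}}^p$ into $\xsN{\dmapX{p}(\xbs_k)-\mu_{k+1}\gbs_{k+1}}^{p^\ast}$ via the gauge identity $p(p^\ast-1)=p^\ast$, and then invokes the Xu--Roach type $p^\ast$-smoothness inequality of \cite[Corollary 5.8]{C_09} for $\CX^\ast$ before collecting terms. Your route instead applies the three-point identity \eqref{eqn:3_point_id} with $(\zbs,\vbs,\wbs)=(\xbs_{k+1},\xbs_k,\widehat\xbs)$, observes that the cross term is exactly $-\mu_{k+1}\DP{\gbs_{k+1}}{\xbs_k-\widehat\xbs}$ because $\dmapX{p}(\xbs_k)-\dmapX{p}(\xbs_{k+1})=\mu_{k+1}\gbs_{k+1}$, and bounds the residual $\bregman{\xbs_{k+1}}{\xbs_k}$ by $\frac{G_{p^\ast}}{p^\ast}\mu_{k+1}^{p^\ast}\xsN{\gbs_{k+1}}^{p^\ast}$ through the duality relation of Theorem \ref{thm:bregman_properties}(i) and the estimate \eqref{eqn:norm_geq_bregman}. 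The two proofs rest on the same underlying fact (the $p^\ast$-smoothness of $\CX^\ast$, which is what both \eqref{eqn:norm_geq_bregman} and the cited Corollary 5.8 encode), so neither is more general; yours is the more modular ``mirror-descent'' style presentation that isolates the linear term cleanly, while the paper's is a self-contained computation from the definition. Your structural checks (reflexivity of $\CX$ from $p$-convexity, invertibility $\dmapXs{p}=(\dmapX{p})^{-1}$, $p^\ast$-smoothness of $\CX^\ast$) are exactly the right ones and all hold under Assumption \ref{ass:space-basic}. One editorial remark: the first half of your plan (the three-point identity applied in $\CX^\ast$) is a detour you yourself abandon because the cross term there involves $\xbs_{k+1}$; only the ``cleaner route'' constitutes the proof, and in a written-up version the abandoned attempt should be deleted.
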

\begin{proof}
Let $\Delta_k:=\bregman{\xbs_k}{\widehat\xbs}$.
By Definition \ref{defn:Bregman_distance} and expression \eqref{eqn:sgd}, we have
\begin{align*}
\Delta_{k+1}&=\frac{1}{p}\xN{\widehat\xbs}^p+\frac{1}{p^\ast}\xN{\xbs_{k+1}}^{p}-\DP{\dmapX{p}(\xbs_{k+1})}{\widehat\xbs}\\
&{=\frac{1}{p}\xN{\widehat\xbs}^p+\frac{1}{p^\ast}\xN{\dmapXs{p}\LRR{\dmapX{p}(\xbs_k) - \mu_{k+1}\gbs_{k+1}}}^{p}-\DP{\dmapX{p}(\xbs_{k+1})}{\widehat\xbs}}.
\end{align*}
{Using Definition \ref{defn:duality_map}, the identity $p(p^\ast-1)=p^\ast$ and Theorem \ref{rem:dmap_properties}(iii), we deduce 
\begin{align*}
\Delta_{k+1}&=\frac{1}{p}\xN{\widehat\xbs}^p+\frac{1}{p^\ast}\xsN{\dmapX{p}(\xbs_k)-\mu_{k+1}\gbs_{k+1}}^{p(p^\ast-1)}-\DP{\dmapX{p}(\xbs_k)-\mu_{k+1}\gbs_{k+1}}{\widehat\xbs}\\
&=\frac{1}{p}\xN{\widehat\xbs}^p+\frac{1}{p^\ast}\xsN{\dmapX{p}(\xbs_k)-\mu_{k+1}\gbs_{k+1}}^{p^\ast}-\DP{\dmapX{p}(\xbs_k)-\mu_{k+1}\gbs_{k+1}}{\widehat\xbs}.
\end{align*}}
Since $\CX$ is $p$-convex, $\CX^\ast$ is $p^\ast$-smooth, cf. Theorem \ref{rem:dmap_properties}(i).
By \cite[Corollary 5.8]{C_09}, this implies
\[ \frac{1}{p^\ast}\xsN{\xbs^\ast -\tilde\xbs^\ast}^{p^\ast}\leq\frac{1}{p^\ast}\xsN{\xbs^\ast}^{p^\ast}
+\frac{G_{p^\ast}}{p^\ast}\xsN{\tilde\xbs^\ast}^{p^\ast}-\DP{\dmapXs{p}(\xbs^\ast)}{\tilde\xbs^\ast},\quad \forall \xbs^\ast,\tilde \xbs^\ast\in\CX^\ast.
\]
Using the identities $p^\ast(p-1)=p$ and $(\dmapX{p})^{-1}=\dmapXs{{p}}$, cf. Theorem \ref{rem:dmap_properties}(iii), we get
\begin{align*}\frac{1}{p^\ast}\xsN{\dmapX{p}(\xbs_k)-\mu_{k+1}\gbs_{k+1}}^{p^\ast}&\leq \frac{1}{p^\ast}\xsN{\dmapX{p}(\xbs_k)}^{p^\ast} + \frac{G_{p^\ast}}{p^\ast}\xsN{\mu_{k+1}\gbs_{k+1}}^{p^\ast} - \DP{\mu_{k+1}\gbs_{k+1}}{\xbs_k}\\
&= \frac{1}{p^\ast}\xN{\xbs_k}^{p} + \frac{G_{p^\ast}}{p^\ast}\mu_{k+1}^{p^\ast}\xsN{\gbs_{k+1}}^{p^\ast} - \mu_{k+1}\DP{\gbs_{k+1}}{\xbs_k}.
\end{align*}
Combining the preceding estimates gives the desired assertion through
\begin{align*}
\Delta_{k+1} &\leq\frac{1}{p}\xN{\widehat\xbs}^p+\frac{1}{p^\ast}\xsN{\xbs_k}^{p}-\DP{\dmapX{p}(\xbs_k)}{\widehat\xbs}+\frac{G_{p^\ast}}{p^\ast}\mu_{k+1}^{p^\ast}\xsN{\gbs_{k+1}}^{p^\ast} - \mu_{k+1}\DP{\gbs_{k+1}}{\xbs_k-\widehat\xbs}\\
&=\Delta_k-\mu_{k+1}\DP{\gbs_{k+1}}{\xbs_k-\widehat\xbs} + \frac{G_{p^\ast}}{p^\ast}\mu_{k+1}^{p^\ast}\xsN{\gbs_{k+1}}^{p^\ast}.
\end{align*}
\end{proof}

Lemma \ref{lem:descent_property} allows showing that the sequence of Bregman distances $(\bregman{\xbs_k}{\widehat\xbs})_{k\in\bbN}$ forms an almost super-martingale {(in the Robbins-Siegmund sense defined below)} for $\widehat\xbs\in\CX_{\min}$ and well chosen step-sizes $(\mu_k)_{k\in\bbN}$. 
We will show the almost sure convergence of the iterates using Robbins-Siegmund theorem.

\begin{theorem}[{Robbins-Siegmund theorem on the convergence of almost super-martingales, \cite[Lemma 11]{P87}}] \label{thm:sub_martingale_convergence}
Consider a filtration $(\CF_k)_{k\in\bbN}$ and four non-negative, $(\CF_k)_{k\in\bbN}$ adapted processes $(\alpha_k)_{k\in\bbN}$, $(\beta_k)_{k\in\bbN}$, $(\gamma_k)_{k\in\bbN}$, and $(\delta_k)_{k\in\bbN}$.
{Let the sequence $(\alpha_k)_{k\in\bbN}$ be an \emph{almost super-martingale}, i.e. for all $k$ we have 
$\bbE_k[\alpha_{k+1}]\le(1+\beta_k)\alpha_k+\gamma_k-\delta_k.$}
Then the sequence $(\alpha_k)_{k\in\bbN}$ converges a.s. to a random variable $\alpha_\infty$, and $\sum_{k=1}^\infty\delta_k <\infty$ a.s. {on the set $\{\sum_{k=1}^\infty \beta_k<\infty, \,\sum_{k=1}^\infty \gamma_k<\infty \}$.}
\end{theorem}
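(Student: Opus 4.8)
\emph{Proof plan.} This is the classical Robbins--Siegmund theorem, and I would prove it by the standard two moves: first deflate away the multiplicative factor $(1+\beta_k)$, and then exhibit an auxiliary process that is a supermartingale bounded below, to which Doob's supermartingale convergence theorem applies after a localisation. Concretely, set $\Pi_k=\prod_{j=1}^k(1+\beta_j)$ with $\Pi_0=1$; since each $\beta_j\ge0$ is $\CF_j$-measurable, $\Pi_k$ is $\CF_k$-measurable and non-decreasing in $k$, and on $S_\beta:=\{\sum_k\beta_k<\infty\}$ it has a finite limit $\Pi_\infty\in[1,\infty)$. Dividing the hypothesis $\bbE_k[\alpha_{k+1}]\le(1+\beta_k)\alpha_k+\gamma_k-\delta_k$ by the $\CF_k$-measurable quantity $\Pi_k$ and using $(1+\beta_k)/\Pi_k=1/\Pi_{k-1}$, the deflated processes $\alpha'_k=\alpha_k/\Pi_{k-1}$, $\gamma'_k=\gamma_k/\Pi_k$, $\delta'_k=\delta_k/\Pi_k$ (still non-negative and adapted) satisfy $\bbE_k[\alpha'_{k+1}]\le\alpha'_k+\gamma'_k-\delta'_k$. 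On $S_\beta$ the ratios stay between $1$ and $\Pi_\infty$, so proving the two conclusions for the primed processes on the set $S:=\{\sum_k\gamma'_k<\infty\}$ is equivalent to proving them for the original ones on $S\cap S_\beta$.

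For the deflated problem I would introduce $V_k:=\alpha'_k+\sum_{j=1}^{k-1}\delta'_j-\sum_{j=1}^{k-1}\gamma'_j$. A short computation, using that $\sum_{j\le k}\delta'_j$ and $\sum_{j\le k}\gamma'_j$ are $\CF_k$-measurable (as $\delta'_j,\gamma'_j$ are $\CF_j$- hence $\CF_k$-measurable for $j\le k$), yields $\bbE_k[V_{k+1}]\le V_k$, so $(V_k)$ is a supermartingale, and clearly $V_k\ge-\sum_{j=1}^{k-1}\gamma'_j$. To make this convergent I would localise: for $m\in\bbN$ put $\sigma_m:=\inf\{k\ge1:\sum_{j=1}^k\gamma'_j>m\}$, a stopping time. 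Then the stopped process $(V_{k\wedge\sigma_m})_k$ is again a supermartingale and obeys $V_{k\wedge\sigma_m}\ge-m$ uniformly in $k$, since the partial sum up to $\sigma_m-1$ is at most $m$; hence $(V_{k\wedge\sigma_m}+m)_k$ is a non-negative supermartingale and converges a.s. by Doob's theorem. On $\{\sigma_m=\infty\}$ this says $(V_k)$ converges a.s., and since $S=\bigcup_m\{\sigma_m=\infty\}$, the sequence $(V_k)$ converges a.s. on $S$. On $S$ the partial sums $\sum_{j<k}\gamma'_j$ also converge, so $\alpha'_k+\sum_{j<k}\delta'_j$ converges a.s. there; being a sum of a non-negative term and a non-decreasing one, this forces $\sum_j\delta'_j<\infty$ and then convergence of $(\alpha'_k)$, both a.s. on $S$. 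Undoing the deflation: on $\{\sum_k\beta_k<\infty,\ \sum_k\gamma_k<\infty\}$ one gets $\alpha_k=\Pi_{k-1}\alpha'_k\to\Pi_\infty\lim_k\alpha'_k=:\alpha_\infty$ and $\sum_k\delta_k\le\Pi_\infty\sum_k\delta'_k<\infty$, a.s.

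I expect the main obstacle to be the bookkeeping around the localisation and the careful separation of what holds almost surely everywhere from what holds only on the set $\{\sum_k\beta_k<\infty,\ \sum_k\gamma_k<\infty\}$, together with verifying the supermartingale identity for the merely adapted (not predictable) perturbations $\gamma_k,\delta_k$. A secondary point is the integrability required to invoke Doob's theorem: the stopped process $V_{k\wedge\sigma_m}$ is bounded below by $-m$, but one still needs its positive part integrable, which is immediate in the present application where $\alpha_k$ is the Bregman distance $\bregman{\xbs_k}{\widehat\xbs}$ of genuine iterates (and in full generality one assumes the four processes are integrable).
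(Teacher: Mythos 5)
The paper does not prove this statement at all: it is quoted verbatim as a known result, with a pointer to Polyak's book (Lemma 11 of \cite{P87}), so there is no in-paper argument to compare against. Your plan is the standard and correct proof of the Robbins--Siegmund theorem: deflating by $\Pi_k=\prod_{j\le k}(1+\beta_j)$ to remove the multiplicative factor, forming $V_k=\alpha'_k+\sum_{j<k}\delta'_j-\sum_{j<k}\gamma'_j$, stopping at $\sigma_m$ to get a supermartingale bounded below by $-m$, invoking Doob's convergence theorem, and exhausting $\{\sum_k\gamma'_k<\infty\}$ by $\bigcup_m\{\sigma_m=\infty\}$ before undoing the deflation on $\{\sum_k\beta_k<\infty\}$. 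All the individual steps check out, including the identity $(1+\beta_k)/\Pi_k=1/\Pi_{k-1}$, the measurability bookkeeping, and the final observation that a convergent sum of a non-negative term and a non-decreasing term forces both to converge. The only point requiring care is the one you already flag: Doob's theorem needs the (stopped) supermartingale to be integrable, which in the general statement is secured by assuming the four processes integrable (as Polyak does), and in this paper's application is automatic since $\alpha_k$ is a Bregman distance of iterates that are shown to be bounded.
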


Under certain conditions on $\xbs_0$, the limit is the MNS $\xref$. Below $\bbE_k$ denotes the conditional expectation with respect to the filtration $\CF_k$.

\begin{theorem}\label{thm:as_lin_convergence}
Let $(\mu_k)_{k\in\bbN}$ satisfy $\sum_{k=1}^\infty\mu_{k}=\infty$  and $\sum_{k=1}^\infty \mu_{k}^{p^\ast} <\infty,$
Assumption \ref{ass:space-basic} hold, and $\xbs^\dag$ be the MNS.
{Then the sequence $(\xbs_k)_{k\in\bbN}$ converges a.s. to a solution of \eqref{eqn:inv}:
\begin{align*}
    \bbP\Big(\lim_{k\rightarrow\infty} \inf_{\widetilde \xbs\in \CX_{\min}}\xN{\xbs_k-\widetilde \xbs}=0\Big)=1.
\end{align*}}
Moreover, if $\dmapX{p}(\xbs_0)\in\overline{\range(\VA^\ast)}$, 
we have $\lim_{k\rightarrow\infty}\bregman{\xbs_k}{\xref}=0$ a.s.
\end{theorem}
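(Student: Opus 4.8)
The plan is to combine the descent property in Lemma~\ref{lem:descent_property} with the Robbins--Siegmund theorem (Theorem~\ref{thm:sub_martingale_convergence}) applied to the sequence $\alpha_k = \bregman{\xbs_k}{\widehat\xbs}$ for a fixed $\widehat\xbs\in\CX_{\min}$. First I would take conditional expectation $\bbE_k[\cdot]$ in \eqref{eqn:descent_property}. Since the index $i_{k+1}$ is sampled uniformly and independently of $\xbs_k$, the middle term becomes $\bbE_k[\DP{\gbs_{k+1}}{\xbs_k-\widehat\xbs}] = \DP{\partial\Psi(\xbs_k)}{\xbs_k-\widehat\xbs} = p\Psi(\xbs_k)$ by Lemma~\ref{lem:Kaczmarz-basic}, which is non-negative and hence plays the role of $-\delta_k$ (with $\delta_k = \mu_{k+1} p\Psi(\xbs_k)$). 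For the last term I would bound $\xsN{\gbs_{k+1}}^{p^\ast} = \xsN{\VA_{i_{k+1}}^\ast \svaldmapY{p}(\VA_{i_{k+1}}\xbs_k-\ybs_{i_{k+1}})}^{p^\ast} \le L_{\max}^{p^\ast}\yN{\VA_{i_{k+1}}\xbs_k - \ybs_{i_{k+1}}}^{p^\ast(p-1)} = L_{\max}^{p^\ast}\yN{\VA_{i_{k+1}}\xbs_k-\ybs_{i_{k+1}}}^{p} = p L_{\max}^{p^\ast}\Psi_{i_{k+1}}(\xbs_k)$, using $p^\ast(p-1)=p$ and the definition of the single-valued duality map $\svaldmapY{p}$; taking $\bbE_k$ gives $\bbE_k[\xsN{\gbs_{k+1}}^{p^\ast}] \le p L_{\max}^{p^\ast}\Psi(\xbs_k) \le \tfrac{p L_{\max}^{p^\ast} L_{\max}^p}{C_p}\bregman{\xbs_k}{\widehat\xbs}$ by Lemma~\ref{lem:Kaczmarz-basic}. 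This yields $\bbE_k[\alpha_{k+1}] \le (1 + c\,\mu_{k+1}^{p^\ast})\alpha_k - \mu_{k+1}p\Psi(\xbs_k)$ for a constant $c$, so with $\beta_k = c\,\mu_{k+1}^{p^\ast}$, $\gamma_k = 0$, $\delta_k = \mu_{k+1}p\Psi(\xbs_k)$ the hypotheses of Theorem~\ref{thm:sub_martingale_convergence} hold (since $\sum\mu_k^{p^\ast}<\infty$). Consequently $\bregman{\xbs_k}{\widehat\xbs}$ converges a.s.\ to some finite limit, and $\sum_{k=1}^\infty \mu_{k+1}\Psi(\xbs_k)<\infty$ a.s.; combined with $\sum\mu_k=\infty$, along a subsequence $\Psi(\xbs_{k_j})\to0$ a.s., i.e.\ $\VA\xbs_{k_j}\to\ybs$.

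Next, I would upgrade this to convergence of the full sequence $(\xbs_k)$ to $\CX_{\min}$. Since $\bregman{\xbs_k}{\widehat\xbs}$ converges a.s., by \eqref{eqn:norm_leq_bregman} the sequence $(\xbs_k)$ is bounded a.s.; by reflexivity of $\CX$ (Theorem~\ref{thm:bregman_properties}(iii)) there is a weakly convergent subsequence $\xbs_{k_j}\rightharpoonup \bar\xbs$, and combining with $\VA\xbs_{k_j}\to\ybs$ and weak continuity of $\VA$ gives $\VA\bar\xbs=\ybs$, so $\bar\xbs\in\CX_{\min}$. To pass from a subsequential weak limit to norm convergence of the whole sequence, the standard argument for Banach-space Landweber iterations applies: one uses the three-point identity \eqref{eqn:3_point_id} to show that $\bregman{\xbs_k}{\bar\xbs}$ is itself (asymptotically) monotone / convergent for the specific limit point $\bar\xbs$, together with $p$-convexity, which controls $\bregman{\cdot}{\cdot}$ from below by a power of the norm, to conclude $\xbs_k\to\bar\xbs$ strongly (or at least $\mathrm{dist}(\xbs_k,\CX_{\min})\to0$). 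This gives the first displayed conclusion.

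For the second part, assuming the range condition $\dmapX{p}(\xbs_0)\in\overline{\range(\VA^\ast)}$, I would show this property is preserved along the iteration: from \eqref{eqn:sgd}, $\dmapX{p}(\xbs_{k+1}) = \dmapX{p}(\xbs_k) - \mu_{k+1}\VA_{i_{k+1}}^\ast\svaldmapY{p}(\cdots)$, and since $\VA_{i_{k+1}}^\ast\svaldmapY{p}(\cdots)\in\range(\VA_{i_{k+1}}^\ast)\subseteq\range(\VA^\ast)$ (in the Kaczmarz block form, each $\VA_i^\ast$ maps into $\range(\VA^\ast)$), induction gives $\dmapX{p}(\xbs_k)\in\overline{\range(\VA^\ast)}$ for all $k$. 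Hence the a.s.\ subsequential weak limit $\bar\xbs$ satisfies both $\dmapX{p}(\bar\xbs)\in\overline{\range(\VA^\ast)}$ (using weak-$\star$ closedness of $\overline{\range(\VA^\ast)}$ and continuity of $\dmapX{p}$, via Theorem~\ref{rem:dmap_properties}(iii) / Theorem~\ref{thm:bregman_properties}(vi)) and $\VA\bar\xbs=\ybs$, so $\bar\xbs-\xref\in\Null(\VA)$, and by the uniqueness clause of Lemma~\ref{lem:minnorm} we get $\bar\xbs=\xref$. Since $\bregman{\xbs_k}{\widehat\xbs}$ converges a.s.\ for every $\widehat\xbs\in\CX_{\min}$, in particular for $\widehat\xbs=\xref$, and along the subsequence $\bregman{\xbs_{k_j}}{\xref}\to0$ by continuity of the Bregman distance in its first argument (Theorem~\ref{thm:bregman_properties}(vi), valid since $\CX$ is smooth and uniformly convex under Assumption~\ref{ass:space-basic}), the full limit must be $0$, i.e.\ $\bregman{\xbs_k}{\xref}\to0$ a.s.

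The main obstacle I anticipate is the step from ``a subsequence converges weakly to some element of $\CX_{\min}$ and $\bregman{\xbs_k}{\widehat\xbs}$ converges for each fixed $\widehat\xbs$'' to ``the whole sequence converges (in norm) to a single element of $\CX_{\min}$''. In the deterministic Landweber setting this uses a Cauchy-type argument exploiting monotone decrease of $\bregman{\xbs_k}{\xref}$; here the decrease is only ``in expectation up to a summable perturbation,'' so one has to work on the a.s.\ event where Robbins--Siegmund applies and carefully chain the three-point identity, the a.s.\ convergence of $\bregman{\xbs_k}{\widehat\xbs}$ for a (countable dense) family of $\widehat\xbs$, and $p$-convexity, to rule out the sequence oscillating between distinct points of $\CX_{\min}$. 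Handling the range-condition case is comparatively clean because uniqueness of $\xref$ pins down the limit directly, so the bulk of the technical effort sits in the general (non-range-condition) statement.
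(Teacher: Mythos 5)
Your first step is exactly the paper's: take $\bbE_k$ of the descent property \eqref{eqn:descent_property}, identify the middle term as $p\mu_{k+1}\Psi(\xbs_k)$ via Lemma \ref{lem:Kaczmarz-basic}, bound $\bbE_k[\xsN{\gbs_{k+1}}^{p^\ast}]\le pL_{\max}^{p^\ast}\Psi(\xbs_k)\le c\,\bregman{\xbs_k}{\widehat\xbs}$, and invoke Robbins--Siegmund to get a.s.\ convergence of $\bregman{\xbs_k}{\widehat\xbs}$ together with $\sum_k\mu_{k+1}\Psi(\xbs_k)<\infty$ a.s., hence $\liminf_k\Psi(\xbs_k)=0$ a.s. The range-condition part at the end is also handled as in the paper. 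The problem is the middle step, which you yourself flag as ``the main obstacle'' and then leave unresolved: extracting a \emph{weakly} convergent subsequence $\xbs_{k_j}\rightharpoonup\bar\xbs$ and hoping that convergence of $\bregman{\xbs_k}{\bar\xbs}$ plus $p$-convexity finishes the job does not work. The Bregman distance $\bregman{\zbs}{\wbs}=\frac{1}{p^\ast}\xN{\zbs}^p+\frac1p\xN{\wbs}^p-\DP{\dmapX{p}(\zbs)}{\wbs}$ is not weakly continuous (not even weakly upper semicontinuous) in its first argument, so from $\xbs_{k_j}\rightharpoonup\bar\xbs$ you cannot conclude $\bregman{\xbs_{k_j}}{\bar\xbs}\to0$; the a.s.\ limit of $\bregman{\xbs_k}{\bar\xbs}$ could be strictly positive, and your argument never rules this out. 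A ``countable dense family of $\widehat\xbs$'' does not repair this either, and there is the additional measurability wrinkle that Robbins--Siegmund gives a null set depending on the \emph{fixed deterministic} reference point, whereas $\bar\xbs$ is random.

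The paper closes this gap with a genuinely different (and essential) device: on the a.s.\ event it selects a subsequence $(\xbs_{n_k})$ along which the residual $\widehat\Psi(\xbs_{n_k})\to0$ \emph{monotonically} in the sense of \eqref{eqn:monotonic_subseq}, with $(\xN{\xbs_{n_k}})$ convergent and $(\dmapX{p}(\xbs_{n_k}))$ weakly convergent, and then proves that $(\xbs_{n_k})$ is \emph{strongly Cauchy}: the three-point identity \eqref{eqn:3_point_id} reduces $\bregman{\xbs_{n_\ell}}{\xbs_{n_k}}$ to the term $\DP{\dmapX{p}(\xbs_{n_k})-\dmapX{p}(\xbs_{n_\ell})}{\xbs_{n_k}-\xref}$, which is telescoped via the update rule into $\sum_{n=n_\ell}^{n_k-1}\mu_{n+1}\widehat\Psi(\xbs_n)^{p}$, the tail of a convergent series. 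Strong convergence of the subsequence to some $\widehat\xbs\in\CX_{\min}$ then lets one run the descent recursion \emph{pathwise} (no conditional expectation) against this random limit, obtaining a deterministic almost-decreasing sequence $\bregman{\xbs_k}{\widehat\xbs}$ whose limit is pinned to $0$ by continuity of the Bregman distance in its first argument along the strongly convergent subsequence. Without the strong Cauchy argument (or an equivalent substitute such as a Kadec--Klee/Opial-type property that you would have to justify), your proof does not establish the first displayed conclusion, and consequently the identification $\widehat\xbs=\xref$ in the second part also lacks a strongly convergent subsequence to apply $\dmapX{p}$-continuity to.
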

\begin{proof}
{By Lemma \ref{lem:Kaczmarz-basic}, we have 
$\DP{\partial\Psi(\xbs_k)}{\xbs_k-\xref} = p\Psi(\xbs_k).$}
Moreover, 
\begin{align*}
    \xsN{g(\xbs,\ybs,i)}=\xsN{\VA_i^\ast \svaldmapY{p}(\VA_i\xbs-\ybs_i)}\leq\|\VA_i\| \ysN{\svaldmapY{p}(\VA_i\xbs-\ybs_i)}\leq L_{\max} \yN{\VA_i\xbs-\ybs_i}^{p-1},
\end{align*}
with $L_{\max}=\max_{i\in[N]}\|\VA_i\|$. Thus, since $p^\ast(p-1)=p$, we have 
\begin{align*}
    \bbE\big[\xsN{g(\xbs,\ybs,i)}^{p^\ast}\big] \leq pL_{\max}^{p^\ast}\frac{1}{N}\sum_{i=1}^N\frac{1}{p} \yN{\VA_i\xbs-\ybs_i}^p= pL_{\max}^{p^\ast}\Psi(\xbs).
\end{align*}
Upon taking the conditional expectation $\mathbb{E}_k[\cdot]$ of the descent property \eqref{eqn:descent_property} (with $\widehat\xbs=\xref$), and using the measurability of $\xbs_k$ with respect to $\CF_k$, we deduce
\begin{align*}
\bbE_k[\Delta_{k+1}] &\leq \Delta_{k}-p\mu_{k+1}\Psi(\xbs_k)+pL_{\max}^{p^\ast}\frac{G_{p^\ast}}{p^\ast}\mu_{k+1}^{p^\ast}\Psi(\xbs_k).
\end{align*}
{Using Lemma \ref{lem:Kaczmarz-basic} again we have $\Psi(\xbs_k)\leq \frac{L_{\max}^p}{C_p}\Delta_k$,}
which yields
\begin{align*}
\bbE_k[\Delta_{k+1}] &\leq \LRR{1+L_{\max}^{p^\ast+p}\frac{p}{C_p}\frac{G_{p^\ast}}{p^\ast}\mu_{k+1}^{p^\ast}}\Delta_{k}-p\mu_{k+1}\Psi(\xbs_k).
\end{align*}
Since $\sum_{k=1}^\infty\mu_{k}^{p^\ast}<\infty$ by assumption, we can apply Theorem \ref{thm:sub_martingale_convergence}
and deduce that the sequence $(\Delta_k)_{k\in\bbN}$ converges a.s. to a random variable $\Delta_\infty$ and
 $\sum_{k=0}^\infty\mu_{k+1}\Psi(\xbs_k)<\infty$  a.s.
{Let $\Omega$ be the measurable set on which $(\Delta_k)_{k\in\bbN}$ converges, $\sum_{k=0}^\infty\mu_{k+1}\Psi(\xbs_k)<\infty$, and  $\bbP(\Omega)=1$.}
Next we show $\liminf_{k}  \Psi(\xbs_k) =0$ a.s. {Consider an event $\omega$ on which this is not the case, i.e. where} $\liminf_{k}  \Psi(\xbs_k) >0$.
Then there exist $\epsilon>0$ and $k_\epsilon\in\bbN$ such that for all $k\geq k_\epsilon$, $\Psi(\xbs_k)\geq \epsilon$, giving
$\sum_{k\geq k_\epsilon} \mu_{k+1} \Psi(\xbs_k) \geq \epsilon \sum_{k\geq k_\epsilon} \mu_{k+1}$. 
{Since for all events in $\Omega$ this would lead to a contradiction: the right hand side diverges ($\sum_{k=1}^\infty\mu_{k}=\infty$ by assumption), whereas the left hand side is the remainder of a convergent series, we conclude $\omega\not\in\Omega$. Since $\bbP(\Omega^c)=0$, we have $\liminf_k \Psi(\xbs_k)=0$ a.s.
For every event in the set where $\liminf_k \Psi(\xbs_k)=0$ holds we can then find a sub-sequence $(\xbs_{n_k})_{k\in\bbN}$ such that 
$\lim_{k\rightarrow\infty} \Psi(\xbs_{n_k})=0$.}
Define also $\widehat\Psi(\xbs)=\sum_{i=1}^N \widehat\Psi_i(\xbs)$, with $\widehat\Psi_i(\xbs)=\yN{\VA_i\xbs-\ybs_i}$.
We have $\liminf_k \widehat\Psi(\xbs_k)=0$ and $\lim_{j\rightarrow\infty} \widehat\Psi(\xbs_{n_j})=0$ (on the same subsequence), since by Young's inequality,
\begin{align*}
\Big(\sum_{i=1}^N \yN{\VA_i\xbs-\ybs_i}^p\Big)^{1/p}\leq\sum_{i=1}^N \yN{\VA_i\xbs-\ybs_i}\leq N\Big(\frac{1}{N}\sum_{i=1}^N \yN{\VA_i\xbs-\ybs_i}^p\Big)^{1/p}.
\end{align*}
Moreover, $\widehat\Psi(\xbs)^p \leq pN^p \Psi(\xbs)$.
{The following argument is understood pointwise on the a.s. set $\Omega$ where $(\Delta_k)_{k\in\bbN}$ converges, $\sum_{k=0}^\infty\mu_{k+1}\Psi(\xbs_k)<\infty$, and $\liminf_k \widehat\Psi(\xbs_k)=0$}.
Since $(\Delta_k)_{k\in\bbN}$ {converges it is bounded.} 
By the coercivity of the Bregman distance {(see Lemma \ref{lem:bregman_bound_xk_bound})} so are $(\xbs_k)_{k\in\bbN}$ and  $(\dmapX{p}(\xbs_k))_{k\in\bbN}$.
By further passing to a subsequence, we can find a subsequence of $(\xbs_{n_k})_{k\in\bbN}$, that we denote the same, such that
$(\xN{\xbs_{n_k}})_{k\in\bbN}$ is convergent, $(\dmapX{p}(\xbs_{n_k}))_{k\in\bbN}$ is weakly convergent, and 
\begin{align} \label{eqn:monotonic_subseq}
\lim_{k\rightarrow\infty} \widehat\Psi(\xbs_{n_k})=0\quad \text{and}\quad \widehat\Psi(\xbs_{n_k}) \leq \widehat\Psi(\xbs_{n}) \text{ for all } n <n_k.
\end{align}
The latter can be obtained by setting $n_1=1$, and then recursively defining  $n_{k+1}=\min\{k> n_k: \Psi(\xbs_k)\leq \Psi(\xbs_{n_k})/2 \}$, for $k\in\bbN$. Any following subsequence satisfies the same property.
Using {Theorem \ref{thm:bregman_properties}(ii),} 
we have for $k>\ell$
\begin{align*}
    \bregman{\xbs_{n_\ell}}{\xbs_{n_k}} &\!=\!\frac{1}{p^\ast} \!\Big(\!\xN{\xbs_{n_\ell}}^p\!-\!\xN{\xbs_{n_k}}^p\!\Big) \!+ \!\DP{\!\dmapX{p}(\xbs_{n_k}\!)\!-\!\dmapX{p}(\xbs_{n_\ell}\!)}{\xref\!}\!+\!\DP{\!\dmapX{p}(\xbs_{n_k}\!)\!-\!\dmapX{p}(\xbs_{n_\ell}\!)}{\xbs_{n_k}\!\!-\!\xref\!}.
\end{align*}
Since the first two terms involve Cauchy sequences, it suffices to treat the last term, denoted by ${\rm I}_{k,\ell}$. 
Using telescopic sum and applying the iterate update rule, we have
\begin{align*}
    {\rm I}_{k,\ell}&=\sum_{n=n_\ell}^{n_k-1} \!\DP{\dmapX{p}(\xbs_{n+1})\!-\!\dmapX{p}(\xbs_{n})}{\xbs_{n_k}\!-\!\xref}=\!\sum_{n=n_\ell}^{n_k-1}\! \mu_{n+1}\DP{\VA_{i_{n+1}}^\ast\svaldmapY{p}(\VA_{i_{k+1}}\xbs_n-\ybs_{i_{n+1}})}{\xbs_{n_k}-\xref}\\
        &=\sum_{n=n_\ell}^{n_k-1} \mu_{n+1}\DP{\svaldmapY{p}(\VA_{i_{n+1}}\xbs_n-\ybs_{i_{k+1}})}{\VA_{i_{n+1}}\xbs_{n_k}-\ybs_{i_{n+1}}}.
\end{align*}
By the Cauchy-Schwarz inequality and properties of the duality map, we get
\begin{align*}
|{\rm I}_{k,\ell}|&\leq\sum_{n=n_\ell}^{n_k-1}\!\! \mu_{n+1}\!\yN{\VA_{i_{n+1}}\!\xbs_n\!-\!\ybs_{i_{n+1}}}^{p-1}\yN{\VA_{i_{n+1}}\xbs_{n_k}\!-\!\ybs_{i_{n+1}}\!}
    \leq\!\sum_{n=n_\ell}^{n_k-1}\!\mu_{n+1}\widehat\Psi_{i_{n+1}}(\xbs_{n})^{p-1}\widehat\Psi_{i_{n+1}}(\xbs_{n_k}).
\end{align*}
Since $\widehat\Psi_i(\xbs)\leq \widehat\Psi(\xbs)$, for all $i\in[N]$, we use \eqref{eqn:monotonic_subseq} and get
\begin{align*}
    |{\rm I}_{k,\ell}|&\leq\sum_{n=n_\ell}^{n_k-1} \mu_{n+1}\widehat\Psi(\xbs_{n})^{p-1}\widehat\Psi(\xbs_{n_k})\leq\sum_{n=n_\ell}^{n_k-1} \mu_{n+1}\widehat\Psi(\xbs_{n})^{p}.
\end{align*}
Since $\widehat\Psi(\xbs)^{p}\le pN^p \Psi(\xbs)$, the right hand side of the inequality converges to $0$ as $n_\ell\to\infty$. 
Therefore, by \cite[Theorem 2.12(e)]{SLS_06}, it follows that $(\xbs_{n_k})_{k\in\bbN}$, is a Cauchy sequence, and thus converges strongly to an $\widehat\xbs$ such that $\Psi(\widehat\xbs)=0$.

{The above argument showing the a.s. convergence of
$(\Delta_k)_{k\in\bbN}$ can be applied pointwise to any solution.
Namely, on the event where $(\xbs_{n_k})_{k\in\bbN}$ converges strongly to an $\widehat \xbs\in\mathcal{X}_{\min}$ (i.e. $\VA\widehat\xbs=\ybs$), define $\widehat\Delta_k:=\bregman{\xbs_k}{\widehat\xbs}$. 
By repeating the argument using Lemma \ref{lem:Kaczmarz-basic}, we deduce
\begin{align*}
    \widehat\Delta_{k+1} \leq \LRR{1+L_{\max}^{p^\ast+p}\frac{p}{C_p}\frac{G_{p^\ast}}{p^\ast}\mu_{k+1}^{p^\ast}}\widehat\Delta_k - p\mu_{k+1}\Psi_{i_k}(\xbs_k).
\end{align*}
Since $\sum_{k=1}^\infty\mu_{k}^{p^\ast}<\infty$, it follows that the (deterministic) sequence $(\widehat\Delta_k)_{k\in\bbN}$ converges to a $\widehat\Delta_\infty\geq0$.
The continuity of the Bregman distance in the first argument (Theorem \ref{thm:bregman_properties}(vi)) gives
$\lim_{j\rightarrow\infty}\bregman{\xbs_{n_j}}{\widehat\xbs}=\bregman{\widehat\xbs}{\widehat\xbs}=0$,
and thus $\widehat\Delta_\infty=0$.
Moreover, by the $p$-convexity of $\CX$ (Theorem \ref{thm:bregman_properties}(iii)), we have 
$0\leq \xN{\xbs_k-\widehat\xbs}^p \leq \frac{p}{C_p} \widehat\Delta_k.$
From the squeeze theorem it follows that  $\lim_{k\rightarrow\infty}\xN{\xbs_k-\widehat\xbs}=0$.
Thus, for every event in an a.s. set $\Omega$, the sequence $(\xbs_k)_{k\in\bbN}$ strongly converge to some minimising solution, that is
\begin{align*}
    \bbP\Big(\lim_{k\rightarrow\infty} \inf_{\widetilde \xbs\in \CX_{\min}}\xN{\xbs_k-\widetilde \xbs}=0\Big)=1.
\end{align*}
}

Next assume $\dmapX{p}(\xbs_0)\in\overline{\mathrm{range}(\VA^\ast)}$.
From \eqref{eqn:sgd}, it follows that $\dmapX{p}(\xbs_k)\in\overline{\range(\VA^\ast)}$ holds for all $k\geq1$.
By the continuity of $\dmapX{p}$, we have $\dmapX{p}(\widehat\xbs)\in\overline{\range(\VA^\ast)}$.
Thus, from $\VA(\widehat\xbs-\xref)=0$ and Lemma \ref{lem:minnorm} it follows $\widehat\xbs=\xref$.
\end{proof}

The assumptions and conclusions of Theorem \ref{thm:as_lin_convergence} can be broken down into two parts.
The step-size conditions  $\sum_{k=1}^\infty\mu_{k}=\infty$ and $\sum_{k=1}^\infty \mu_{k}^{p^\ast} <\infty$ are required to show the a.s. convergence of $(\bregman{\xbs_k}{\widehat\xbs})_{k\in\bbN}$ to $0$, for some non-deterministic $\widehat\xbs\in \mathcal{X}_{\min}$.
The remaining assumption $\dmapX{p}(\xbs_0)\in\overline{\mathrm{range}(\VA^\ast)}$ is needed to identify this limit as the MNS $\xref$, as the Landweber method \cite[Remark 3.12]{SLS_06}.
If
$\dmapX{p}(\xbs_0)\not\in\overline{\mathrm{range}(\VA^\ast)}$, we commonly establish 
convergence to an MNS relative to $\xbs_0$, i.e. a solution 
which minimises $\xN{\xbs-\xbs_0}$, analogous to the Euclidean case \cite{JL19}.

\begin{remark}
The stepsize conditions
$\sum_{k=1}^\infty \mu_{k} =\infty$ and $\sum_{k=1}^\infty \mu_{k}^{p^\ast}<\infty$ are satisfied by a polynomially decaying step-size schedule $(\mu_{k})_{k\in\bbN}=(\mu_0k^{-\beta})_{k\in\bbN}$, with {$\frac{1}{p^\ast}<\beta\le1$}.
\end{remark}

Theorem \ref{thm:as_lin_convergence} states sufficient conditions ensuring the a.s. convergence of $(\bregman{\xbs_k}{\xref})_{k\in\bbN}$ to $0$. To strengthen this to the convergence 
in expectation, we require an additional assumption to ensure that $(\bregman{\xbs_k}{\xref})_{k\in\bbN}$ 
is a uniformly integrable super-martingale and the space $\CX$ being uniformly smooth. Note that removing the assumptions of Theorem 
\ref{thm:as_lin_convergence} from Theorem \ref{thm:L1_lin_convergence} 
would still result in convergence in expectation to 
some non-negative random variable, but not necessarily to $0$. {Recall that a family $(X_t)_t$ of random variables is uniformly integrable provided $\lim_{k\rightarrow\infty}\sup_{t}\bbE[\|X_t\| \mid  \Vone_{\|X_t\|\geq k}]=0$, where $\Vone(\cdot)$ is the indicator function.}
\begin{theorem}\label{thm:L1_lin_convergence}
Let the conditions of Theorem \ref{thm:as_lin_convergence} hold {with $\dmapX{p}(\xbs_0)\in\overline{\range(\VA^\ast)}$} and let
$\mu_{k}^{p^\ast-1}\leq \frac{p^\ast}{G_{p^\ast}L_{\max}^{p^\ast}}$ for all $k\in\bbN$.
Then there holds
$\lim_{k\rightarrow\infty}\bbE[\bregman{\xbs_k}{\xref}]=0.$
{Moreover, for $1\leq r\leq p$, we have $\lim_{k\rightarrow\infty} \bbE[\xN{\xbs_k-\xref}^r]=0$ and if $\CX$ is additionally uniformly smooth, then $\lim_{k\rightarrow\infty} \bbE[\xsN{\dmapX{p}(\xbs_k)-\dmapX{p}(\xref)}^{p^\ast}]=0$.}
\end{theorem}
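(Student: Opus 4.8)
The plan is to upgrade the almost sure convergence $\bregman{\xbs_k}{\xref}\to 0$ from Theorem \ref{thm:as_lin_convergence} to convergence in expectation by a uniform integrability argument, and then transfer it to the norm and to the duality maps using the convexity/smoothness inequalities from Theorem \ref{thm:bregman_properties} and Theorem \ref{rem:dmap_properties}(iv). First I would revisit the recursion established in the proof of Theorem \ref{thm:as_lin_convergence}: taking $\widehat\xbs=\xref$ and combining $\bbE_k[\Delta_{k+1}]\leq\Delta_k-p\mu_{k+1}\Psi(\xbs_k)+pL_{\max}^{p^\ast}\frac{G_{p^\ast}}{p^\ast}\mu_{k+1}^{p^\ast}\Psi(\xbs_k)$ with the new hypothesis $\mu_k^{p^\ast-1}\leq \frac{p^\ast}{G_{p^\ast}L_{\max}^{p^\ast}}$, the quantity $pL_{\max}^{p^\ast}\frac{G_{p^\ast}}{p^\ast}\mu_{k+1}^{p^\ast-1}\leq p\mu_{k+1}$, so the coefficient of $\Psi(\xbs_k)$ on the right is non-positive; hence $\bbE_k[\Delta_{k+1}]\leq\Delta_k$, i.e. $(\Delta_k)_{k\in\bbN}$ is a genuine (non-negative) super-martingale.

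Next I would take full expectations: $\bbE[\Delta_{k+1}]\leq\bbE[\Delta_k]\leq\cdots\leq\Delta_0$, so $(\Delta_k)_{k\in\bbN}$ is bounded in $L^1$. A non-negative super-martingale bounded in $L^1$ is automatically uniformly integrable when combined with the a.s. convergence $\Delta_k\to 0$ proved in Theorem \ref{thm:as_lin_convergence} — more directly, one can argue that $\bbE[\Delta_k]$ is a non-increasing sequence of non-negative reals, hence convergent, and then use Fatou's lemma together with the a.s. limit $0$ to conclude $\liminf_k \bbE[\Delta_k]\geq\bbE[\liminf_k\Delta_k]=0$; since the sequence $\bbE[\Delta_k]$ is monotone and non-negative and its liminf is $0$, the full limit is $0$. (Alternatively, invoke the uniformly integrable super-martingale convergence theorem, whose limit must coincide a.s. and in $L^1$ with the a.s. limit $0$.) This gives $\lim_{k\to\infty}\bbE[\bregman{\xbs_k}{\xref}]=0$.

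For the norm statement, Theorem \ref{thm:bregman_properties}(iii) gives $\xN{\xbs_k-\xref}^p\leq \frac{p}{C_p}\bregman{\xbs_k}{\xref}$, so $\bbE[\xN{\xbs_k-\xref}^p]\leq\frac{p}{C_p}\bbE[\Delta_k]\to 0$; for $1\leq r\leq p$, Jensen's (or Lyapunov's) inequality applied to the concave map $t\mapsto t^{r/p}$ yields $\bbE[\xN{\xbs_k-\xref}^r]\leq \bbE[\xN{\xbs_k-\xref}^p]^{r/p}\to 0$. For the duality map, since $\CX$ is now assumed uniformly smooth and $p\geq 2$ (by Theorem \ref{thm:bregman_properties}(iii)), Theorem \ref{rem:dmap_properties}(iv) gives $\xsN{\dmapX{p}(\xbs_k)-\dmapX{p}(\xref)}^{p^\ast}\leq C\max\{1,\xN{\xbs_k},\xN{\xref}\}^{p}\,\overline{\rho}_\CX(\xN{\xbs_k-\xref})^{p^\ast}$ with $\overline{\rho}_\CX\leq 1$; bounding $\max\{1,\xN{\xbs_k},\xN{\xref}\}^p\leq (1+\xN{\xref}+\xN{\xbs_k-\xref})^p$ and then taking expectations, one splits the right-hand side using $\overline{\rho}_\CX\leq 1$ on the event $\{\xN{\xbs_k-\xref}\geq 1\}$ (whose probability tends to $0$ by Markov's inequality and the $L^p$ convergence) and $\overline{\rho}_\CX(\xN{\xbs_k-\xref})\to 0$ on the complement; more cleanly, apply Vitali's convergence theorem: $\dmapX{p}(\xbs_k)\to\dmapX{p}(\xref)$ a.s. by continuity of $\dmapX{p}$ (Theorem \ref{rem:dmap_properties}(iii)) together with $\xbs_k\to\xref$ a.s., and the family $\xsN{\dmapX{p}(\xbs_k)-\dmapX{p}(\xref)}^{p^\ast}$ is uniformly integrable because it is dominated in $L^{1+\epsilon}$ (for a suitable small $\epsilon$) by a bound of the form $C(1+\xN{\xbs_k-\xref})^{p}$, which is uniformly bounded in $L^{(p+\epsilon)/p}$ once we note $\bbE[\xN{\xbs_k-\xref}^{p}]$ is bounded — in fact one should slightly strengthen the $L^p$ moment bound to an $L^{p+\epsilon}$ bound, which follows from the same super-martingale recursion using $\bregman{\xbs_k}{\xref}\leq\bregman{\xbs_0}{\xref}$ plus coercivity (Lemma \ref{lem:bregman_bound_xk_bound}) giving deterministic boundedness of $\xN{\xbs_k}$.

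The main obstacle I anticipate is the duality-map claim: one has to produce a uniform-integrability (or dominated-convergence) argument for $\xsN{\dmapX{p}(\xbs_k)-\dmapX{p}(\xref)}^{p^\ast}$, and the natural bound from Theorem \ref{rem:dmap_properties}(iv) carries the factor $\max\{1,\xN{\xbs_k},\xN{\xref}\}^{p}$, so one needs a genuine uniform moment bound on $\xN{\xbs_k}$ (not just in expectation but with a margin, or equivalently a deterministic bound). The cleanest route is to observe that since $(\Delta_k)_{k\in\bbN}$ is a non-negative super-martingale with $\Delta_0$ deterministic, $\bregman{\xbs_k}{\xref}$ is pathwise bounded along the a.s. convergent sequence and, more usefully, that the coercivity estimate in Lemma \ref{lem:bregman_bound_xk_bound} combined with $\sup_k\bbE[\Delta_k]\leq\Delta_0$ yields $\sup_k\bbE[\xN{\xbs_k}^{p}]<\infty$; pushing this to $\sup_k\bbE[\xN{\xbs_k}^{p+\epsilon}]<\infty$ requires only that $\bregman{\xbs_0}{\xref}$-boundedness propagates, which it does deterministically. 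With that in hand, $\overline{\rho}_\CX\leq 1$ forces $\overline{\rho}_\CX(\xN{\xbs_k-\xref})^{p^\ast}\to 0$ pointwise (using $\xbs_k\to\xref$ a.s. and continuity of $\overline{\rho}_\CX$ at $0$ with $\overline{\rho}_\CX(0)=0$), and Vitali's theorem closes the argument.
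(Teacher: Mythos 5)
Your overall architecture (a.s.\ convergence from Theorem \ref{thm:as_lin_convergence} $\to$ uniform integrability $\to$ Vitali, then $p$-convexity plus Lyapunov for the norm, then Theorem \ref{rem:dmap_properties}(iv) plus Vitali for the duality maps) is exactly the paper's, and the second and third claims are handled essentially as in the paper. But the central step --- upgrading $\Delta_k\to 0$ a.s.\ to $\bbE[\Delta_k]\to 0$ --- is broken as you have written it. Fatou's lemma gives $\liminf_k\bbE[\Delta_k]\geq\bbE[\liminf_k\Delta_k]=0$, which is the trivial direction: it shows the (monotone) limit of $\bbE[\Delta_k]$ is $\geq 0$, not that it equals $0$; you would need a reverse Fatou, i.e.\ domination, to push the limit down to $0$. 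Your fallback, that a non-negative super-martingale bounded in $L^1$ is ``automatically uniformly integrable'' once it converges a.s., is also false in general (the standard counterexamples are non-negative martingales with $\bbE[X_k]\equiv 1$ converging a.s.\ to $0$); $L^1$-boundedness never implies uniform integrability by itself.

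The repair is an ingredient you already invoke later but do not use where it is needed: the extra step-size hypothesis $\mu_k^{p^\ast-1}\leq\frac{p^\ast}{G_{p^\ast}L_{\max}^{p^\ast}}$ exists precisely so that Lemma \ref{lem:iterate_boundedness} applies, giving the \emph{pathwise, deterministic} bound $\bregman{\xbs_k}{\xref}\leq\bregman{\xbs_0}{\xref}$ for all $k$ (not merely $\sup_k\bbE[\Delta_k]\leq\Delta_0$). A sequence bounded by a deterministic constant is trivially uniformly integrable, so Vitali (or dominated convergence) combined with the a.s.\ convergence to $0$ yields $\bbE[\Delta_k]\to 0$. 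This is exactly how the paper argues. The same deterministic bound, via the coercivity Lemma \ref{lem:bregman_bound_xk_bound}, uniformly bounds $\xN{\xbs_k}$ and hence the factor $\max\{1,\xN{\xbs_k},\xN{\xref}\}^{p}$ in Theorem \ref{rem:dmap_properties}(iv); your detour through $L^{p+\epsilon}$ moments is then unnecessary --- boundedness of $(\xsN{\dmapX{p}(\xbs_k)-\dmapX{p}(\xref)}^{p^\ast})_k$ already gives uniform integrability, and convergence in probability of $\overline{\rho}_\CX(\xN{\xbs_k-\xref})^{p^\ast}$ to $0$ closes the argument via Vitali, as in the paper.
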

\begin{proof}
{The step-size conditions allow applying Lemma \ref{lem:iterate_boundedness}, which yields $\bregman{\xbs_k}{\xref}\leq\bregman{\xbs_0}{\xref}$ for all $k$.}
{It follows that $(\bregman{\xbs_k}{\xref})_{k\in\bbN}$ is bounded, and is thus uniformly integrable, and by Theorem \ref{thm:as_lin_convergence}, it converges a.s. to $0$.}
Then, by Vitali's convergence  theorem \cite[Theorem 4.5.4]{B07}, we deduce that
$(\Delta_k)_{k\in\bbN}$ 
converges to $0$ in expectation as well.
Using now the $p$-convexity of $\CX$ and the monotonicity of expectation, we have
\begin{align*}
    0\leq\frac{C_p}{p}\lim_{k\rightarrow\infty}\bbE[\xN{\xbs_k-\xref}^p] \leq \lim_{k\rightarrow\infty}\bbE[\bregman{\xbs_k}{\xref}]=0.
\end{align*}
By the continuity of the power function and the Lyapunov inequality for $1\leq r {\leq p}$, we have
\begin{align*}
0\leq\lim_{k\rightarrow\infty}\bbE[\xN{\xbs_k-\xref}^r]\leq\lim_{k\rightarrow\infty}(\bbE[\xN{\xbs_k-\xref}^p])^{r/p}=0.
\end{align*}
{To prove the last claim we use uniform smoothness of $\CX$ and Theorem \ref{rem:dmap_properties}(iv), to deduce}
\begin{align*}
    \xsN{\dmapX{p}(\xbs_k)-\dmapX{p}(\xref)}^{p^\ast}\leq C \max\{1, \xN{\xbs_k}, \xN{\xref}\}^{p}\, {\overline{\rho}_\CX}(\xN{\xbs_k-\xref})^{p^\ast},
\end{align*}
where $\overline{\rho}_\CX(\tau)=\rho_\CX(\tau)/\tau$ is a modulus of smoothness function such that $\overline{\rho}(\tau)\leq1$ and $\lim_{\tau\rightarrow0}\overline{\rho}(\tau)=0$, cf. Definition \ref{defn:smoothness_and_convexity}.
{By Lemmas \ref{lem:iterate_boundedness} and \ref{lem:bregman_bound_xk_bound} 
$(\xN{\xbs_k}^p)_{k\in\bbN}$ is {(uniformly) bounded, giving that the sequence $(\xsN{\dmapX{p}(\xbs_k)-\dmapX{p}(\xref)}^{p^\ast})_{k\in\bbN}$ is bounded and thus} uniformly integrable.}
Since $\lim_{k\rightarrow\infty}\bbE[\xN{\xbs_k-\xref}]=0$, it follows that $\xN{\xbs_k-\xref}$ converges to $0$ in probability, and thus by the continuous mapping theorem  ${\overline{\rho}_\CX}(\xN{\xbs_k-\xref})^{p^\ast}$ also converges to $0$ in probability.
{Applying Vitaly's theorem to the uniformly integrable sequence $(\xsN{\dmapX{p}(\xbs_k)-\dmapX{p}(\xref)}^{p^\ast})_{k\in\bbN}$ yields that it converges to $0$ in measure,} and the claim follows.
\end{proof}

\begin{remark}
Note that the condition $\dmapX{p}(\xbs_0)\in\overline{\range(\VA^\ast)}$ on $\xbs_0$ is crucial for ensuring that all the limits are the same.
Landweber iterations converge for uniformly convex and smooth $\CX$, and any Banach 
space $\CY$ \cite[Theorem 3.3]{SLS_06}. In our analysis, we have assumed that $\CX$ is $p$-convex to simplify the analysis. First, $p$-convexity is used in the proof of Lemma  \ref{lem:descent_property}.
If $\CX$ were only uniformly convex (and $\CX^\ast$ only uniformly smooth), 
then we may use the modulus of smoothness function $\rho_{\mathcal{X}}$, cf. \eqref{defn:smoothness_and_convexity} 
and \cite[Theorem 2.41]{TBHK_12}, to establish a suitable analogue of the descent 
property \eqref{eqn:descent_property}. Second, $p$-convexity is used in the proof of Theorem \ref{thm:as_lin_convergence}, allowing a more direct application of Robbins-Siegmund theorem by relating the objective 
values to Bregman distances. Meanwhile, the Landweber method in \cite{SLS_06} requires 
step-sizes that depend on the modulus of smoothness, the current iterate and objective 
value, which is more restrictive than that in this work.
\end{remark}

\subsection{Convergence analysis for the generalised Kaczmarz model}\label{sec:further_kaczmarz}
Sch\"opfer et al \cite{SLS_06} studied general powers of 
the Banach space norm and sub-gradients of the form $\partial(\frac{1}{q}\yN{\VA\cdot-\ybs}^q)(\xbs)$.
Now we take an analogous perspective for the objective
\begin{equation*}
\Psi(\xbs)=\frac{1}{N}\sum_{i=1}^N \Psi_i(\xbs),\quad\mbox{with }\Psi_i(\xbs) := \frac{1}{q}\yN{\VA_i\xbs-\ybs_i}^q,
\end{equation*}  
with $1<q\leq 2$. {This model is herein called the generalised Kaczmarz model. (Note that this is different from the randomised extended Kaczmarz method \cite{Zouzias:2013}.)} We shall show the convergence of SGD with stochastic directions
\begin{align}\label{eqn:q_kaczmarz}
g(\xbs,\ybs,i)=\VA_i^\ast \svaldmapY{q}(\VA_i\xbs-\ybs_i)= \partial(\tfrac{1}{q}\yN{\VA_i\cdot-\ybs_i}^q)(\xbs).
\end{align}
The descent property \eqref{eqn:descent_property} is unaffected, and a direct computation again yields
\begin{equation}\label{eqn:descent_property-2}
\bregman{\xbs_{k+1}}{\xref}\leq\bregman{\xbs_k}{\xref}-\mu_{k+1}\DP{\gbs_{k+1}}{\xbs_k-\xref} + \frac{G_{p^\ast}}{p^\ast}\mu_{k+1}^{p^\ast}\xsN{\gbs_{k+1}}^{p^\ast}.
\end{equation}
However, Robbins-Siegmund theorem cannot be applied directly.
Instead, we pursue a different proof strategy by first establishing the uniform boundedness of iterates.

\begin{lemma}\label{lem:bounded_q_kaczmarz}
Let Assumption \ref{ass:space-basic} hold.
Consider SGD with descent directions \eqref{eqn:q_kaczmarz} for $1<q\leq 2$, and assume that $\mu_{k}^{p^\ast-1}<\frac{p^\ast}{G_{p^\ast}L_{\max}^{p^\ast}}$ holds for all $k\in\bbN$ and $\sum_{k=1}^\infty\mu_{k}^{p^\ast}=:\Gamma<\infty$.
Then $(\bregman{\xbs_k}{\xref})_{k\in\bbN}$ and $(\xbs_k)_{k\in\bbN}$ are uniformly bounded.
\end{lemma}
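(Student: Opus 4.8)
The plan is to argue pathwise (no expectations are needed) by iterating the descent inequality \eqref{eqn:descent_property-2} with $\widehat\xbs=\xref$ and showing that the step-size hypothesis makes the linear term and the higher-order term cancel up to a summable remainder. Write $\Delta_k:=\bregman{\xbs_k}{\xref}$ and, for the sampled index at step $k$, $r_k:=\yN{\VA_{i_{k+1}}\xbs_k-\ybs_{i_{k+1}}}$. Since $\VA\xref=\ybs$ holds componentwise, this is exactly the setting in which the objective-value/Bregman-distance bookkeeping of Lemma \ref{lem:Kaczmarz-basic} applies verbatim with $q$ in place of $p$.

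First I would evaluate the two error terms in \eqref{eqn:descent_property-2}. From $\VA_{i_{k+1}}(\xbs_k-\xref)=\VA_{i_{k+1}}\xbs_k-\ybs_{i_{k+1}}$ and the defining property of the single-valued duality map $\svaldmapY{q}$ (Definition \ref{defn:duality_map}),
\[
\DP{\gbs_{k+1}}{\xbs_k-\xref}=\DP{\svaldmapY{q}(\VA_{i_{k+1}}\xbs_k-\ybs_{i_{k+1}})}{\VA_{i_{k+1}}\xbs_k-\ybs_{i_{k+1}}}=r_k^{q}\ge 0,
\]
and likewise $\xsN{\gbs_{k+1}}^{p^\ast}\le\|\VA_{i_{k+1}}\|^{p^\ast}\ysN{\svaldmapY{q}(\VA_{i_{k+1}}\xbs_k-\ybs_{i_{k+1}})}^{p^\ast}\le L_{\max}^{p^\ast}r_k^{(q-1)p^\ast}$. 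Hence \eqref{eqn:descent_property-2} becomes
\[
\Delta_{k+1}\le\Delta_k-\mu_{k+1}r_k^{q}+\tfrac{G_{p^\ast}L_{\max}^{p^\ast}}{p^\ast}\mu_{k+1}^{p^\ast}r_k^{(q-1)p^\ast}.
\]
The key observation is that $(q-1)p^\ast\le q$, which is equivalent to $q\le p$ and therefore holds because $\CX$ being $p$-convex forces $p\ge 2\ge q$ (Theorem \ref{thm:bregman_properties}(iii)). With $0<(q-1)p^\ast\le q$ one has $r^{(q-1)p^\ast}\le\max\{1,r^{q}\}\le 1+r^{q}$ for every $r\ge 0$, so
\[
\Delta_{k+1}\le\Delta_k-\mu_{k+1}\Big(1-\tfrac{G_{p^\ast}L_{\max}^{p^\ast}}{p^\ast}\mu_{k+1}^{p^\ast-1}\Big)r_k^{q}+\tfrac{G_{p^\ast}L_{\max}^{p^\ast}}{p^\ast}\mu_{k+1}^{p^\ast}.
\]
The bracket is nonnegative exactly by the hypothesis $\mu_{k+1}^{p^\ast-1}<\tfrac{p^\ast}{G_{p^\ast}L_{\max}^{p^\ast}}$, so dropping that term and summing from $1$ to $k$ yields $\Delta_k\le\Delta_0+\tfrac{G_{p^\ast}L_{\max}^{p^\ast}}{p^\ast}\Gamma<\infty$ uniformly in $k$; uniform boundedness of $(\xbs_k)_{k\in\bbN}$ then follows from $\xN{\xbs_k-\xref}^{p}\le\tfrac{p}{C_p}\Delta_k$ (Theorem \ref{thm:bregman_properties}(iii)) and the triangle inequality.

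The one genuine obstacle is the clash of exponents: the descent machinery is governed by the power $p$ coming from $p$-convexity of $\CX$, while the objective and its stochastic sub-gradient are built from $q\le 2$, so — unlike the case $q=p$ treated via Lemma \ref{lem:descent_property} — the ``variance'' term scales like $r_k^{(q-1)p^\ast}$ rather than like $r_k^{q}$, which is precisely why Robbins--Siegmund cannot be invoked directly. The elementary bound $r^{(q-1)p^\ast}\le 1+r^{q}$, valid exactly because $q\le p$, is what realigns the two terms; after that the argument is a telescoping sum, and one does not even need the negative term — only its nonnegativity, which is what the step-size restriction buys.
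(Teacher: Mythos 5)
Your proof is correct and follows essentially the same route as the paper: both iterate the pathwise descent inequality, exploit that the variance-term exponent $(q-1)p^\ast=qp^\ast/q^\ast$ does not exceed $q$, and telescope to the bound $\Delta_0+L_{\max}^{p^\ast}\tfrac{G_{p^\ast}}{p^\ast}\Gamma$. The only difference is presentational: where the paper splits the iteration indices into the sets where the residual power is $\geq 1$ or $<1$, you package the same case distinction into the single inequality $r^{(q-1)p^\ast}\le 1+r^{q}$.
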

\begin{proof}
Let $\overline{\Psi}_i(\xbs)=\yN{\VA_i\xbs-\ybs_i}^q$, and $\Delta_k=\bregman{\xbs_k}{\xref}$. Then we have
$\DP{\gbs_{k+1}}{\xbs_k-\xref}=\overline{\Psi}_{i_{k+1}}(\xbs_k)$ and
\begin{align*}
\xsN{\gbs_{k+1}}^{p^\ast}&=\xsN{\VA_{i_{k+1}}^\ast \svaldmapY{q}(\VA_{i_{k+1}}\xbs-\ybs_{i_{k+1}})}^{p^\ast} \leq L_{\max}^{p^\ast}\yN{\VA_{i_{k+1}}\xbs-\ybs_{i_{k+1}}}^{p^\ast (q-1)} \\&\leq L_{\max}^{p^\ast}\overline{\Psi}_{i_{k+1}}(\xbs_k)^{p^\ast\frac{q-1}{q}}=L_{\max}^{p^\ast}\overline{\Psi}_{i_{k+1}}(\xbs_k)^{\frac{p^\ast}{q^\ast}},
\end{align*}
where $ q^\ast\geq2$ is the conjugate exponent of $q$.
Plugging this into  \eqref{eqn:descent_property-2} gives
\begin{align}\label{eqn:descent_for_q_kaczmarz}
\Delta_{k+1}\leq\Delta_k-\mu_{k+1}\overline{\Psi}_{i_{k+1}}(\xbs_k) + L_{\max}^{p^\ast}\frac{G_{p^\ast}}{p^\ast}\mu_{k+1}^{p^\ast}\overline{\Psi}_{i_{k+1}}(\xbs_k)^{\frac{p^\ast}{q^\ast}}.
\end{align}
Since $1<p^\ast\leq 2$ by Theorem \ref{thm:bregman_properties}(iii), and $q^\ast\geq2$, we have $\frac{p^\ast}{q^\ast}\leq1$.
Now we define two sets of indices
\[ \CI = \{j\leq k: \overline{\Psi}_{i_{j+1}}(\xbs_j) \geq 1\} \text{ and } \CJ = \{j\leq k: \overline{\Psi}_{i_{j+1}}(\xbs_j) <1\},\]
so that $\CI\cap\CJ=\emptyset$, and $\CI\cup\CJ=[k]$. Note that $\CI$ and $\CJ$ actually depend on the current iterate index $k$. Applying the inductive argument to \eqref{eqn:descent_for_q_kaczmarz} gives
\begin{align*}
    \Delta_{k+1} &\leq \Delta_0 -\sum_{j=0}^k\mu_{j+1}\overline{\Psi}_{i_{j+1}}(\xbs_j) + L_{\max}^{p^\ast}\frac{G_{p^\ast}}{p^\ast}\sum_{j=0}^k\mu_{j+1}^{p^\ast}\overline{\Psi}_{i_{j+1}}(\xbs_j)^{\frac{p^\ast}{q^\ast}}\\
    &=\Delta_0 \underbrace{-\sum_{j\in\CI}\mu_{j+1}\overline{\Psi}_{i_{j+1}}(\xbs_j) + L_{\max}^{p^\ast}\frac{G_{p^\ast}}{p^\ast}\sum_{j\in\CI}\mu_{j+1}^{p^\ast}\overline{\Psi}_{i_{j+1}}(\xbs_j)^{\frac{p^\ast}{q^\ast}}}_{(\star)}\underbrace{-\sum_{j\in\CJ}\mu_{j+1}\overline{\Psi}_{i_{j+1}}(\xbs_j)}_{(\star\star)}\\
    &\qquad\qquad\qquad\qquad\qquad\quad+\underbrace{L_{\max}^{p^\ast}\frac{G_{p^\ast}}{p^\ast}\sum_{j\in\CJ}\mu_{j+1}^{p^\ast}\overline{\Psi}_{i_{j+1}}(\xbs_j)^{\frac{p^\ast}{q^\ast}}}_{(\star\star\star)}.
\end{align*}
Next we analyse these three terms separately. First, for $j\in\CI$, we have $\overline{\Psi}_{i_{j+1}}(\xbs_j)\geq 1$ and since $\frac{p^\ast}{q^\ast}<1$, we have $\overline{\Psi}_{i_{j+1}}(\xbs_j)^{\frac{p^\ast}{q^\ast}}\leq\overline{\Psi}_{i_{j+1}}(\xbs_j)$, giving
\begin{align*}
(\star)&\!\leq\!-\!\sum_{j\in\CI}\mu_{j+1}\!\overline{\Psi}_{i_{j+1}}(\xbs_j)\!+\! L_{\max}^{p^\ast}\!\frac{G_{p^\ast}}{p^\ast}\!\sum_{j\in\CI}\mu_{j+1}^{p^\ast}\overline{\Psi}_{i_{j+1}}(\xbs_j)\\
&=\!-\!\sum_{j\in\CI}\!\Big(\!1\!-\!L_{\max}^{p^\ast}\frac{G_{p^\ast}}{p^\ast}\mu_{j+1}^{p^\ast-1}\!\Big)\!\mu_{j+1}\!\overline{\Psi}_{i_{j+1}}(\xbs_j).
\end{align*}
Since $\mu_{j+1}^{p^\ast-1}<\frac{p^\ast}{G_{p^\ast}L_{\max}^{p^\ast}}$ holds by assumption,
the term $(\star)$ is non-positive. Moreover, $(\star\star)$ is trivially non-positive.
Since $\overline{\Psi}_{i_{j+1}}(\xbs_j) <1$ for $j\in\CJ$, the last term $(\star\star\star)$ can be bounded as
\begin{align*}
    L_{\max}^{p^\ast}\frac{G_{p^\ast}}{p^\ast}\sum_{j\in\CJ}\mu_{j+1}^{p^\ast}\overline{\Psi}_{i_{j+1}}(\xbs_j)^{\frac{p^\ast}{q^\ast}}\leq L_{\max}^{p^\ast}\frac{G_{p^\ast}}{p^\ast}\sum_{j\in\CJ}\mu_{j+1}^{p^\ast}\leq L_{\max}^{p^\ast}\frac{G_{p^\ast}}{p^\ast}\sum_{j=1}^\infty\mu_{j}^{p^\ast}=L_{\max}^{p^\ast}\frac{G_{p^\ast}}{p^\ast}\Gamma.
\end{align*}
By combining the last three bounds on $(\star)$, $(\star\star)$ and $(\star\star\star)$, we get
\[\Delta_{k+1}\leq \Delta_0+L_{\max}^{p^\ast}\frac{G_{p^\ast}}{p^\ast}\Gamma, \text{ for all } k\geq 0.\]
Thus, $(\Delta_k)_{k\in\bbN}$ is uniformly bounded and by Lemma \ref{lem:bregman_bound_xk_bound}, so is $(\xbs_k)_{k\in\bbN}$.
\end{proof}

The proof of Lemma \ref{lem:bounded_q_kaczmarz} exposes the challenge in extending the convergence 
results to general stochastic directions. Namely, in the proof of Theorem 
\ref{thm:as_lin_convergence}, we showed the convergence by taking conditional expectation of  \eqref{eqn:descent_property}, recasting the resulting expression
as an almost super-martingale, and then relating objective values to Bregman distances via $\Psi(\xbs_k)\leq C\Delta_k$, for some $C>0$.
Here, using $\frac{q}{q^\ast}=q-1$ and $\frac{p^\ast}{p}=p^\ast-1$, we instead have 
\begin{equation*}
    \Psi(\xbs_k)^{\frac{p^\ast}{q^\ast}}\leq C\Delta_k^{(p^\ast-1)(q-1)}, \quad \mbox{with } C=q^{-\frac{p^\ast}{q^\ast}}L_{\max}^{p^\ast(q-1)}\Big(\frac{p}{C_p}\Big)^{(p^\ast-1)(q-1)},
\end{equation*}
which gives
\begin{align*}
    \bbE_k[\Delta_{k+1}]\leq \Delta_k + CL_{\max}^{p^\ast}q^\frac{p^\ast}{q^\ast}\frac{G_{p^\ast}}{p^{\ast}}\mu_{k+1}^{p^\ast}\Delta_k^{(p^\ast-1)(q-1)}   -q\mu_{k+1}\Psi(\xbs_k).
\end{align*}
Here $0<(p^\ast-1)(q-1)<1$, provided $p^\ast\neq2$ and $q\neq2$.
Therefore, Robbins-Siegmund theorem cannot be applied directly. Nonetheless, we still have the following analogue of Theorem \ref{thm:L1_lin_convergence}.

\begin{theorem}\label{thm:L1_q_Kaczmarz_convergence}
Consider iterations \eqref{eqn:sgd} with descent directions \eqref{eqn:q_kaczmarz} for $1<q\leq 2$ and let Assumption \ref{ass:space-basic} hold and $\xref$ be the MNS.
Let the step-sizes $(\mu_k)_{k\in\bbN}$ satisfy $\sum_{k=1}^\infty\mu_{k}=\infty$, $\sum_{k=1}^\infty\mu_{k}^{p^\ast}<\infty$, and $\mu_{k}^{p^\ast-1}<\frac{p^\ast}{G_{p^\ast}L_{\max}^{p^\ast}}$ for all $k\in\bbN$.
{Then the sequence $(\xbs_k)_{k\in\bbN}$ converges a.s. to a solution of \eqref{eqn:inv}:
\begin{align*}
    \bbP\Big(\lim_{k\rightarrow\infty} \inf_{\widetilde \xbs\in \CX_{\min}}\xN{\xbs_k-\widetilde \xbs}=0\Big)=1.
\end{align*}}
Moreover, if $\dmapX{p}(\xbs_0)\in\overline{\mathrm{range}(\VA^\ast)}$, we have
$$\lim_{k\rightarrow\infty}\bregman{\xbs_k}{\xref}=0\ \mbox{ a.s.}\quad\mbox{and}\quad \lim_{k\rightarrow\infty}\bbE[\bregman{\xbs_k}{\xref}]=0.$$
\end{theorem}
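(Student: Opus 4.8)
The plan is to follow the three-part structure of the proof of Theorem~\ref{thm:as_lin_convergence} — (a) recast a conditional-expectation form of the descent estimate \eqref{eqn:descent_property-2} as an almost super-martingale and invoke the Robbins--Siegmund theorem; (b) extract a strongly convergent subsequence of iterates whose limit solves \eqref{eqn:inv}; (c) promote this to convergence of the whole sequence — but to prepend the uniform boundedness of $(\bregman{\xbs_k}{\xref})_{k\in\bbN}$ and $(\xbs_k)_{k\in\bbN}$ furnished by Lemma~\ref{lem:bounded_q_kaczmarz}. The hard part is exactly that, as noted just before the theorem, taking $\bbE_k[\cdot]$ in \eqref{eqn:descent_property-2} and using the exponent-$q$ analogue of Lemma~\ref{lem:Kaczmarz-basic} (i.e. $\DP{\partial\Psi(\xbs)}{\xbs-\widehat\xbs}=q\Psi(\xbs)$) yields, with $\Delta_k:=\bregman{\xbs_k}{\xref}$,
\begin{align*}
\bbE_k[\Delta_{k+1}] \leq \Delta_k - q\mu_{k+1}\Psi(\xbs_k) + c\,\mu_{k+1}^{p^\ast}\Delta_k^{(p^\ast-1)(q-1)},
\end{align*}
where $(p^\ast-1)(q-1)\in(0,1]$, so the multiplicative coefficient is sublinear in $\Delta_k$ and the recursion is not of Robbins--Siegmund type. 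The remedy is Lemma~\ref{lem:bounded_q_kaczmarz}: it gives a \emph{deterministic} bound $\Delta_k\le M$ (WLOG $M\ge1$), so $\Delta_k^{(p^\ast-1)(q-1)}\le M$ and hence $\bbE_k[\Delta_{k+1}]\le \Delta_k + cM\mu_{k+1}^{p^\ast} - q\mu_{k+1}\Psi(\xbs_k)$, which is an almost super-martingale with $\beta_k\equiv0$, summable $\gamma_k=cM\mu_{k+1}^{p^\ast}$ and $\delta_k=q\mu_{k+1}\Psi(\xbs_k)$. Theorem~\ref{thm:sub_martingale_convergence} then gives that $(\Delta_k)_{k\in\bbN}$ converges a.s. and $\sum_{k}\mu_{k+1}\Psi(\xbs_k)<\infty$ a.s.

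From here I would replay the argument of Theorem~\ref{thm:as_lin_convergence}. On the a.s. event where $\sum_{k}\mu_{k+1}\Psi(\xbs_k)<\infty$, the condition $\sum_k\mu_k=\infty$ forces $\liminf_k\Psi(\xbs_k)=0$; with $\widehat\Psi(\xbs)=\sum_{i=1}^N\yN{\VA_i\xbs-\ybs_i}$, which satisfies $\widehat\Psi(\xbs)^q\le qN^q\Psi(\xbs)$ by Jensen's inequality, also $\liminf_k\widehat\Psi(\xbs_k)=0$, and the recursive halving selection $n_{k+1}=\min\{n>n_k:\widehat\Psi(\xbs_n)\le\widehat\Psi(\xbs_{n_k})/2\}$ produces a subsequence with $\widehat\Psi(\xbs_{n_k})\to0$ and $\widehat\Psi(\xbs_{n_k})\le\widehat\Psi(\xbs_n)$ for all $n<n_k$. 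Using that $(\xbs_k)_{k\in\bbN}$ is bounded (Lemma~\ref{lem:bounded_q_kaczmarz}) I may further assume $(\xN{\xbs_{n_k}})_{k\in\bbN}$ converges and $(\dmapX{p}(\xbs_{n_k}))_{k\in\bbN}$ converges weakly-$\star$. The telescoping computation in the proof of Theorem~\ref{thm:as_lin_convergence} then carries over with $p$ replaced by $q$ in the residual exponents; the only substantive change is that the cross term is controlled by
\begin{align*}
|{\rm I}_{k,\ell}| \leq \sum_{n=n_\ell}^{n_k-1}\mu_{n+1}\widehat\Psi(\xbs_n)^{q-1}\widehat\Psi(\xbs_{n_k}) \leq \sum_{n\geq n_\ell}\mu_{n+1}\widehat\Psi(\xbs_n)^{q}\leq qN^q\sum_{n\geq n_\ell}\mu_{n+1}\Psi(\xbs_n)\longrightarrow0,
\end{align*}
so that $\bregman{\xbs_{n_\ell}}{\xbs_{n_k}}\to0$ and, by \cite[Theorem 2.12(e)]{SLS_06}, $(\xbs_{n_k})_{k\in\bbN}$ is Cauchy; its strong limit $\widehat\xbs$ satisfies $\widehat\Psi(\widehat\xbs)=0$, i.e. $\VA\widehat\xbs=\ybs$, hence $\widehat\xbs\in\CX_{\min}$.

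To reach the whole sequence I would argue pathwise, since $\widehat\xbs$ may depend on the realisation. Applying \eqref{eqn:descent_property-2} with $\widehat\xbs$ in place of $\xref$: because $\VA\widehat\xbs=\ybs$ one has $\DP{\gbs_{k+1}}{\xbs_k-\widehat\xbs}=\yN{\VA_{i_{k+1}}\xbs_k-\ybs_{i_{k+1}}}^q\ge0$, and boundedness of $(\xbs_k)_{k\in\bbN}$ with $\widehat\xbs$ fixed bounds the noise term pathwise by $c'\mu_{k+1}^{p^\ast}$, so $\widehat\Delta_k:=\bregman{\xbs_k}{\widehat\xbs}$ satisfies $\widehat\Delta_{k+1}\le\widehat\Delta_k+c'\mu_{k+1}^{p^\ast}$ with a summable perturbation and therefore converges. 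Along $(n_k)$, continuity of the Bregman distance in the first argument (Theorem~\ref{thm:bregman_properties}(vi); $\CX$ is $p$-convex hence uniformly convex and reflexive, and smooth) gives $\widehat\Delta_{n_k}\to\bregman{\widehat\xbs}{\widehat\xbs}=0$, so $\widehat\Delta_k\to0$, and $\xN{\xbs_k-\widehat\xbs}^p\le\tfrac{p}{C_p}\widehat\Delta_k\to0$ by $p$-convexity — this is the stated a.s. convergence to a solution. If in addition $\dmapX{p}(\xbs_0)\in\overline{\range(\VA^\ast)}$, then \eqref{eqn:sgd} and $\dmapX{p}\circ\dmapXs{p}=\mathrm{id}$ give $\dmapX{p}(\xbs_k)=\dmapX{p}(\xbs_{k-1})-\mu_k\gbs_k\in\overline{\range(\VA^\ast)}$ by induction, so continuity of $\dmapX{p}$ on bounded sets yields $\dmapX{p}(\widehat\xbs)\in\overline{\range(\VA^\ast)}$; since $\VA(\widehat\xbs-\xref)=0$, Lemma~\ref{lem:minnorm} forces $\widehat\xbs=\xref$ and hence $\bregman{\xbs_k}{\xref}\to0$ a.s. Finally, $\bregman{\xbs_k}{\xref}$ is bounded by a deterministic constant (Lemma~\ref{lem:bounded_q_kaczmarz}), so a.s. convergence together with dominated convergence gives $\bbE[\bregman{\xbs_k}{\xref}]\to0$. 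The single genuine obstacle is the sublinear exponent $(p^\ast-1)(q-1)$ in the expected recursion, which is circumvented exactly by the a priori boundedness of Lemma~\ref{lem:bounded_q_kaczmarz}; the rest is a transcription of the linear-Kaczmarz proof with $p$ replaced by $q$ in the residual terms and with a pathwise (rather than in-expectation) handling of the second descent recursion.
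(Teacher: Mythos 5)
Your proposal is correct and follows essentially the same route as the paper: both use the a priori boundedness from Lemma~\ref{lem:bounded_q_kaczmarz} to make the perturbation term in the conditional descent recursion summable (you bound $\Delta_k^{(p^\ast-1)(q-1)}$ by a constant, the paper bounds $\Psi(\xbs_k)^{p^\ast/q^\ast}$ — the same idea), then apply Robbins--Siegmund, replay the subsequence-extraction and telescoping argument of Theorem~\ref{thm:as_lin_convergence} with $q$ in place of $p$, and finish with the range condition and a bounded-convergence argument (your dominated convergence is equivalent to the paper's Vitali step). No gaps.
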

\begin{proof}
To establish the a.s. convergence of iterates, we first take the conditional
expectation of the descent property \eqref{eqn:descent_property-2} and obtain
\begin{equation}\label{eqn:descent-proper-22}
\bbE_k[\Delta_{k+1}]\leq\Delta_k-\mu_{k+1}\DP{\bbE_k[\gbs_{k+1}]}{\xbs_k-\xref} + \frac{G_{p^\ast}}{p^\ast}\mu_{k+1}^{p^\ast}\bbE_k\big[\xsN{\gbs_{k+1}}^{p^\ast}\big].
\end{equation}
We now have
$\DP{\bbE_k[\gbs_{k+1}]}{\xbs_k-\xref}=\DP{\partial\Psi(\xbs_k)}{\xbs_k-\xref}=q\Psi(\xbs_k)$,
and 
\begin{equation*}
    \xsN{g(\xbs,\ybs,i)}\leq L_{\max} \yN{\VA_i\xbs-\ybs_i}^{q-1}.
\end{equation*}
Then taking the conditional expectation of $\xsN{g(\xbs,\ybs,i)}^{p^\ast}$ yields
\begin{align*}
    \bbE\big[\xsN{g(\xbs,\ybs,i)}^{p^\ast}\big] &\leq L_{\max}^{p^\ast} \bbE\Big[\yN{\VA_i\xbs-\ybs_i}^{p^\ast(q-1)}\Big]=L_{\max}^{p^\ast} \bbE\Big[(\yN{\VA_i\xbs-\ybs_i}^{q})^{\frac{p^\ast}{q^\ast}}\Big].
\end{align*}
We have $0<\frac{p^\ast}{q^\ast}\leq 1$, with the equality achieved only if $p^\ast=q^\ast=2$.
In the latter case, it trivially follows that $\bbE[\xsN{g(\xbs,\ybs,i)}^{p^\ast}] \leq qL_{\max}^{p^\ast}\Psi(\xbs)$.
If $0<\frac{p^\ast}{q^\ast}<1$, by Jensen's inequality, we have
\begin{align*}
    \bbE\big[\xsN{g(\xbs,\ybs,i)}^{p^\ast}\big] &\leq L_{\max}^{p^\ast} \bbE\Big[(\yN{\VA_i\xbs-\ybs_i}^{q})^{\frac{p^\ast}{q^\ast}}\Big]\leq L_{\max}^{p^\ast} (\bbE[\yN{\VA_i\xbs-\ybs_i}^{q}])^{\frac{p^\ast}{q^\ast}}\leq L_{\max}^{p^\ast} q^{\frac{p^\ast}{q^\ast}} \Psi(\xbs)^{\frac{p^\ast}{q^\ast}}.
\end{align*}
Plugging this estimate into the conditional descent property \eqref{eqn:descent-proper-22} yields
\begin{align*}
    \bbE_k[\Delta_{k+1}]\leq \Delta_k -q\mu_{k+1}\Psi(\xbs_k) + L_{\max}^{p^\ast}q^\frac{p^\ast}{q^\ast}\frac{G_{p^\ast}}{p^\ast}\mu_{k+1}^{p^\ast} \Psi(\xbs_k)^{\frac{p^\ast}{q^\ast}}.
\end{align*}
Since the sequence $(\xbs_k)_{k\in\bbN}$ is uniformly bounded by Lemma \ref{lem:bounded_q_kaczmarz}, so is $(\Psi(\xbs_k))_{k\in\bbN}$, and we thus have
\[
\sum_{k=0}^\infty\mu_{k+1}^{p^\ast} \Psi(\xbs_k)^{\frac{p^\ast}{q^\ast}} \leq C \sum_{k=0}^\infty\mu_{k+1}^{p^\ast}<\infty.
\]
Thus, we can apply Robbins-Siegmund theorem for almost super-martingales, and deduce 
that $(\Delta_k)_{k\in\bbN}$ converges a.s. to a non-negative random variable $\Delta_\infty$.
Moreover, $\sum_{k=0}^\infty \mu_{k+1}\Psi(\xbs_k)<\infty$ holds a.s. By repeating the argument for Theorem \ref{thm:as_lin_convergence}, there exists a subsequence 
$(\xbs_{k_j})_{j\in\bbN}$ that a.s. converges to some $\widehat\xbs\in\CX_{\min}$, 
and hence $\Delta_\infty=0$, as desired. Moreover, by Lemma \ref{lem:bounded_q_kaczmarz}, 
the sequence $(\Delta_k)_{k\in\bbN}$ is bounded, and thus uniformly integrable.
Since it converges to $0$ a.s., from Vitali's theorem it follows that $\lim_{k\rightarrow
\infty}\bbE[\bregman{\xbs_k}{\xref}]=0$.
\end{proof}

The results in Theorem \ref{thm:L1_q_Kaczmarz_convergence} are similar to that of Theorem \ref{thm:L1_lin_convergence}, but the generality of the latter is compensated for by an additional step-size assumption ensuring boundedness of iterates $(\xbs_k)_{k\in\bbN}$.

\subsection{Convergence rates for conditionally stable operators}

Theorem \ref{thm:L1_lin_convergence} states the conditions needed for the convergence of Bregman distances in expectation.
However, it does not provide convergence rates. In order to obtain convergence rates, one needs additional conditions on the MNS $\xref$, which are collectively known as source conditions. One approach is via conditional stability:
for a locally conditionally stable operator, we can extract convergence in expectation and quantify the convergence speed.
Conditional stability is known for many inverse problems for PDEs, and has been used extensively to investigate regularised solutions \cite{ChengYamamoto:2000,EggerHofmann:2018}.
It is useful for analysing ill-posed problems that are locally well-posed, and in case of a (possibly) non-linear forward operator $F$ it is of the form
\begin{align}\label{eqn:cond_stab_measure}
\xN{\xbs_1-\xbs_2}\leq \Phi(\yN{F(\xbs_1)-F(\xbs_2)}),\quad \forall \xbs_1,\xbs_2\in\CM\subset\CX,
\end{align}
where $\Phi:[0,\infty)\rightarrow[0,\infty)$ with $\Phi(0)=0$ is a continuous, non-decreasing function, and $\CM$ is typically a ball in the ambient norm \cite{H94}.
In Banach space settings, the conditional stability needs to be adjusted, by replacing the left hand side of \eqref{eqn:cond_stab_measure} with a non-negative error measure \cite{CHL14}. Since the most relevant error measure for Banach space analysis is the Bregman distance $\bregman{\xbs_1}{\xbs_2}$, a H\"{o}lder type stability estimate then reads: for some $\alpha\ge1$ and $C_\alpha>0$
\begin{equation}
\bregman{\xbs}{\xref}^\alpha\le C_\alpha^{-1} \yN{\VA\xbs-\VA\xref}^p.\label{eqn:cond-stab}
\end{equation}

Now we give a convergence rate under conditional stability bound \eqref{eqn:cond-stab}. The constant $C_N$ appears in Lemma \ref{lem:Kaczmarz-basic} and denotes the norm equivalence constant.
\begin{theorem}\label{thm:lin-main}
Let the forward operator $\VA$ satisfy the conditional stability bound
\eqref{eqn:cond-stab} for some $\alpha\ge1$ and $C_\alpha>0$.
Let $\dmapX{p}(\xbs_0)\in\overline{\range(\VA^\ast)}$, and for $C_k=C_NC_\alpha(1 - L_{\max}^{p^\ast}\frac{G_{p^\ast}}{p^\ast}\mu_{k}^{p^\ast-1})>0$, the step-sizes satisfy $\sum_{k=1}^\infty \mu_{k} C_k=\infty$.
Then there holds
\begin{equation*}
    \lim_{k\rightarrow\infty}\bbE[\bregman{\xbs_{k}}{\xref}]=0.
\end{equation*}
Moreover,
\[
\bbE[\bregman{\xbs_{k}}{\xref}]\leq\left\{\begin{aligned}
   \frac{\bregman{\xbs_{0}}{\xref}}{\Big(1+(\alpha-1)\bregman{\xbs_{0}}{\xref}^{\alpha-1}\sum_{j=1}^k \mu_{j}C_j\Big)^{\frac{1}{\alpha-1}}}, & \quad \text{ if } \alpha>1,\\
   {\exp\Big(-\sum_{j=1}^k \mu_{j}C_j\Big)\bregman{\xbs_{0}}{\xref},} &\quad\text{ if } \alpha=1. 
\end{aligned}\right.
\]
\end{theorem}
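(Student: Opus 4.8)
The plan is to convert the stochastic descent inequality of Lemma~\ref{lem:descent_property} into a deterministic scalar recursion for $a_k := \bbE[\bregman{\xbs_k}{\xref}]$, using the conditional stability bound \eqref{eqn:cond-stab} to produce the driving term, and then to solve that recursion. Write $\Delta_k=\bregman{\xbs_k}{\xref}$. Taking $\bbE_k[\cdot]$ of \eqref{eqn:descent_property} with $\widehat\xbs=\xref$ and proceeding exactly as in the proof of Theorem~\ref{thm:as_lin_convergence} --- i.e. using $\DP{\bbE_k[\gbs_{k+1}]}{\xbs_k-\xref}=\DP{\partial\Psi(\xbs_k)}{\xbs_k-\xref}=p\Psi(\xbs_k)$ from Lemma~\ref{lem:Kaczmarz-basic} and $\bbE_k[\xsN{\gbs_{k+1}}^{p^\ast}]\le pL_{\max}^{p^\ast}\Psi(\xbs_k)$ --- I would obtain $\bbE_k[\Delta_{k+1}]\le\Delta_k-p\mu_{k+1}\bigl(1-L_{\max}^{p^\ast}\tfrac{G_{p^\ast}}{p^\ast}\mu_{k+1}^{p^\ast-1}\bigr)\Psi(\xbs_k)$.

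Next I would lower-bound $\Psi(\xbs_k)$ by a power of the Bregman distance. By Lemma~\ref{lem:Kaczmarz-basic}, $\Psi(\xbs_k)\ge\tfrac{C_N}{p}\yN{\VA\xbs_k-\ybs}^p$, and since $\VA\xref=\ybs$ the stability estimate \eqref{eqn:cond-stab} gives $\yN{\VA\xbs_k-\ybs}^p=\yN{\VA\xbs_k-\VA\xref}^p\ge C_\alpha\,\Delta_k^\alpha$. Substituting and recalling $C_{k+1}=C_NC_\alpha\bigl(1-L_{\max}^{p^\ast}\tfrac{G_{p^\ast}}{p^\ast}\mu_{k+1}^{p^\ast-1}\bigr)>0$ yields the key one-step bound $\bbE_k[\Delta_{k+1}]\le\Delta_k-\mu_{k+1}C_{k+1}\Delta_k^\alpha$. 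In particular $(\Delta_k)_k$ is a non-negative supermartingale, so $a_k$ is finite, non-increasing and $a_k\le a_0=\bregman{\xbs_0}{\xref}$; taking full expectations (tower property) and applying Jensen's inequality (convexity of $t\mapsto t^\alpha$ for $\alpha\ge1$), $\bbE[\Delta_k^\alpha]\ge a_k^\alpha$, I reach the deterministic recursion $a_{k+1}\le a_k-\mu_{k+1}C_{k+1}a_k^\alpha$.

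Finally I would solve this recursion. For $\alpha=1$ it reads $a_{k+1}\le a_k(1-\mu_{k+1}C_{k+1})$; since $a_{k+1}\ge0$ the factors lie in $[0,1]$, so iterating and using $1-x\le e^{-x}$ gives $a_k\le\exp\!\bigl(-\sum_{j=1}^k\mu_jC_j\bigr)\bregman{\xbs_0}{\xref}$. For $\alpha>1$, I would assume without loss of generality that $a_k>0$ for all $k$ (otherwise $a_k$ vanishes eventually and the bound is trivial), divide by $a_k$, and note $\mu_{k+1}C_{k+1}a_k^{\alpha-1}=1-a_{k+1}/a_k\in[0,1)$; raising $a_{k+1}\le a_k\bigl(1-\mu_{k+1}C_{k+1}a_k^{\alpha-1}\bigr)$ to the power $-(\alpha-1)<0$ and invoking Bernoulli's inequality $(1-x)^{-(\alpha-1)}\ge 1+(\alpha-1)x$ gives $a_{k+1}^{-(\alpha-1)}\ge a_k^{-(\alpha-1)}+(\alpha-1)\mu_{k+1}C_{k+1}$. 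Telescoping from $a_0=\bregman{\xbs_0}{\xref}$ and inverting yields precisely the stated bound. In both cases the hypothesis $\sum_{j=1}^\infty\mu_jC_j=\infty$ forces $a_k\to0$, which is the assertion $\lim_k\bbE[\bregman{\xbs_k}{\xref}]=0$.

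The argument is largely mechanical once the recursion is in place; the points that need care are (i) the reduction from the stochastic descent inequality to the scalar recursion for $a_k$, where one must track integrability (which follows inductively from $a_{k+1}\le a_k$ and $a_0<\infty$) before applying the tower property and Jensen, and (ii) the manipulation of the nonlinear recursion for $\alpha>1$ via the substitution $b_k=a_k^{1-\alpha}$ and Bernoulli's inequality, together with the degenerate case $\Delta_k=0$. The range condition $\dmapX{p}(\xbs_0)\in\overline{\range(\VA^\ast)}$ plays the same role as in Theorem~\ref{thm:as_lin_convergence}: it pins the relevant solution down to the unique MNS $\xref$ featuring in \eqref{eqn:cond-stab}.
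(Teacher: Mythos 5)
Your proposal is correct and follows essentially the same route as the paper: the same one-step bound $\bbE_k[\Delta_{k+1}]\le\Delta_k-\mu_{k+1}C_{k+1}\Delta_k^\alpha$ obtained from Lemma \ref{lem:descent_property}, Lemma \ref{lem:Kaczmarz-basic} and the stability estimate \eqref{eqn:cond-stab}, followed by Jensen's inequality to pass to the deterministic recursion for $\bbE[\Delta_k]$. The only cosmetic differences are that for $\alpha>1$ you reprove Polyak's recursion lemma (the paper's Lemma \ref{lem:polyak_series}) inline via Bernoulli's inequality rather than citing it, and you deduce $\bbE[\Delta_k]\to0$ directly from the rate bounds, whereas the paper gives a separate contradiction argument; both are valid.
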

\begin{proof}
Let $\Delta_k:=\bregman{\xbs_k}{\xref}$. The proof of Theorem \ref{thm:as_lin_convergence} and
the conditional stability bound \eqref{eqn:cond-stab} imply
\begin{align}\label{eqn:decreasing}
\bbE_k[\Delta_{k+1}] &\leq \Delta_{k}-p\mu_{k+1}\LRR{1-L_{\max}^{p^\ast}\frac{G_{p^\ast}}{p^\ast}\mu_{k+1}^{p^\ast-1}}\Psi(\xbs_k)\\
&\leq\Delta_k-p\mu_{k+1}\frac{C_NC_\alpha}{p}\Big(1 - L_{\max}^{p^\ast}\frac{G_{p^\ast}}{p^\ast}\mu_{k+1}^{p^\ast-1}\Big) \Delta_k^{\alpha},\nonumber
\end{align}
{since by Lemma \ref{lem:Kaczmarz-basic}, there exists a $C_N>0$ such that
$\Psi(\xbs)\geq \frac{C_N}{p}\yN{\VA\xbs-\ybs}^p$.}
Taking the full expectation and using Jensen's inequality lead to
\[ \bbE[\Delta_{k+1}] \leq \bbE[\Delta_k] -\mu_{k+1}C_{k+1} \bbE[\Delta_k]^\alpha.\]
Since $C_{k+1}>0$ by assumption, $(\bbE[\Delta_k])_{k\in\bbN}$ is a monotonically decreasing sequence.
By the convexity of the function $x\mapsto x^\alpha$ (for $\alpha\geq1$), for any $\epsilon>0$ and $x\ge\epsilon$, we have $\epsilon^\alpha\geq \frac{\epsilon}{x} x^\alpha$. We claim that for every $\epsilon>0$, there exists a $k_\epsilon\in\bbN$ such that $\bbE[\Delta_k]\le\epsilon$ for all $k\ge k_\epsilon$.
Assuming the contrary, $\bbE[\Delta_k]\ge\epsilon$ for all $k$, gives
\begin{align*}\bbE[\Delta_{k+1}]\leq \bbE[\Delta_k]-\mu_{k+1}C_{k+1}\bbE[\Delta_k]^{\alpha}\le \bbE[\Delta_k]-\mu_{k+1}C_{k+1}\epsilon^\alpha\le\Delta_0-\epsilon^\alpha\sum_{j=1}^{k+1} \mu_{j}C_j\rightarrow -\infty, \end{align*}
since $\sum_{j=1}^\infty \mu_{j}C_j=\infty$ by assumption, which is a contradiction.
Therefore, $\lim_{k\rightarrow\infty}\bbE[\Delta_k]=0$.
For $\alpha>1$, by Polyak's inequality (cf. Lemma \ref{lem:polyak_series}), we have
\begin{align*}
\bbE[\Delta_{k+1}]\leq \frac{\Delta_0}{\Big(1+(\alpha-1)\Delta_0^{\alpha-1} \sum_{j=1}^{k+1} \mu_{j} C_j\Big)^{\frac{1}{\alpha-1}}}.
\end{align*}
Meanwhile, for $\alpha=1$, {using the inequality $1-x\leq e^{-x}$ for $x\geq0$}, a direct computation yields
\begin{align*}
\bbE[\Delta_{k+1}] \leq (1-\mu_{k+1}C_{k+1})\bbE[\Delta_{k}]\leq \prod_{j=1}^{k+1} (1-\mu_jC_j) \Delta_0 \leq{ \exp\Big(-\sum_{j=1}^{k+1} \mu_jC_j\Big)\Delta_0,}
\end{align*}
completing the proof of the theorem.
\end{proof}

\begin{remark}
We have the following comments on Theorem \ref{thm:lin-main}.
\begin{itemize}
\item[(i)] {The estimates  for $\alpha>1$ and $\alpha=1$ in Theorem \ref{thm:lin-main} are consistent in the sense that
\begin{equation*}
   \lim_{\alpha\searrow 1} 
   \frac{\bregman{\xbs_{0}}{\xref}}{\Big(1+(\alpha-1)\bregman{\xbs_{0}}{\xref}^{\alpha-1}\sum_{j=1}^k \mu_{j}C_j\Big)^{\frac{1}{\alpha-1}}} = \exp\Big(-\sum_{j=1}^k \mu_{j}C_j\Big)\bregman{\xbs_{0}}{\xref}.
\end{equation*}}
\item[(ii)] {While it might seem counter-intuitive, $\alpha=1$ gives a better convergence rate than $\alpha>1$, because of the following
\begin{align*}
\bregman{\xbs}{\xref}^\alpha\ge\bregman{\xbs}{\xref}^{\tilde\alpha} \text{ if and only if } \alpha\log\bregman{\xbs}{\xref} \ge \tilde\alpha\log\bregman{\xbs}{\xref}.
\end{align*}
Hence, whenever $\bregman{\xbs}{\xref}<1$, we have $\bregman{\xbs}{\xref}\ge\bregman{\xbs}{\xref}^{\alpha}$ for $\alpha>1$. 
Plugging this into the conditional stability bound \eqref{eqn:cond-stab} yields
\begin{align*}
\bregman{\xbs}{\xref}^\alpha\le\bregman{\xbs}{\xref}\le C_1^{-1} \yN{\VA\xbs-\VA\xref}^p=C_1^{-1}pN\Psi(\xbs).
\end{align*}
Meanwhile, the proof of Theorem \ref{thm:lin-main} uses the conditional stability bound to establish a relationship between the objective value and the Bregman distance to the MNS $\xbs^\dag$, cf. \eqref{eqn:decreasing}.
Putting these together gives that $\alpha=1$ provides a greater decrease of the expected Bregman distance, once we are close enough to the solution.} 
\end{itemize}
\end{remark}

The conditional stability estimate \eqref{eqn:cond-stab} for a linear operator $\VA$ {implies} its injectivity. Then the objective $\Psi(\xbs)$ is strongly convex.
Under condition \eqref{eqn:cond-stab}, there can indeed be only one solution: if $\VA\tilde\xbs=\VA\xbs$, then $\bregman{\tilde\xbs}{\xbs}=0$ follows from \eqref{eqn:cond-stab}.
The step-size condition $\sum_{k=1}^\infty \mu_{k} C_k=\infty$ is weaker than that in Theorem \ref{thm:L1_lin_convergence}.
Namely, it follows from step-size conditions in Theorem \ref{thm:as_lin_convergence}, since
$$\sum_{k=1}^\infty \mu_{k} C_k=C_NC_\alpha \Big(\sum_{k=1}^\infty\mu_k- L_{\max}^{p^\ast}\frac{G_{p^\ast}}{p^\ast}\sum_{k=1}^\infty\mu_k^{p^\ast}\Big)=\infty$$ 
holds if $\sum_{k=1}^\infty \mu_{k}=\infty$ and $\sum_{k=1}^\infty\mu_k^{p^\ast}<\infty$.
Further, if there exists a $C>0$ such that $1 - L_{\max}^{p^\ast}\frac{G_{p^\ast}}{p^\ast}\mu_{k}^{p^\ast-1}>C$ holds for all $k\in\bbN$, e.g. if $\mu_k$ is a constant satisfying this condition, then $\sum_{k=1}^\infty \mu_kC_k=\infty$ is weaker than the conditions in Theorem \ref{thm:as_lin_convergence}, since the condition $\sum_{k=1}^\infty\mu_k^{p^\ast}<\infty$ is no longer needed for convergence, and $\sum_{k=1}^\infty\mu_k=\infty$ suffices. {Moreover,
we can choose constant step-sizes. Indeed, setting $\mu_k=\mu_0$, with $1-L_{\max}^{p^\ast}\frac{G_{p^\ast}}{p^\ast}\mu_0^{p^\ast-1}=\frac{1}{2}$, 
we get an exponential convergence rate for $\alpha=1$, since $C_k=\frac{C_NC_\alpha}{2}$, we have
\begin{align*}
\bbE[\Delta_{k+1}] &\leq (1-\mu_0C_{k+1})\bbE[\Delta_{k}]\leq \bigg(1-2^{-1-1/(p^\ast-1)}L_{\max}^{-p^\ast/p^\ast-1}\Big(\frac{p^\ast}{G_{p^\ast}}C_NC_\alpha\Big)^{1/p^\ast-1}\bigg)^k\bbE[\Delta_{0}]\\
&\leq \bigg(1-2^{-p}L_{\max}^{-p}\Big(\frac{p^\ast}{G_{p^\ast}}\Big)^{p^\ast/p}C_NC_\alpha\bigg)^k\Delta_{0}.
\end{align*}
Note that this convergence rate is largely comparable with that in the Hilbert case: the conditional stability bound implies the strict convexity of the quadratic objective $\Psi(\xbs)$, and the SGD is known to converge exponentially fast (see e.g. \cite[Theorem 3.1]{Gower:2019}), with the rate determined by a variant of the condition number.}

\begin{remark}
The conditional stability bound \eqref{eqn:cond-stab} is stated globally. However,
such conditions are often valid only locally. A local definition could have been 
employed in \eqref{eqn:cond-stab}, with minor modifications of the argument. Indeed, by the argument of Theorem 
\ref{thm:L1_lin_convergence}, we appeal to Lemma \ref{lem:iterate_boundedness}, 
showing that the Bregman distances of the iterates are non-increasing. Thus, it
suffices to assume that the initial point $\xbs_0$ is sufficiently close to the MNS $\xbs^\dag$.
\end{remark}
\begin{remark}
Conditional stability is intimately tied with classical source conditions.
For example, as shown in \cite{SG+09}, assuming $\alpha=1$ in \eqref{eqn:cond-stab} allows to show a variational inequality
\[ \DP{\dmapX{p}(\xref)}{\xbs-\xref}\leq \xN{\xref}^{p-1}C_\alpha^{-1}(pC_p^{-1})^{1/p}\yN{\VA(\xbs-\xref)}.\]
Then Hahn-Banach theorem and \cite[Lemma 8.21]{SG+09} give the canonical range type condition $\dmapX{p}(\xref)=\VA^\ast \wbs$, for $\wbs\in\CX$ such that $\xN{\wbs}\leq1$.
Connections between source conditions and conditional stability estimates have been studied, e.g. for linear operators in Hilbert spaces \cite{T+13} and in $\CL^p$ spaces \cite{CY21}.
Moreover, variational source conditions often imply conditional stability estimates \cite{WH17}, and in case of bijective and continuous operators they are trivially inferred by a standard source condition {\rm(}albeit only in a possibly small neighbourhood around the solution{\rm)}.
See the book \cite{W19} about the connections between source conditions and conditional stability estimates, and \cite{I17} for inverse problems for differential equations.
\end{remark}


\section{Regularising property}\label{sec:regularisation}

In practice, we often do not have access to the exact data $\ybs$ but only to noisy 
observations $\ybs^\delta$, such that $\yN{\ybs^\delta-\ybs}{\leq}\delta$. The convergence study 
in the presence of observational noise requires a different approach, since the sequence of objective 
values $(\yN{\VA\xbs_k^\delta-\ybs^\delta}^p)_{k\in\bbN}$ generally will not converge to $0$. In this section 
we show that SGD has a regularising effect, in the sense that the expected error 
$\bbE[\bregman{\xbs_{k(\delta)}^{\delta}}{\xref}]$ converges to $0$ as the noise level $\delta$ 
decays to $0$, for properly selected stopping indices $k(\delta)$.

Let $(\xbs_k)_{k\in\bbN}$ and $(\xbs_k^\delta)_{k\in\bbN}$ be the 
noiseless and noisy iterates, defined respectively by 
\begin{align}
\xbs_{k+1} & = \dmapXs{p}\LRR{\dmapX{p}(\xbs_k) - \mu_{k+1}\gbs_{k+1}},\quad \mbox{with }
\gbs_{k+1} = g(\xbs_{k},\ybs,i_{k+1}),\label{eqn:sgd_cleaniterates}\\
\xbs_{k+1}^\delta &= \dmapXs{p}\LRR{\dmapX{p}(\xbs_k^\delta) - \mu_{k+1}\gbs^\delta_{k+1}},
\quad \mbox{with }\gbs_{k+1}^\delta = g(\xbs_{k}^\delta,\ybs^\delta, i_{k+1}).\label{eqn:sgd_noisyiterates}
\end{align}

The key step in proving the regularising property is to show the stability
of SGD iterates with respect to noise. The noise enters into the iterations 
through the update directions $\gbs^\delta_{k+1}$ and thus, the stability of the 
iterates requires that of update directions. This however requires imposing suitable assumptions 
on the observation space $\CY$ since in general, the single valued duality maps $\svaldmapY{p}$ are 
continuous only at $0$. If $\CY$ is uniformly smooth, the corresponding duality maps 
are also smooth. This assumption is also needed for deterministic iterates, cf. 
\cite[Proposition 6.17]{TBHK_12} or \cite[Lemma 9]{M18}. Thus we make the following assumption. 
\begin{assumption}\label{ass:smooth-Y}
The Banach space $\CX$ is $p$-convex and uniformly smooth, and $\CY$ is uniformly smooth.
\end{assumption}

We then have the following stability result on the iterates with respect to noise, whose elementary but lengthy proof is deferred to the appendix.
\begin{lemma}\label{lem:coupled_noise_convergence}
Let Assumption \ref{ass:smooth-Y} hold.
Consider the iterations \eqref{eqn:sgd_cleaniterates} and \eqref{eqn:sgd_noisyiterates} with 
the same initialisation $\xbs_0^\delta=\xbs_0$, and following the same path {\rm(}i.e.  
using same random indices $i_{k}${\rm)}. Then, for any fixed $k\in\bbN$, we have
\begin{align*}
    \lim_{\delta\searrow0}\bbE[\bregman{\xbs^{\delta}_k}{\xbs_k}]=\lim_{\delta\searrow0}\bbE[\xN{\xbs^{\delta}_k -\xbs_k}]=\lim_{\delta\searrow0}\bbE[\xsN{\dmapX{p}(\xbs_k^\delta)-\dmapX{p}(\xbs_k)}]=0.
\end{align*}
\end{lemma}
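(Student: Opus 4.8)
\textbf{Proof plan for Lemma \ref{lem:coupled_noise_convergence}.}

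The plan is to proceed by induction on the iteration index $k$, tracking simultaneously the three quantities $\bbE[\bregman{\xbs_k^\delta}{\xbs_k}]$, $\bbE[\xN{\xbs_k^\delta-\xbs_k}]$ and $\bbE[\xsN{\dmapX{p}(\xbs_k^\delta)-\dmapX{p}(\xbs_k)}]$. The base case $k=0$ is trivial since $\xbs_0^\delta=\xbs_0$. For the inductive step, I would first observe that the $p$-convexity of $\CX$ (Theorem \ref{thm:bregman_properties}(iii)) controls $\xN{\xbs_k^\delta-\xbs_k}^p$ by a constant times $\bregman{\xbs_k^\delta}{\xbs_k}$, while uniform smoothness of $\CX$ together with Theorem \ref{rem:dmap_properties}(iv) controls $\xsN{\dmapX{p}(\xbs_k^\delta)-\dmapX{p}(\xbs_k)}^{p^\ast}$ by $C\max\{1,\xN{\xbs_k^\delta},\xN{\xbs_k}\}^{p}\,\overline{\rho}_\CX(\xN{\xbs_k^\delta-\xbs_k})^{p^\ast}$; since the iterates are uniformly bounded (by the boundedness lemmas referenced earlier, applied under the step-size hypotheses, or simply because $k$ is fixed and each update is a bounded perturbation), these two reductions show it suffices to prove $\bbE[\bregman{\xbs_k^\delta}{\xbs_k}]\to0$, and the other two limits follow. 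So the real work is the recursion for the Bregman term.

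To set up that recursion I would use the three-point identity \eqref{eqn:3_point_id} or a direct expansion of $\bregman{\xbs_{k+1}^\delta}{\xbs_{k+1}}$ in terms of $\dmapX{p}(\xbs_{k+1}^\delta)-\dmapX{p}(\xbs_{k+1}) = \big(\dmapX{p}(\xbs_k^\delta)-\dmapX{p}(\xbs_k)\big) - \mu_{k+1}\big(\gbs_{k+1}^\delta-\gbs_{k+1}\big)$, which is the crucial algebraic simplification afforded by the duality-map update rule. Combining this with the $p^\ast$-smoothness inequality from \cite[Corollary 5.8]{C_09} (already invoked in the proof of Lemma \ref{lem:descent_property}), one obtains a bound of the schematic form
\begin{align*}
\bregman{\xbs_{k+1}^\delta}{\xbs_{k+1}} \;\le\; C\Big(\bregman{\xbs_{k}^\delta}{\xbs_{k}} + \mu_{k+1}\,\xsN{\gbs_{k+1}^\delta-\gbs_{k+1}}\cdot(\text{bounded factors}) + \mu_{k+1}^{p^\ast}\xsN{\gbs_{k+1}^\delta-\gbs_{k+1}}^{p^\ast}\Big),
\end{align*}
so everything reduces to estimating $\bbE[\xsN{\gbs_{k+1}^\delta-\gbs_{k+1}}]$ (and its $p^\ast$-power). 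Here $\gbs_{k+1}^\delta-\gbs_{k+1} = \VA_{i_{k+1}}^\ast\big(\svaldmapY{p}(\VA_{i_{k+1}}\xbs_k^\delta-\ybs_{i_{k+1}}^\delta)-\svaldmapY{p}(\VA_{i_{k+1}}\xbs_k-\ybs_{i_{k+1}})\big)$, and I would split the argument shift into the data-noise part $\ybs^\delta-\ybs$ (of size $\le\delta$) and the iterate-drift part $\VA_{i_{k+1}}(\xbs_k^\delta-\xbs_k)$ (controlled by $\norm{\VA_{i_{k+1}}}$ times the inductive quantity $\xN{\xbs_k^\delta-\xbs_k}$). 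Uniform smoothness of $\CY$ makes $\svaldmapY{p}$ uniformly continuous on bounded sets, so this difference tends to $0$ in the relevant sense; taking $\bbE_k[\cdot]$ and then full expectation, then using the induction hypothesis that $\bbE[\xN{\xbs_k^\delta-\xbs_k}]\to0$ together with $\delta\to0$, closes the recursion for that fixed finite $k$.

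The main obstacle I anticipate is the uniform-continuity / modulus-of-smoothness bookkeeping for $\svaldmapY{p}$ on $\CY$: unlike in Hilbert space, $\svaldmapY{p}$ is only uniformly continuous on bounded subsets of a uniformly smooth space, so I must first establish that the arguments $\VA_{i_{k+1}}\xbs_k^\delta-\ybs_{i_{k+1}}^\delta$ and $\VA_{i_{k+1}}\xbs_k-\ybs_{i_{k+1}}$ stay in a common bounded set uniformly in $\delta\in(0,1]$ — which follows from the uniform boundedness of $(\xbs_k^\delta)$ and $(\xbs_k)$ over that range of $\delta$ at the fixed index $k$ — and then carefully convert the convergence $\xN{\xbs_k^\delta-\xbs_k}\to0$ (which holds in expectation, not pointwise) into convergence of $\bbE[\xsN{\gbs_{k+1}^\delta-\gbs_{k+1}}]$ via, e.g., a uniform-integrability argument (the differences are bounded) combined with convergence in probability. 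This is the step that is ``elementary but lengthy'' and is why the full proof is deferred to the appendix; the remaining pieces are routine given Lemma \ref{lem:descent_property}'s machinery and the geometric inequalities already assembled.
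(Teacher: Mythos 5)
Your plan is correct and follows essentially the same route as the paper's appendix proof: induction on the fixed index $k$, exploiting the additive update identity $\dmapX{p}(\xbs_{k+1}^\delta)-\dmapX{p}(\xbs_{k+1})=\big(\dmapX{p}(\xbs_k^\delta)-\dmapX{p}(\xbs_k)\big)-\mu_{k+1}(\gbs_{k+1}^\delta-\gbs_{k+1})$ together with the $p^\ast$-smoothness of $\CX^\ast$ to set up the Bregman recursion, splitting $\gbs_{k+1}^\delta-\gbs_{k+1}$ into a data-noise part and an iterate-drift part controlled by the uniform smoothness of $\CY$ and Theorem \ref{rem:dmap_properties}(iv), using uniform boundedness of the iterates, relating the three tracked quantities via $p$-convexity and uniform smoothness of $\CX$, and passing to expectations by uniform integrability (Vitali). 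The only cosmetic difference is that the paper packages the recursion through the symmetry $\bregman{\xbs_{k+1}^\delta}{\xbs_{k+1}}=\bregmanS{\xbs_{k+1}}{\xbs_{k+1}^\delta}$ and the bound \eqref{eqn:norm_geq_bregman} rather than the three-point identity, which does not change the substance of the argument.
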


Now we show the regularising property of SGD for suitable stopping indices $k(\delta)$.
\begin{theorem}\label{thm:regularisation_property}
Let Assumption \ref{ass:smooth-Y} hold, and the step-sizes $(\mu_k)_{k\in\bbN}$ satisfy $\sum_{k=1}^\infty\mu_{k}=\infty$, $\sum_{k=1}^\infty \mu_{k}^{p^\ast} <\infty$ and $1 - L_{\max}^{p^\ast}\frac{G_{p^\ast}}{p^\ast}\mu_{k}^{p^\ast-1}>C>0$. If $\lim_{\delta\searrow0}k(\delta)=\infty$ and $\lim_{\delta\searrow0} \delta^p\sum_{\ell=1}^{k(\delta)}\mu_\ell=0$, then
\begin{align*}
    \lim_{\delta\searrow0} \bbE[\bregman{\xbs_{k(\delta)}^{\delta}}{\xref}]=0.
\end{align*}
\end{theorem}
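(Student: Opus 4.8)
The plan is to combine the noiseless convergence result (Theorem~\ref{thm:as_lin_convergence}, together with the monotonicity of Bregman distances from Lemma~\ref{lem:iterate_boundedness}) with the stability estimate of Lemma~\ref{lem:coupled_noise_convergence} via a triangle-inequality-type decomposition. First I would use the three-point identity \eqref{eqn:3_point_id} of Theorem~\ref{thm:bregman_properties}(ii) to write, for the coupled iterates $\xbs_{k(\delta)}^\delta$ and $\xbs_{k(\delta)}$ (run with the same random indices),
\begin{align*}
\bregman{\xbs_{k(\delta)}^\delta}{\xref}=\bregman{\xbs_{k(\delta)}^\delta}{\xbs_{k(\delta)}}+\bregman{\xbs_{k(\delta)}}{\xref}+\DP{\dmapX{p}(\xbs_{k(\delta)})-\dmapX{p}(\xbs_{k(\delta)}^\delta)}{\xref-\xbs_{k(\delta)}},
\end{align*}
and then take expectations. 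The first term tends to $0$ (for fixed $k$) by Lemma~\ref{lem:coupled_noise_convergence}; the cross term is controlled by Cauchy--Schwarz \eqref{eqn:Cauchy_Schwarz} as $\bbE[\xsN{\dmapX{p}(\xbs_{k(\delta)})-\dmapX{p}(\xbs_{k(\delta)}^\delta)}\,\xN{\xref-\xbs_{k(\delta)}}]$, where the second factor is uniformly bounded (iterates lie in a bounded set by Lemma~\ref{lem:iterate_boundedness}/\ref{lem:bregman_bound_xk_bound}) and the first factor again vanishes by Lemma~\ref{lem:coupled_noise_convergence}; and the middle term $\bbE[\bregman{\xbs_{k(\delta)}}{\xref}]\to0$ along the noiseless trajectory by Theorem~\ref{thm:as_lin_convergence} combined with Theorem~\ref{thm:L1_lin_convergence}, since $k(\delta)\to\infty$.

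The subtlety is that Lemma~\ref{lem:coupled_noise_convergence} gives vanishing \emph{only for fixed $k$}, whereas here $k(\delta)\to\infty$. So the argument needs a quantitative version: I would track how the noise propagates through the iterations. The step-size condition $1-L_{\max}^{p^\ast}\frac{G_{p^\ast}}{p^\ast}\mu_k^{p^\ast-1}>C>0$ is exactly what makes the noisy descent inequality contractive enough. Redoing the descent estimate of Lemma~\ref{lem:descent_property} for the noisy iterates with $\widehat\xbs=\xref$, and using $\yN{\VA_i\xbs_k^\delta-\ybs_i^\delta}\le\yN{\VA_i\xbs_k^\delta-\ybs_i}+\delta$ together with Lemma~\ref{lem:Kaczmarz-basic}, one obtains an inequality of the form
\begin{align*}
\bbE_k[\bregman{\xbs_{k+1}^\delta}{\xref}]\le(1+c_1\mu_{k+1}^{p^\ast})\bregman{\xbs_k^\delta}{\xref}-c_2\mu_{k+1}\Psi^\delta(\xbs_k^\delta)+c_3\delta^p\mu_{k+1},
\end{align*}
where $\Psi^\delta$ is the noisy objective and the extra $\delta^p\mu_{k+1}$ term comes from the noise; summing and using $\sum\mu_k^{p^\ast}<\infty$ together with $\delta^p\sum_{\ell=1}^{k(\delta)}\mu_\ell\to0$ bounds $\bbE[\bregman{\xbs_{k(\delta)}^\delta}{\xref}]$ by $\bregman{\xbs_0}{\xref}$ plus a term going to $0$, and moreover gives uniform integrability of the family $(\bregman{\xbs_{k(\delta)}^\delta}{\xref})_\delta$.

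I would then split into two regimes as in the classical regularisation argument: fix $\epsilon>0$, pick $K$ with $\bbE[\bregman{\xbs_K}{\xref}]<\epsilon$ (possible by Theorem~\ref{thm:L1_lin_convergence}), and for $\delta$ small enough that $k(\delta)\ge K$, use the decomposition above with the noiseless iterate at step $K$ as an intermediate anchor — i.e. estimate $\bbE[\bregman{\xbs_{k(\delta)}^\delta}{\xref}]$ against $\bbE[\bregman{\xbs_K}{\xref}]$ plus a noise-stability term that is controlled by Lemma~\ref{lem:coupled_noise_convergence} at the fixed index $K$ and the contraction/summability from the noisy descent inequality for steps $K+1,\dots,k(\delta)$. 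The main obstacle is making this ``anchor at step $K$'' argument rigorous: one must ensure the stability bound between $\xbs_{k(\delta)}^\delta$ and $\xbs_{k(\delta)}$ does not blow up as $k(\delta)\to\infty$, which is precisely where the strict contraction condition $1-L_{\max}^{p^\ast}\frac{G_{p^\ast}}{p^\ast}\mu_k^{p^\ast-1}>C$ and the hypothesis $\delta^p\sum_{\ell=1}^{k(\delta)}\mu_\ell\to0$ enter — the former keeps the homogeneous part from amplifying errors and the latter keeps the accumulated noise input negligible. Once these three pieces (noiseless convergence at a fixed anchor, bounded non-amplification of noise past the anchor, vanishing noise injection) are assembled, letting $\delta\searrow0$ and then $\epsilon\searrow0$ finishes the proof.
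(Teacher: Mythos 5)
Your proposal is correct and follows essentially the same strategy as the paper's proof: a fixed anchor index $K=k_\epsilon$ handled by the three-point identity together with Theorem \ref{thm:L1_lin_convergence} and Lemma \ref{lem:coupled_noise_convergence}, followed by a one-step noisy descent recursion whose positive gradient term is absorbed using $1-L_{\max}^{p^\ast}\frac{G_{p^\ast}}{p^\ast}\mu_k^{p^\ast-1}>C$, leaving an accumulated perturbation of order $\delta^p\sum_{\ell}\mu_\ell$ that vanishes by hypothesis. The only cosmetic difference is in how the noise enters the recursion (you apply the triangle inequality to the residual, while the paper splits the duality pairing and applies Young's inequality with a small parameter $\omega$), which leads to the same conclusion.
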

\begin{proof}
Let $\Delta_k = \bregman{\xbs_{k}}{\xref}$ and $\Delta_k^\delta = \bregman{\xbs_{k}^\delta}{\xref}$. Take any $\delta>0$ and $k\in\bbN$.
By the three point identity \eqref{eqn:3_point_id}, we have
\begin{align}\label{eqn:noisy_3point}
    \Delta_k^\delta &= \bregman{\xbs_k^\delta}{\xbs_k} + \Delta_k  + \DP{\dmapX{p}(\xbs_k)-\dmapX{p}(\xbs_k^\delta)}{\xbs_k-\xref}\nonumber \\
    &\leq \bregman{\xbs_k^\delta}{\xbs_k} + \Delta_k + \xsN{\dmapX{p}(\xbs_k)-\dmapX{p}(\xbs_k^\delta)}\xN{\xbs_k-\xref}.
\end{align}
{Consider a sequence $(\delta_j)_{j\in\bbN}$ decaying to zero.
Taking any $\epsilon>0$, it suffices to find a $j_\epsilon\in\bbN$ such that for all $j\geq j_\epsilon$ we have $\bbE[\Delta_{k(\delta_j)}^{\delta_j}]\leq 4\epsilon$.}
By Theorem \ref{thm:L1_lin_convergence},
there exists a $k_\epsilon\in\bbN$ such that for all $k\geq k_\epsilon$ we have
\begin{align}\label{eqn:noiseless_kbound}
    \bbE[\Delta_k]<\epsilon\quad \text{and}\quad \bbE[\xN{\xbs_k-\xref}]<\epsilon^{1/2}.
\end{align}
Moreover, for any fixed $k_\epsilon$, by Lemma \ref{lem:coupled_noise_convergence},
there exists $j_1\in\bbN$ such that for all $j\geq j_1$ we have
\begin{align}\label{eqn:noisy_kepsilon_bound}
    \bbE[\bregman{\xbs_{k_\epsilon}^{\delta_j}}{\xbs_{k_\epsilon}}] <\epsilon \quad \text{and}\quad \bbE[\xsN{\dmapX{p}(\xbs_{k_\epsilon})-\dmapX{p}(\xbs_{k_\epsilon}^{\delta_j})}]<\epsilon^{1/2}.
\end{align}
Thus, plugging the estimates \eqref{eqn:noiseless_kbound} and \eqref{eqn:noisy_kepsilon_bound} into \eqref{eqn:noisy_3point}, we have $\bbE[\Delta_{k_\epsilon}^{\delta_j}] < 3\epsilon$, for all $j\geq j_1$.
Note, however, that the same does not necessarily hold for all $k\geq k_\epsilon$, and thus for a monotonically increasing sequence of stopping indices $k(\delta_j)$, since  $\bbE[\Delta_{k(\delta_j)}^{\delta_j}]$ are not necessarily monotone.
Instead, taking the expectation of the descent property \eqref{eqn:descent_property} with respect to $\CF_k$ yields
\begin{align*}
    \bbE_k[\Delta_{k+1}^\delta] \leq\Delta_k^\delta -\mu_{k+1}\DP{\bbE_k[\gbs_{k+1}^\delta]}{\xbs_k^\delta-\xref} +pL_{\max}^{p^\ast}\frac{G_{p^\ast}}{p^\ast}\mu_{k+1}^{p^\ast} \Psi(\xbs_k^\delta).
\end{align*}
Then we decompose the middle term into
\begin{align*}
    \DP{\bbE_k[\gbs_{k+1}^\delta]}{\xref-\xbs_k^\delta}
    &=\frac{1}{N}\sum_{i=1}^N\DP{\svaldmapY{p}(\VA_{i}\xbs_k^\delta-\ybs_i^\delta)}{-(\VA_i\xbs_k^\delta-\ybs_i^\delta)+\ybs_i-\ybs_i^\delta} \\
    &=-p\Psi(\xbs_k^\delta) +\frac{1}{N}\sum_{i=1}^N\DP{\svaldmapY{p}(\VA_{i}\xbs_k^\delta-\ybs_i^\delta)}{\ybs_i-\ybs_i^\delta}\\
    &\leq -p\Psi(\xbs_k^\delta) +\frac{1}{N}\sum_{i=1}^N\yN{\VA_{i}\xbs_k^\delta-\ybs_i^\delta}^{p-1}\yN{\ybs_i-\ybs_i^\delta}\\
    &\leq -p\Psi(\xbs_k^\delta) +\delta\frac{1}{N}\sum_{i=1}^N\yN{\VA_{i}\xbs_k^\delta-\ybs_i^\delta}^{p-1},
\end{align*}
where we have used \eqref{eqn:dmap_defined} and the Cauchy-Schwarz inequality.
Taking the full expectation gives
\begin{align*}
    \bbE[\Delta^\delta_{k+1}]&\leq \bbE[\Delta^\delta_{k}] \!- \!p\mu_{k+1} \bbE[\Psi(\xbs_k^\delta)]\!+\!pL_{\max}^{p^\ast}\frac{G_{p^\ast}}{p^\ast}\mu_{k+1}^{p^\ast}\bbE[\Psi(\xbs_k^\delta)] \!+\! \delta\mu_{k+1} \frac{1}{N}\sum_{i=1}^N\bbE[\yN{\VA_{i}\xbs_k^\delta-\ybs_i^\delta}^{p-1}] \\
    &= \bbE[\Delta^\delta_{k}] - p\mu_{k+1}C_{k+1}\bbE[\Psi(\xbs_k^\delta)]+ \delta\mu_{k+1}\frac{1}{N}\sum_{i=1}^N\bbE[\yN{\VA_{i}\xbs_k^\delta-\ybs_i^\delta}^{p-1}]  ,
\end{align*}
where $C_k\!=\!1-L_{\max}^{p^\ast}\frac{G_{p^\ast}}{p^\ast}\mu_k^{p^\ast-1}>C>0$. Now using the Lyapunov inequality
\begin{align*}
    \frac{1}{N}\sum_{i=1}^N\bbE[\yN{\VA_{i}\xbs_k^\delta-\ybs_i^\delta}^{p-1}]\leq\frac{1}{N}\sum_{i=1}^N\Big(\bbE[\yN{\VA_{i}\xbs_k^\delta-\ybs_i^\delta}^{p}]\Big)^{(p-1)/p}=p^{1/p^\ast}\frac{1}{N}\sum_{i=1}^N\Big(\bbE[\Psi_i(\xbs_k^\delta)]\Big)^{1/p^\ast},
\end{align*}
we deduce
\begin{align}\label{eqn:Delta-new}
    \bbE[\Delta^\delta_{k+1}]&\leq \bbE[\Delta^\delta_{k}] - p\mu_{k+1}C_{k+1}\bbE[\Psi(\xbs_k^\delta)]+ \delta\mu_{k+1}p^{1/p^\ast}\frac{1}{N}\sum_{i=1}^N\Big(\bbE[\Psi_i(\xbs_k^\delta)]\Big)^{1/p^\ast}.
\end{align}
Next we remove the exponent in the last term. Using Young's inequality $ab\leq \frac{a^p}{p}\omega^{-p}+\frac{b^{p^\ast}}{p^\ast}\omega^{p^\ast}$, with $a=\delta$ and $b=\bbE[\Psi_i(\xbs_k^\delta)]^{1/p^\ast}$, we have
\begin{align*}
    \frac{1}{N}\sum_{i=1}^N\delta\Big(\bbE[\Psi_i(\xbs_k^\delta)]\Big)^{1/p^\ast} \leq \delta^p\frac{\omega^{-p}}{p} + \bbE\Big[\frac{1}{N}\sum_{i=1}^N\Psi_i(\xbs_k^\delta)\Big]\frac{\omega^{p^\ast}}{p^\ast}\leq\delta^p\frac{\omega^{-p}}{p} + \bbE[\Psi(\xbs_k^\delta)]\frac{\omega^{p^\ast}}{p^\ast}.
\end{align*}
Plugging this back in \eqref{eqn:Delta-new} gives
\begin{align*}
\bbE[\Delta^\delta_{k+1}] &\leq  \bbE[\Delta^\delta_{k}] - p\mu_{k+1}C_{k+1}\bbE[\Psi(\xbs_k^\delta)]+ p^{1/p^\ast}(p^\ast)^{-1}\omega^{p^\ast}\mu_{k+1}\bbE[\Psi(\xbs_k^\delta)]+  p^{-1/p}\delta^p\omega^{-p}\mu_{k+1}.
\end{align*}
Taking $\omega>0$ small enough so that $\omega^{p^\ast}\leq p^\ast p^{1/p}C_k$ (which can be made uniformly on $k$, thanks to the positive lower bound on $C_k$), replacing $k+1$ with $k(\delta)$ and using the inductive argument, we have
\begin{align*}
    \bbE[\Delta^\delta_{k(\delta)}]&\leq \bbE[\Delta^\delta_{k(\delta)-1}] +p^{-1/p}\omega^{-p}\delta^p\mu_{k(\delta)}
    \leq\bbE[\Delta^\delta_{k_\epsilon}] +p^{-1/p}\omega^{-p}\delta^p\sum_{\ell=1}^{k(\delta)}\mu_{\ell}.
\end{align*}
Since $\lim_{\delta\searrow0}\delta^p\sum_{\ell=1}^{k(\delta)}\mu_{\ell}=0$ and $\lim_{\delta\searrow0}k(\delta)=\infty$, there exists $j_2\in\bbN$ such that for all $j\geq j_2$ we have $k(\delta_j)\geq k_\epsilon$ and $p^{-1/p}\omega^{-p}\delta_j^p\sum_{\ell=1}^{k(\delta_j)}\mu_{\ell}<\epsilon$.
{Taking ${j_\epsilon}=j_1\vee j_2$ shows $\bbE[\Delta_{k(\delta_j)}^{\delta_j}] < 4\epsilon$ for all $j\geq j_\epsilon$, and hence the desired claim follows.}
\end{proof}
\begin{remark}
In the constant step-size regime, such as in the case of conditionally stable operators, the correspondence between the noise level and the step-size regime takes a more standard form.
Namely, the condition in Theorem \ref{thm:regularisation_property} reduces to
$\lim_{\delta\searrow0} \delta^p k(\delta) = 0$. In other words, we have $k(\delta)=\CO(\delta^{-p})$, mirroring the traditional conditions in Euclidean spaces. {Note that the condition on $k(\delta)$ is fairly broad, and does not give useful concrete stopping rules directly. Generally, the issue of a posterior stopping rules for stochastic iterative methods is completely open, even for the Hilbert setting \cite{JahnJin:2020}. For a polynomially decaying step-sizes $\mu_k=c_0k^{-\beta}$, the conditions $\frac{1}{p^*}<\beta\leq1$ and $c_0<(\frac{P^\ast}{L_{\max}^{p^\ast}G_{p^\ast}})^{\frac{1}{p^\ast-1}}$ give a valid step-size choice, and the stopping index $k(\delta)$ should satisfy $\lim_{\delta\searrow0}k(\delta)=\infty$ and $\lim_{\delta\searrow0} k(\delta)\delta^\frac{p}{1-\beta}=0$.} \end{remark}

\begin{remark}
{It is of much interest to derive a convergence rate for noisy data under a conditional stability condition as in Theorem \ref{thm:lin-main}, as a natural extension of the regularising property. However, this is still unavailable. Within the current analysis strategy, deriving the rate would require quantitative versions of stability estimates in Lemma \ref{lem:coupled_noise_convergence} in terms of $\delta$ and $k$. Generally the convergence rate analysis for iterative regularisation methods in Banach space remains a very challenging task, and much more work is still needed.} 
\end{remark}


\section{Numerical experiments}\label{sec:experiments}

We present numerical results on two sets of experiments to illustrate distinct features of the SGD \eqref{eqn:sgd}. The first set of experiments deals with an integral operator and the reconstruction of a sparse signal in the presence of either Gaussian or impulse noise. On this model example, we investigate the impact of the number of batches and the choice of the spaces $\CX$ and $\CY$ on the performance of the algorithm. 
{To simplify the study we investigate spaces $\CX$ and $\CY$ that are smooth and convex of power type, and thus the corresponding duality maps are singletons.}
{To facilitate a direct comparison of the SGD with the Landweber method, we count the computational complexity with respect to the number of epochs, i.e. the size $N_b$ of partition defined below.
Note moreover that our implementation of the Landweber method does not use the stepsizes described in \cite[Method 3.1]{SLS_06}, since the latter requires knowledge of quantities that are inconvenient to compute in practice.} 
The second set of experiments is about tomographic reconstruction, with respect to different types of noise. {All the shown reconstructions are obtained with a single stochastic run, as is often done in practice, and the stopping index is determined in a trial and error manner so that the corresponding reconstruction yields small errors.}

\subsection{Model linear inverse problem}
First we consider the following model inverse problem studied in \cite{JinStals:2012}. Let $\kappa:\overline{\Omega}\times\overline{\Omega}\rightarrow\bbR^+$, with $\Omega=(0,1)$, be a continuous function, and define an integral operator $\CT_\kappa:\CL^{r_{\CX}}(\Omega)\rightarrow\CL^{r_{\CY}}(\Omega)$, for $1<r_{\CX},r_{\CY}<\infty$,  by
\begin{align}
    (\CT_\kappa x)(t) = \int_\Omega \kappa(t,s) x(s) ds.
\end{align}
This is a compact linear operator between $\CL^{r_{\CX}}(\Omega)$ and $\CL^{r_{\CY}}(\Omega)$, with the adjoint $\CT_\kappa^\ast\!:\!\CL^{r^\ast_{\CY}}(\Omega)\!\rightarrow\!\CL^{r^\ast_{\CX}}(\Omega)$ given by
$(\CT_\kappa^\ast y)(s)\! =\! \int_\Omega \kappa(t,s) y(t) dt$.
To approximate the integrals, we subdivide the interval $\overline\Omega$ into $N\!=\!1000$ subintervals $[\frac{k}{N}, \frac{k+1}{N}]$, for $k\!=\!0,\!\ldots\!,N\!-\!1$, and then use quadrature, giving a finite-dimensional model $\VA\xbs\!=\!\ybs$, with $\VA \!=\! \frac{1}{N}\!\left(\kappa\!\left(\frac{j-1}{N}, \frac{2k-1}{N}\right) \!\right)_{j,k=1}^N$ and $\xbs\!=\!\left(x\left(\frac{2j-1}{2N}\right)\right)_{j=1}^N$.
For SGD we use $N_b\in[N]$ mini-batches. To obtain equisized batches, we assume that $N_b$ divides $N$.
The mini-batch matrices $\VA_j$ are then constructed by taking every $N_b$-th row of $\VA$, shifted by $j$, resulting in well-balanced mini-batches, in the sense that the norm $\|\VA_j\|$ is (nearly) independent of $j$.

The kernel function $k(t,s)$ and the exact signal $x^\dag$ are defined respectively by
\begin{align*}
    \kappa(t,s) = \begin{cases}40t(1-s), &\text{ if } t\leq s,\\
    40s(1-t), &\text{ otherwise},\end{cases}
\quad\mbox{and} \quad   x^\dag(s) = \begin{cases}1, &\text{ if } s\in[\frac{9}{40},\frac{11}{40}]\cup[\frac{29}{40},\frac{31}{40}],\\
                2, &\text{ if } s\in[\frac{19}{40},\frac{21}{40}],\\
                0, &\text{ otherwise}.
    \end{cases}
\end{align*}
This is a sparse signal and we expect sparsity promoting norms to perform well.
To illustrate this, we compare the following four settings: (a) $\CX=\CY=\CL^2(\Omega)$; (b) $\CX=\CL^2(\Omega)$ and $\CY=\CL^{1.1}(\Omega)$; (c) $\CX=\CL^{1.5}(\Omega)$ and $\CY=\CL^{2}(\Omega)$; (d) $\CX=\CL^{1.1}(\Omega)$ and $\CY=\CL^{2}(\Omega)$.
Setting (a) is the standard Hilbert space setting, suitable for recovering smooth solutions from measurement data with i.i.d. Gaussian noise, whereas settings (b)-(d) use Banach spaces.
Settings (c) and (d) both aim at sparse solutions, and we expect the latter to yield sparser solutions, since spaces $\CL^{r}(\Omega)$ progressively enforce sparser solutions as the exponent $r$ gets closer to $1$.
In the experiments, we employ the step-size schedule  
$\mu_k = \frac{L_{\max}}{1+0.05 (k/N_b)^{1/p^\ast+0.01}},$
with $L_{\max} = \max_{j\in[N_b]} \|\VA_j\|$.
{This satisfies the summability conditions $\sum_{k=1}^\infty \mu_k=\infty$ and $\sum_{k=1}^\infty \mu_k^{p^*}<\infty$ required by Theorem \ref{thm:as_lin_convergence}}.
The operator norm $\|\VA_j\|=\|\VA_j\|_{\CL^{r_{\CX}}\rightarrow \CL^{r_{\CY}}}=\max_{\xbs\neq 0} \frac{\|\VA_j\xbs\|_{\CL^{r_{\CY}}}}{\|\xbs\|_{\CL^{r_{\CX}}}}$ is estimated using Boyd's power method \cite{B74}.
All the reconstruction algorithms are initialised with a zero vector.

In Fig. \ref{fig:sparse_solution_comparison}, we compare the reconstructions with settings (a)-(d) for exact data.
We observe from Fig. \ref{fig:sparse_solution_comparison}(a) that settings (a) and (b), with $\CX=\CL^2(\Omega)$, result in smooth solutions that fail to capture the sparsity structure of the true signal $\xbs^\dag$. In contrast, the choice $\CX=\CL^{1.5}(\Omega)$ recovers a sparser solution, and the choice $\CX=\CL^{1.1}(\Omega)$ gives a truly sparse reconstruction, but with peaks that overshoot the magnitude of $\xbs^\dag$.
This might be related to the fact $\xbs^\dag$ exhibits a cluster structure in addition to sparsity, which is not accounted for in the choice of the space $\CX=\CL^{1.1}(\Omega)$ \cite{ZouHastie:2005,JinLorenzSchiffler:2009}. Fig. \ref{fig:sparse_solution_comparison}(b) indicates that early stopping would result in lower peaks and significantly reduce the overshooting, but a more explicit form of regularisation \cite{ZouHastie:2005,JinLorenzSchiffler:2009} might allow faster convergence.

\begin{figure}[h!]
\centering
\setlength{\tabcolsep}{0pt}
\begin{tabular}{cc}
\includegraphics[width=.488\textwidth]{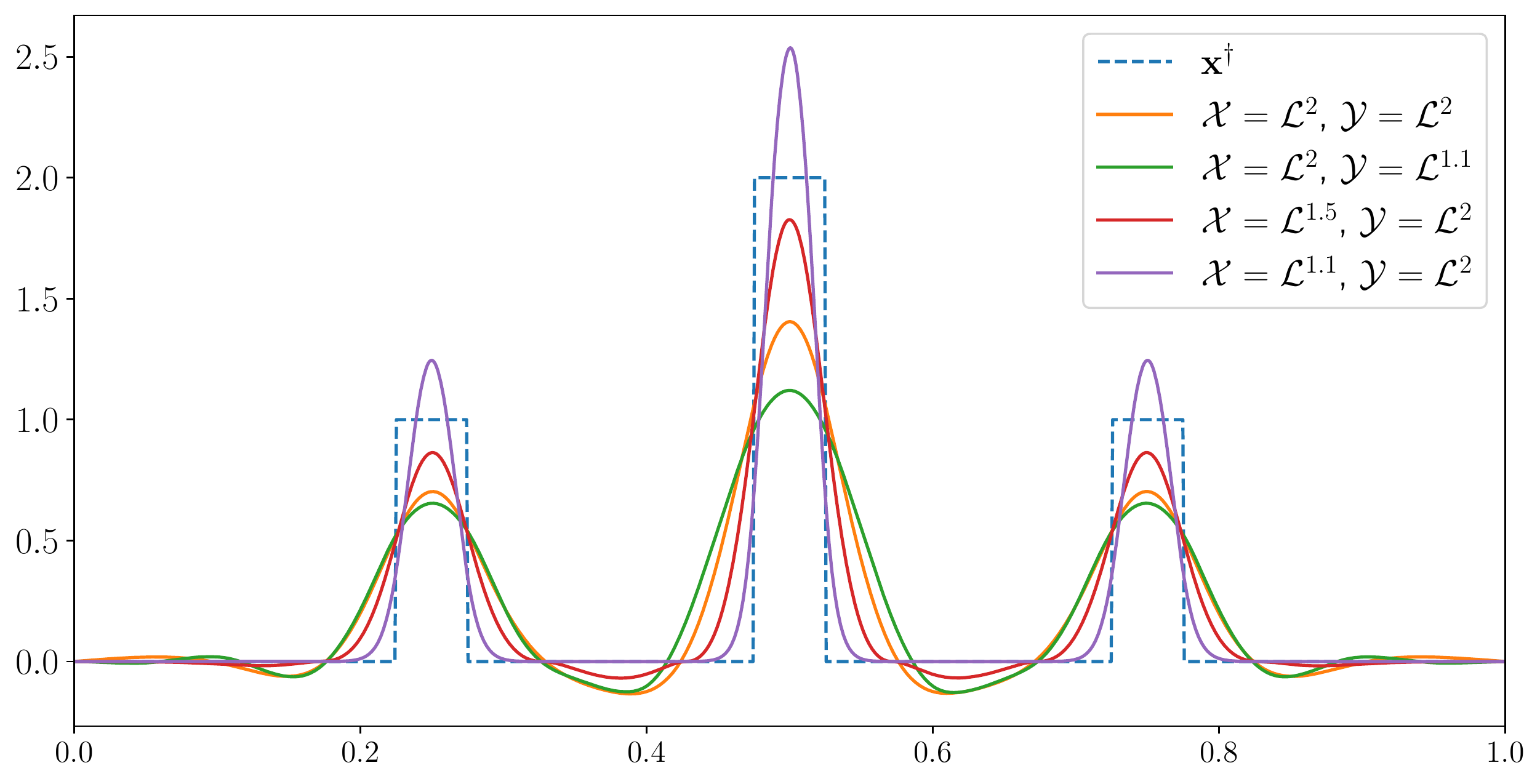} &
\includegraphics[width=.488\textwidth]{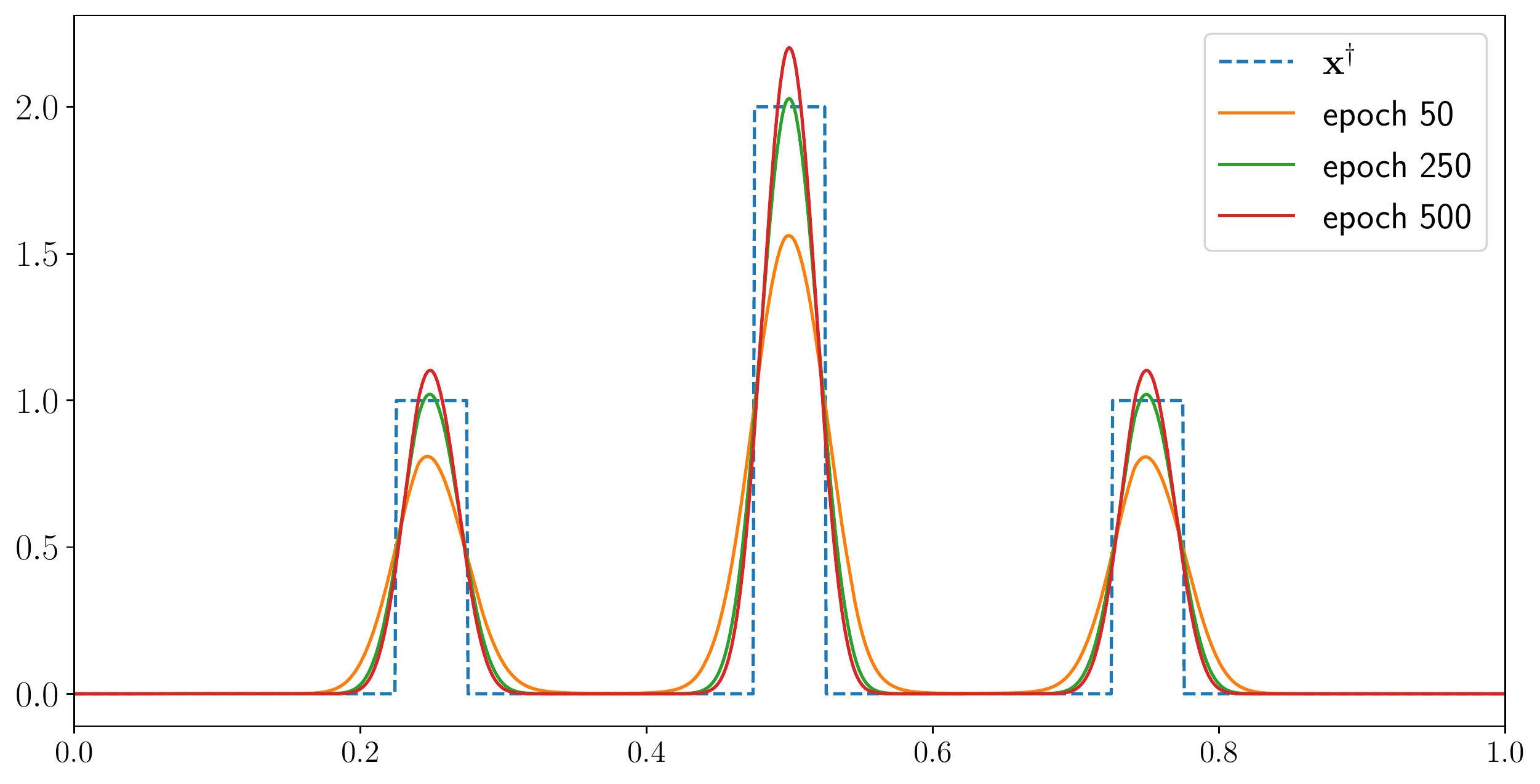}\\
{\scriptsize{(a) Changing $\CX$ and $\CY$ for $N_b=100$ }}& {\scriptsize{(b) Progression of iterates for $\CX=\CL^{1.1}(\Omega)$}}
\end{tabular}
\caption{Comparison of reconstructed solutions after $500$ epochs.}
\label{fig:sparse_solution_comparison}
\end{figure}

In Fig. \ref{fig:objective_function}, we investigate the convergence of the objective value with respect to the number of batches $N_b$ and the choice of the solution space $\CX$.
As expected, having a larger number of batches results in a faster initial convergence, but also in increased variance, as shown by the oscillations.
Moreover, the variance is lower in the case of a smoother space $\CX$ (promoting smoother solutions), where the variance existing in early epochs is dramatically reduced later on.
This observation can be explained by the gradient expression $g(\xbs,\ybs,i)=\VA_i^\ast \svaldmapY{p}(\VA_i\xbs-\ybs_i)$, which tends to zero as SGD iterates converge to the true solution $\xref$ and so does its variance, and the larger is the exponent $p$, the faster is the convergence.

\begin{figure}[h!]
\centering
\setlength{\tabcolsep}{0pt}
\begin{tabular}{cc}
\includegraphics[width=.488\textwidth]{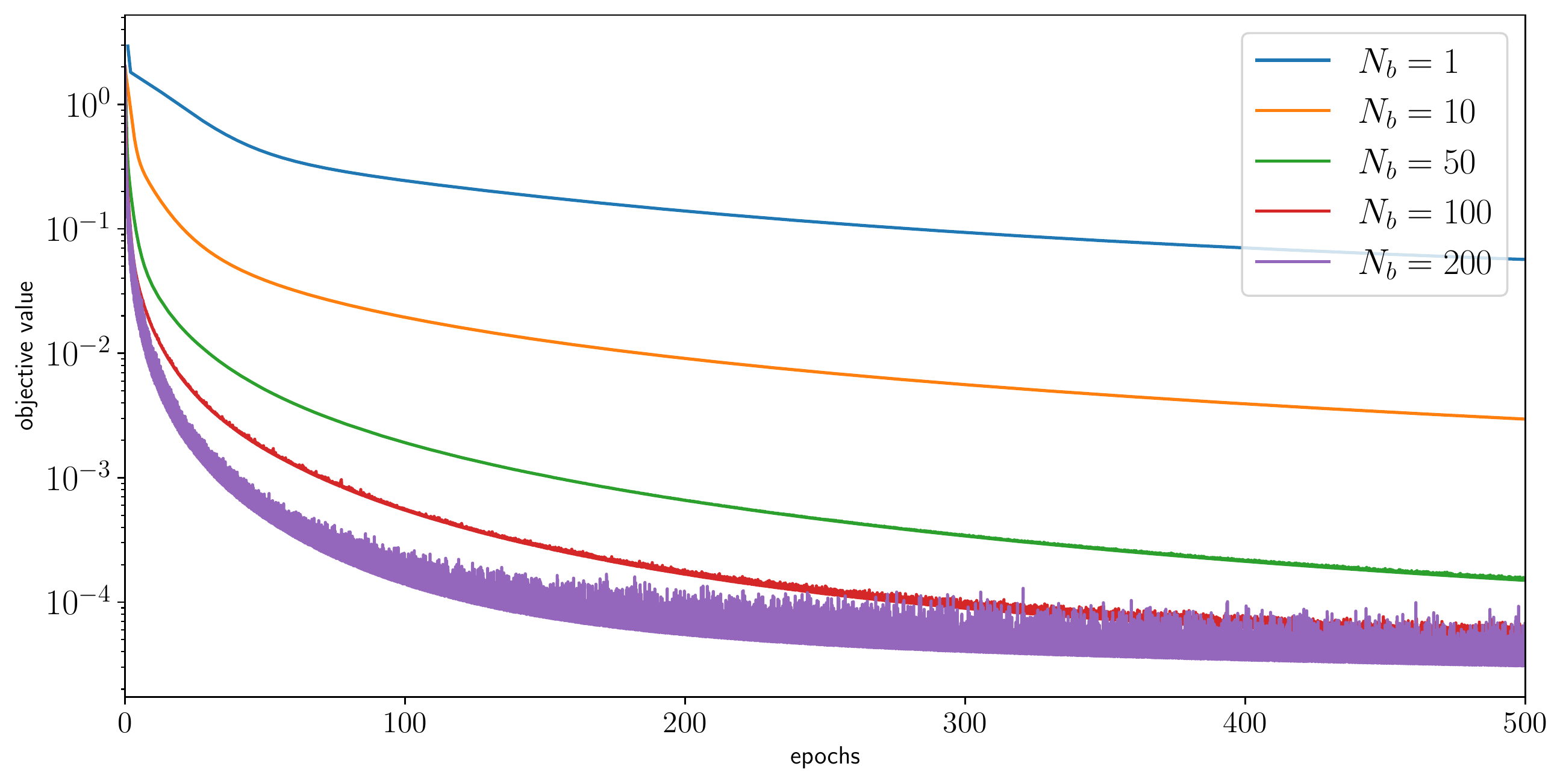}
& \includegraphics[width=.488\textwidth]{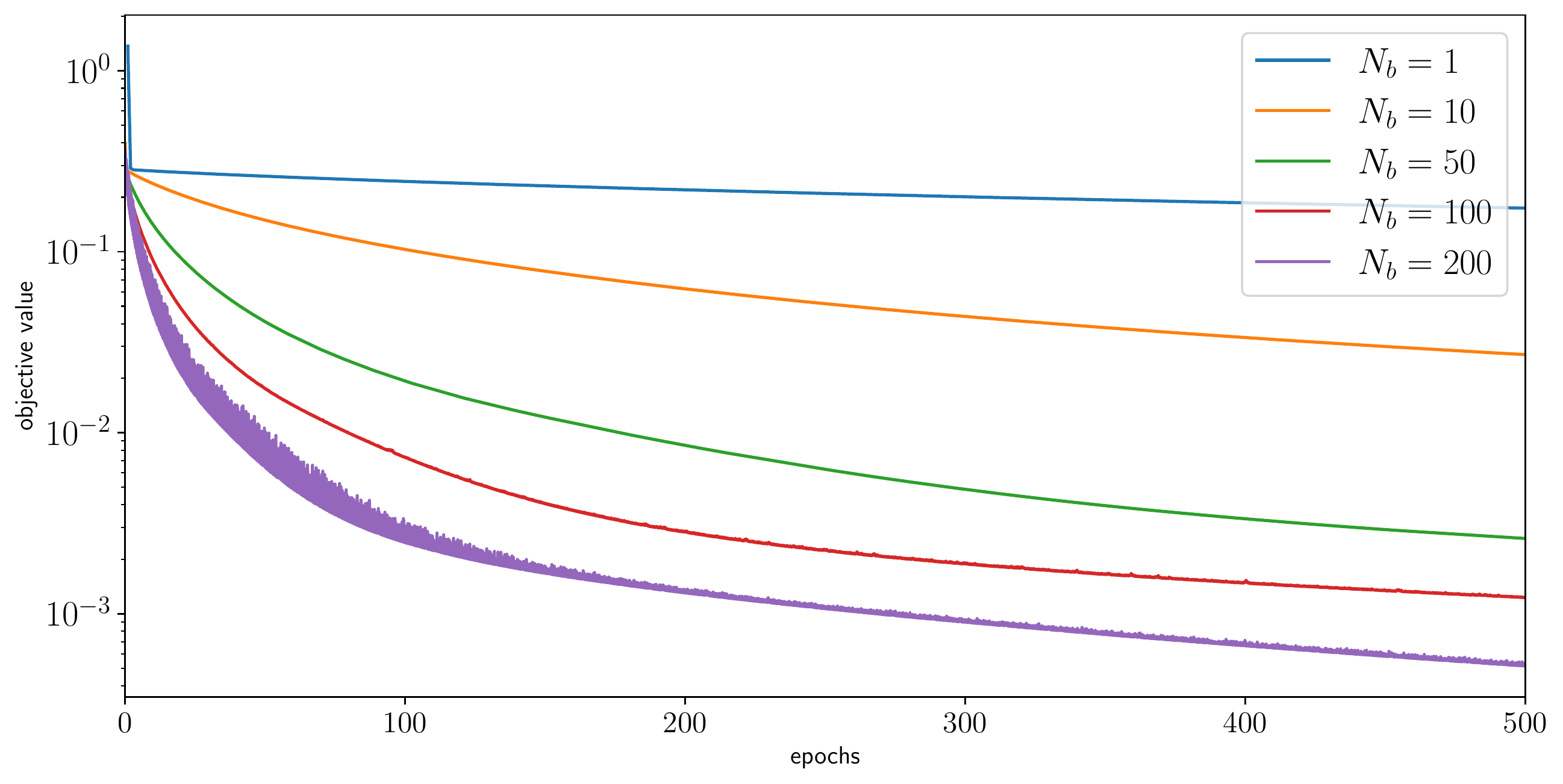} \\
{\scriptsize{(a) $\CX=\CL^{1.1}(\Omega)$ and $\CY=\CL^2(\Omega)$ }}& {\scriptsize{(b) $\CX=\CL^{1.5}(\Omega)$ and $\CY=\CL^2(\Omega)$}}
\end{tabular}
\caption{The variation of $\frac{1}{p}\sum_{i=1}^{N_b}\yN{\VA_i\xbs_k-\ybs_i}^p$ with respect to the number of batches $N_b$.}
\label{fig:objective_function}
\end{figure}

Next we examine the performance of the algorithm when the observational data $\ybs^\delta$ contains (random-valued) impulse noise, cf. Fig. \ref{fig:noisy_reconstruction}, {which is generated by 
\begin{align*}
y^\delta_i = \left\{\begin{aligned} 
 y_i^\dag, & \quad\mbox{with probability } 1-p,\\
 (1-\xi)y_i^\dag,    & \quad\mbox{with probability } p/2,\\
 1.4\xi +(1-\xi)y_i^\dag,    & \quad\mbox{with probability } p/2,
  \end{aligned}\right.
\end{align*}
where $p\in (0,1)$ denotes the percentage of corruption (which is set to $0.05$ in the experiment) and $\xi\sim{\rm Uni}(0.1, 0.4)$ follows a uniform distribution over the interval $(0.1,0.4)$.
It is known that $\mathcal{L}^r(\Omega)$ fittings with $r$ close to 1 is suitable for impulsive noise.}
This allows investigating the role of not only the space $\CX$ but also $\CY$.
The results in Fig. \ref{fig:noisy_reconstruction}(b) show that the choice $\CY=\CL^{r_{\CY}}(\Omega)$, with $r_{\CY}$ close to $1$, performs significantly better.
Indeed, the Hilbert setting $\CX=\CY=\CL^2(\Omega)$ produces overly smooth, non-sparse solutions with pronounced artefacts. In sharp contrast, setting $\CX=\CY=\CL^{1.1}(\Omega)$ yields solutions that can correctly identify the sparsity structure of the true solution, and have no artefacts.
Similar as before, the reconstruction in this setting overestimates the signal magnitude on its support, which is exacerbated as the exponent $r_{\CY}$ gets closer to $1$.

\begin{figure}[h!]
\centering
\setlength{\tabcolsep}{0pt}
\begin{tabular}{cc}
\includegraphics[width=.488\textwidth]{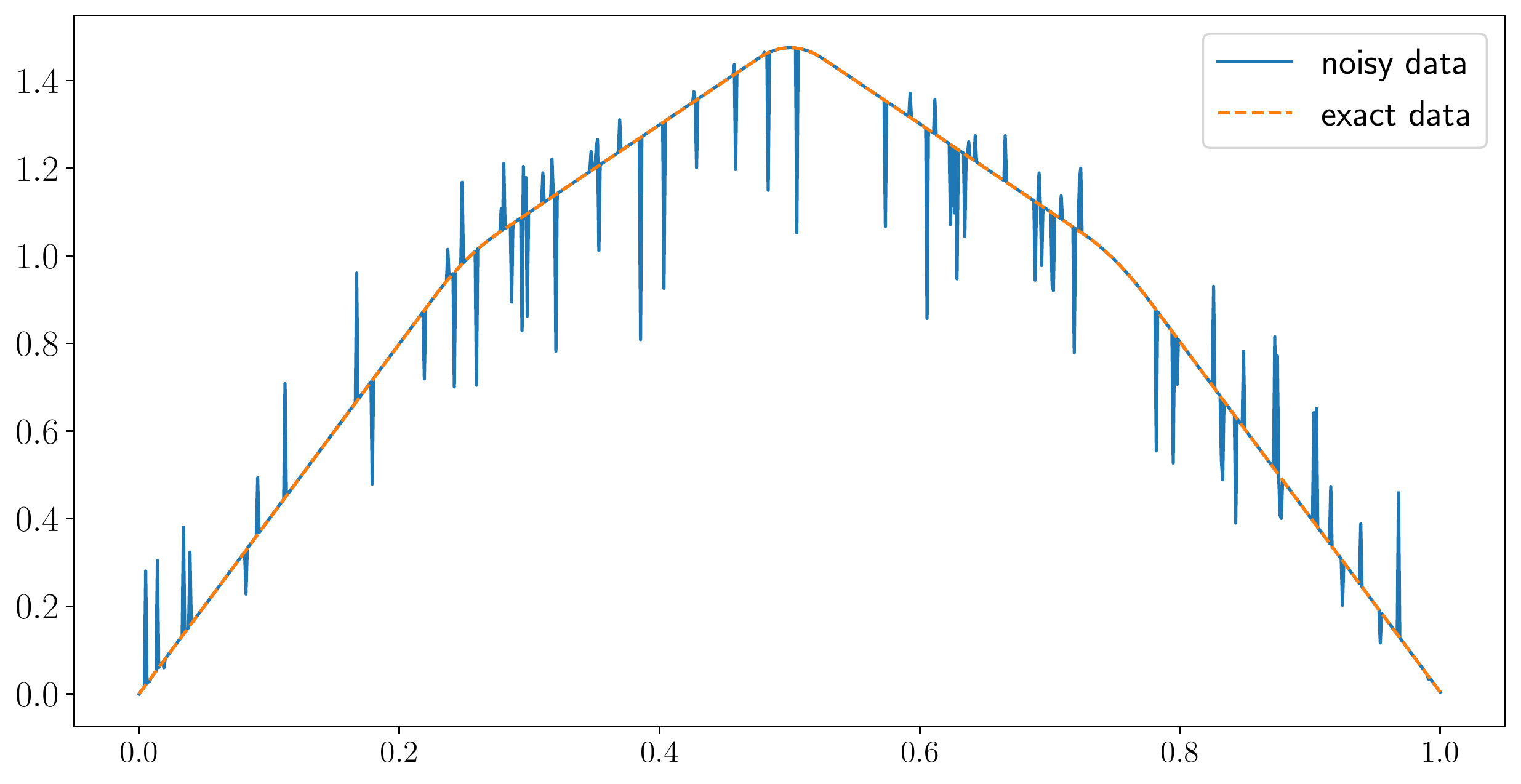}
& \includegraphics[width=.505\textwidth]{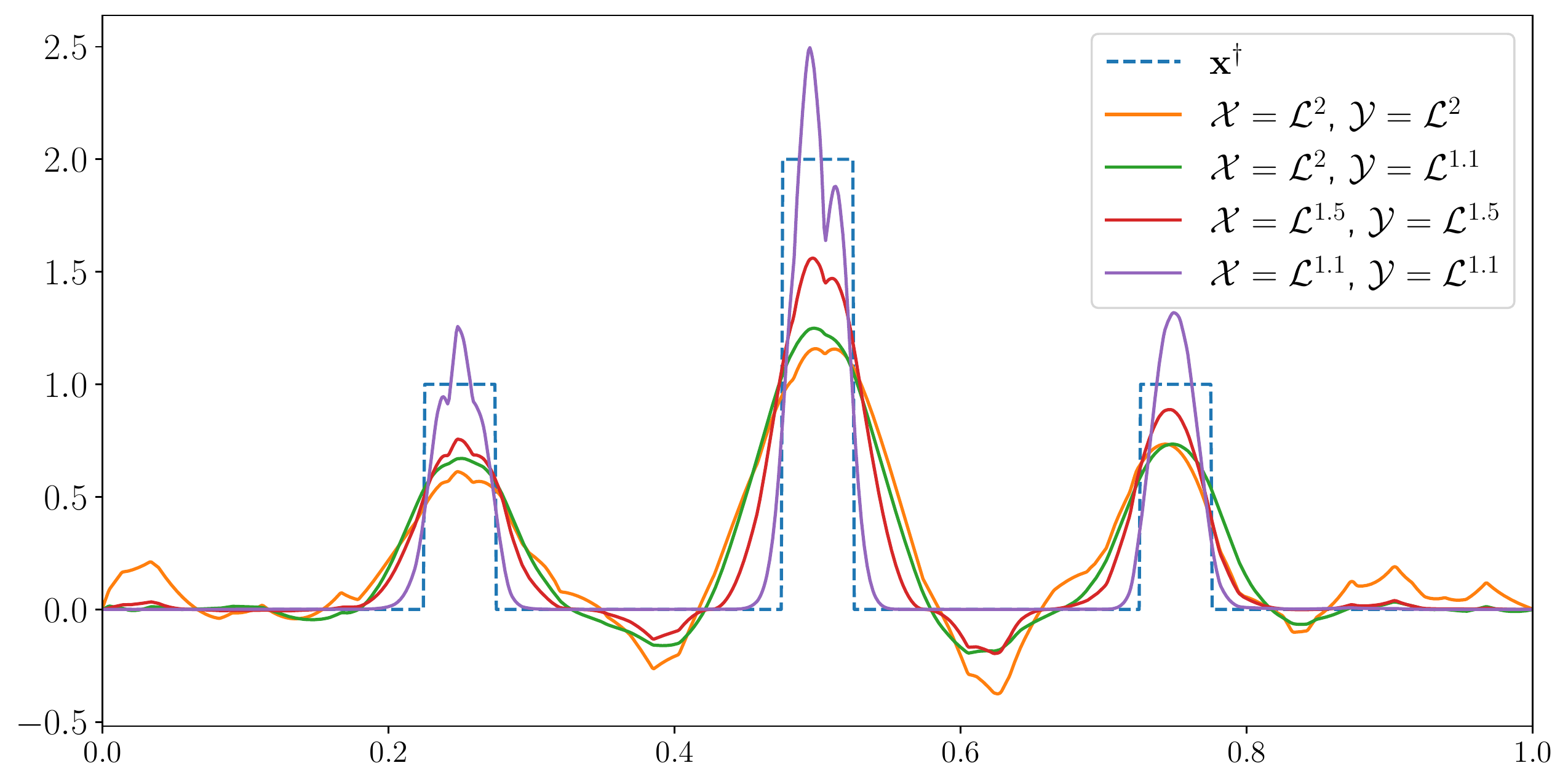} \\
{\scriptsize{(a) Data with impulse noise }}& {\scriptsize{(b) Reconstructions with respect to $\CX$ and $\CY$}}
\end{tabular}
\caption{The reconstruction performance in case of impulse noise. The algorithms utilised $N_b=100$ batches and were run for $250$ epochs.}
\label{fig:noisy_reconstruction}
\end{figure}

Lastly, we investigate the convergence behaviour of the method for the generalised model \eqref{eqn:q_kaczmarz} in Section \ref{sec:further_kaczmarz}, where stochastic directions $g(\xbs,\ybs,i)$ are defined as $g(\xbs,\ybs,i)=\VA_i^\ast \svaldmapY{q}(\VA_i\xbs-\ybs_i)$, with $q=r_{\CY}$ different from the convexity parameter $p$ of the space $\CX$. The results in Fig. \ref{fig:kaczmarz_q} show that this can indeed be beneficial for the performance of the method: the reconstructions are more accurate not only in terms of the solution support, but also in terms of the magnitudes of the non-zero entries. However, the precise mechanism of the excellent performance remains largely elusive.

\begin{figure}[h!]
\centering
\setlength{\tabcolsep}{0pt}
\begin{tabular}{cc}
\includegraphics[width=.488\textwidth]{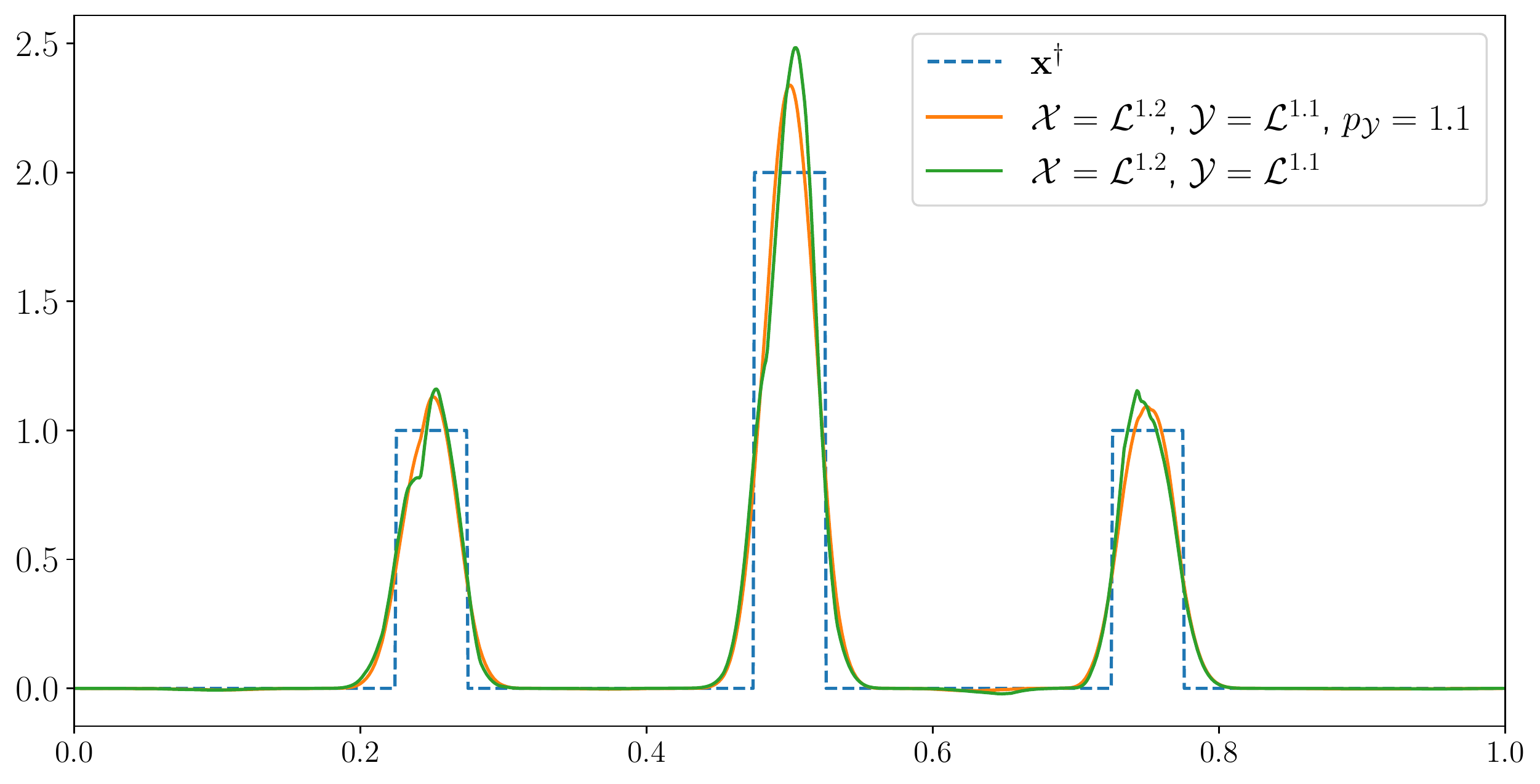}
& \includegraphics[width=.488\textwidth]{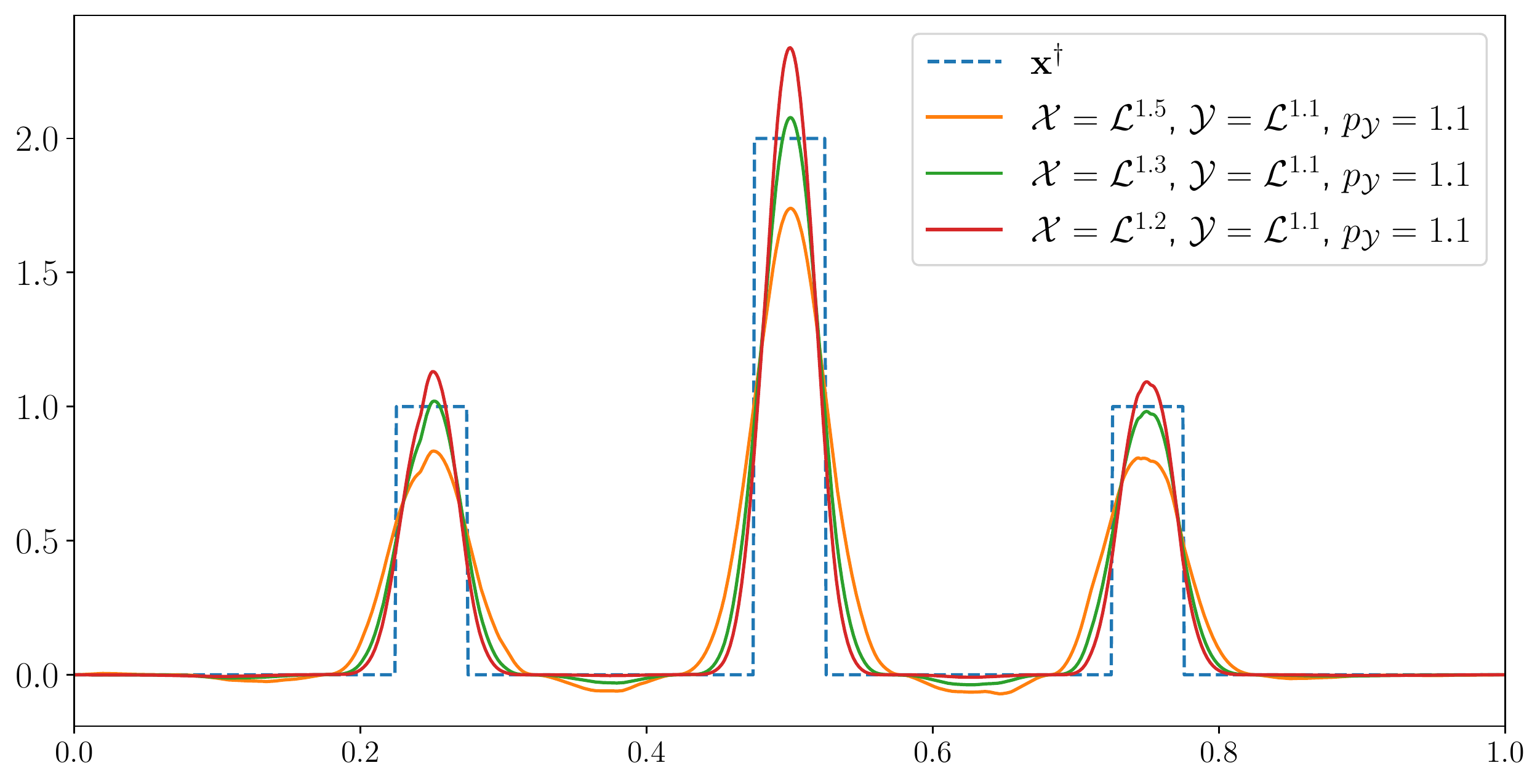} \\
{\scriptsize{(a) Standard vs generalised Kaczmarz }}& {\scriptsize{(b) Changing $\CX$ in generalised Kaczmarz}}
\end{tabular}
\caption{The dependence of the reconstructions in the case of impulse noise on the choice of $q$ parameter in the generalised model \eqref{eqn:q_kaczmarz}. The results are obtained using $N_b=100$ batches, after $250$ epochs.}
\label{fig:kaczmarz_q}
\end{figure}

\subsection{Computed Tomography}

Now we numerically investigate the behaviour of SGD on computed tomography (CT), with respect to the model spaces $\CX$ and $\CY$ and data noise. In CT reconstruction, we aim at determining the density of cross sections of an object by measuring the
attenuation of X-rays as they propagate through the object \cite{Natterer:2001}. Mathematically, the forward map is given by the Radon transform.
In the experiments, the discrete forward operator $\VA$ is defined by a $2D$ parallel beam geometry, with $180$ projection angles on a $1$ angle separation, $256$ detector elements, and pixel size of $0.1$. The sought-for signal $\xbs^\dag$ is a (sparse) phantom, cf. Fig. \ref{fig:CT_data}(a).
After applying the forward operator $\VA$, either Gaussian (with mean zero and variance $0.01$) or salt-and-pepper noise is added.
In the latter setting we consider low (with $5\%$ of values changed to either salt or pepper values) and high ($10\%$ of values changed) noise regimes.
The resulting sinograms (i.e. measurement data) are shown in Fig. \ref{fig:CT_data}(b)-(d).
Note that standard quality metrics in image assessment, such as peak signal to noise ratio or mean squared error, are computed using the distance between images in the $\ell^2$-norm, which have an implicit bias towards Hilbert spaces and smooth signals, whereas using a metric that emphasises sparsity is more pertinent to sparsity promoting spaces.
To provide a balanced comparison, we report the following two metrics based on normalised $\ell^1$- and $\ell^2$-norms:
$\delta_1(\xbs)=\|\xref-\xbs\|_{\ell^1}/\|\xref\|_{\ell^1}$ and  $\delta_2(\xbs)=\|\xref-\xbs\|_{\ell^2}/\|\xref\|_{\ell^2}.$ 

\begin{figure}[h!]
\centering
\setlength{\tabcolsep}{0pt}
\begin{tabular}{cc}
\includegraphics[height=.3\textwidth]{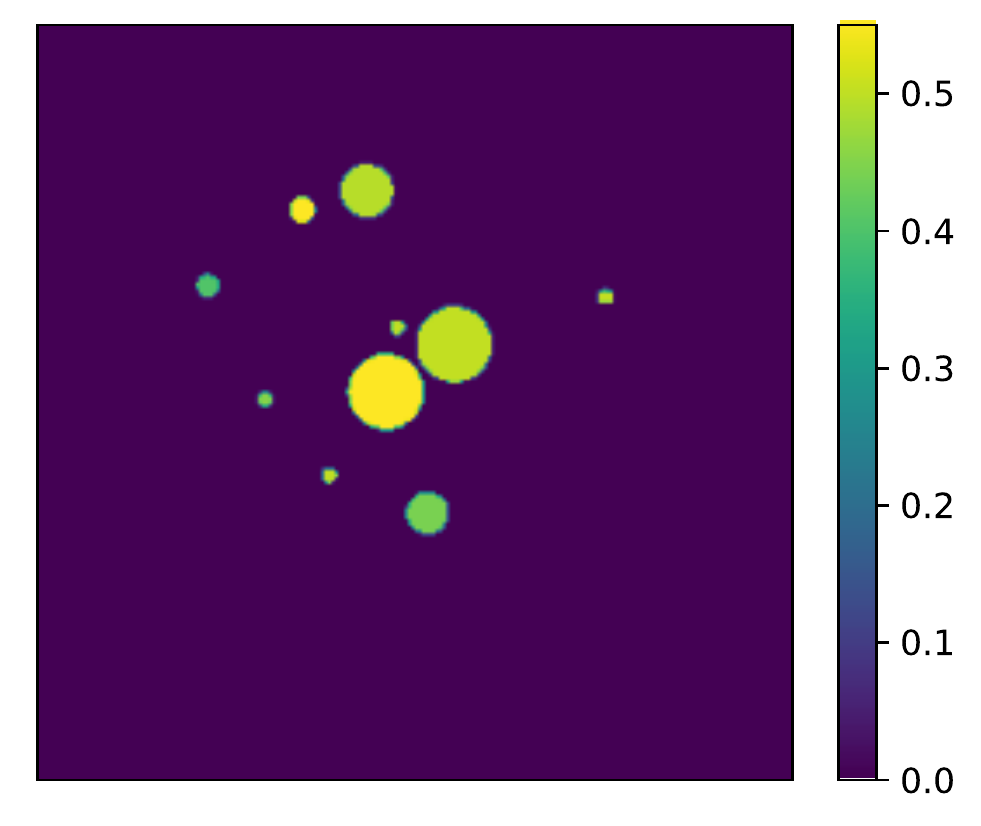}
& \includegraphics[height=.3\textwidth]{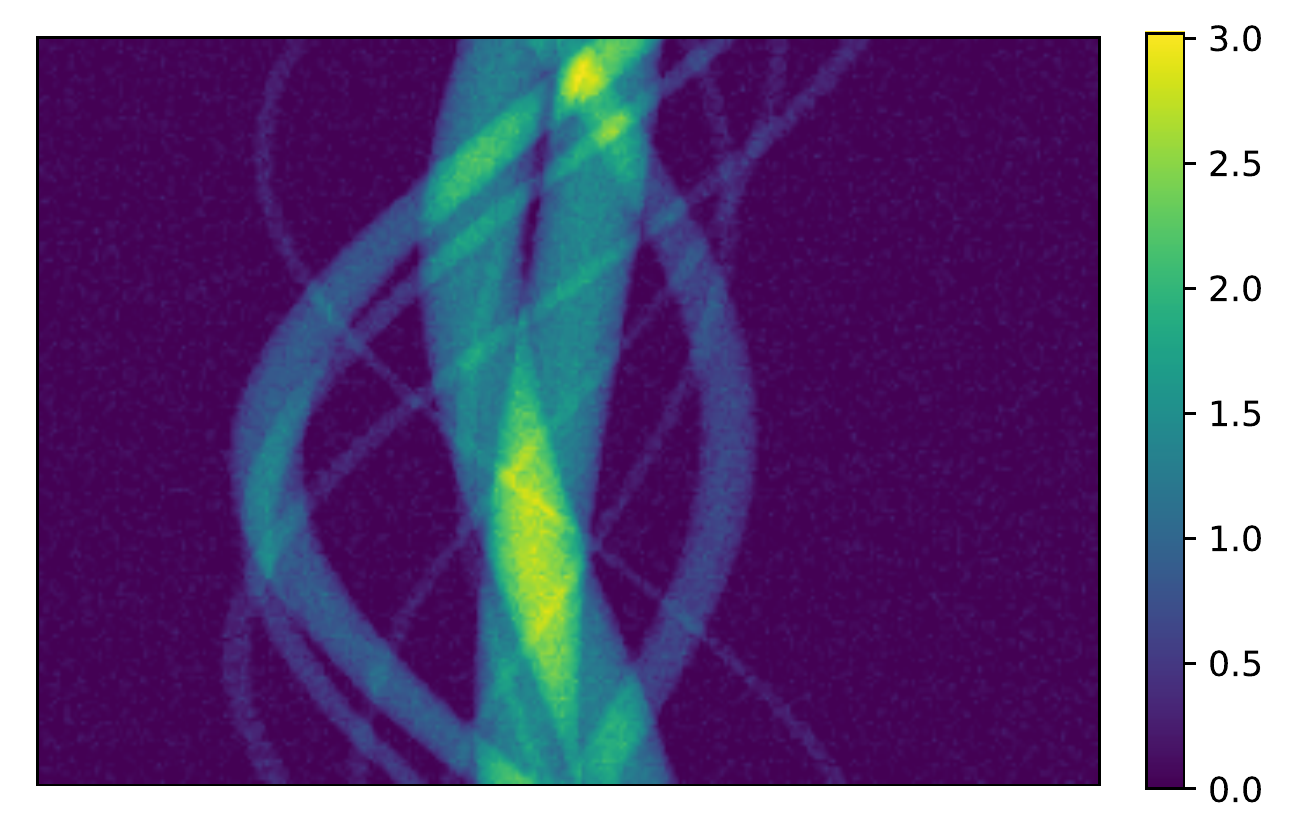}\\
{\scriptsize{(a) Original phantom }}& {\scriptsize{(b) Gaussian noise measurement }}\\
\includegraphics[height=.3\textwidth]{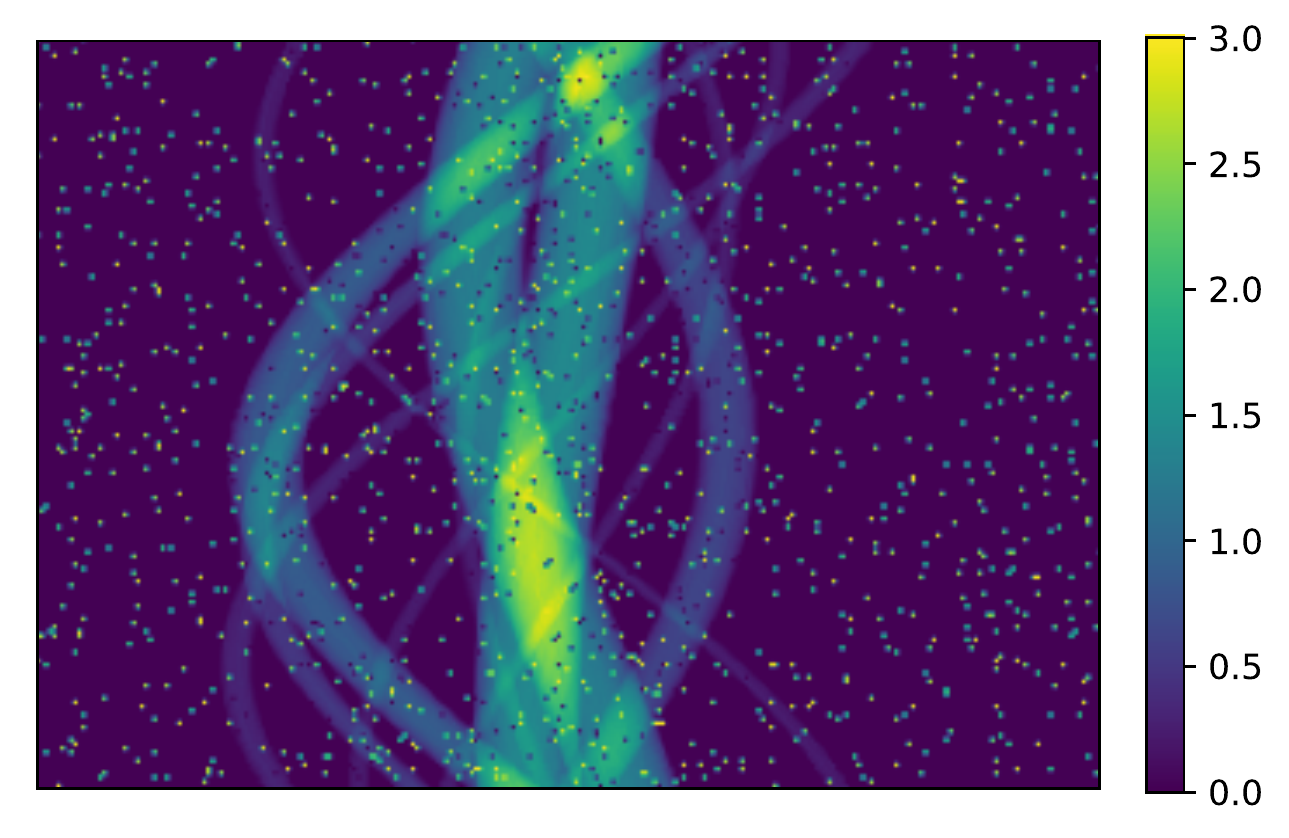} & \includegraphics[height=.3\textwidth]{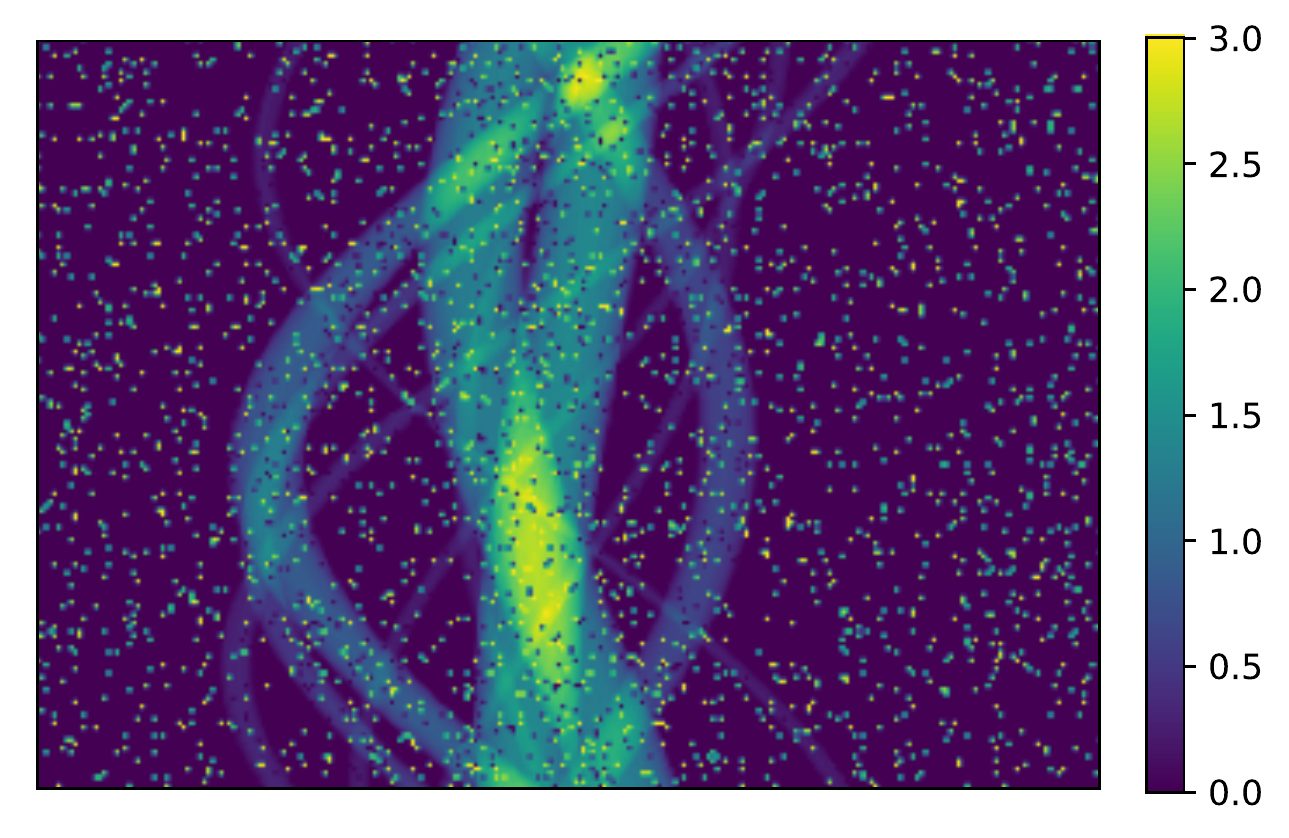} \\
{\scriptsize{(c) Low noise salt-and-pepper measurement }} & {\scriptsize{(d) High noise salt-and-pepper measurement}}
\end{tabular}
\caption{The plot in {\rm(}a{\rm)} shows the phantom to be recovered and {\rm(}b{\rm)}-{\rm(}d{\rm)} show noisy measurements used in the recovery: in {\rm(}b{\rm)}, random Gaussian noise was added, and {\rm(}c{\rm)}-{\rm(}d{\rm)} are sinogram data degraded by salt-and-pepper noise in the low {\rm(}$5\%${\rm)} and high {\rm(}$10\%${\rm)} noise regimes.}
\label{fig:CT_data}
\end{figure}

First, we show the performance on Gaussian noise, where we compare the Hilbert setting ($\CX=\CY=\CL^2$) with two Banach settings ($\CX=\CL^{1.1}$, $\CY=\CL^2$, and $\CX=\CY=\CL^{1.1}$).
In the reconstruction, we employ step-sizes $\mu_k = \frac{L_{\max}/2}{1+0.05 (k/N_b)^{1/p^\ast+0.01}}$, with $L_{\max} = \max_{j\in[N_b]} \|\VA_j\|$.
Fig. \ref{fig:CT_gaussian_60} shows exemplary reconstructions.
In all three settings much of the noise is retained in the reconstruction, and whereas the Hilbert setting is better at recovering the magnitude of non-zero entries, the Banach settings are better at recovering the support.
Moreover, we observe that the Banach setting with a sparse signal space $\CX=\CL^{1.1}$, and a smooth observation space $\CY=\CL^2$, has the best performance in terms of $\delta_1$ and $\delta_2$ metrics.
The Hilbert model performs better than the fully sparse model $\CX=\CY=\CL^{1.1}$ in terms of the smooth metric $\delta_2$, but worse in the sparsity promoting metric ($\delta_1$).
We also consider the Banach setting for the generalised model \eqref{eqn:q_kaczmarz}, with $\CX=\CY=\CL^{1.1}$ and $p_{\CY}=1.1$, where we study the effects of early stopping.
Fig. \ref{fig:CT_gaussian_epochs} shows that this setting recovers the support more accurately (and actually does so very early on) and recovers the magnitudes better, but that a form of regularisation (through e.g. early stopping) can be beneficial, since in the later epochs SGD iterates again tend to overshoot on the support. 
A similar behaviour can observed for other studied Banach space settings, but not for the Hilbert space setting, which does not recover the support.

\begin{figure}[h!]
\centering
\small
\setlength{\tabcolsep}{0pt}
\begin{tabular}{ccc}
\includegraphics[height=.27\textwidth]{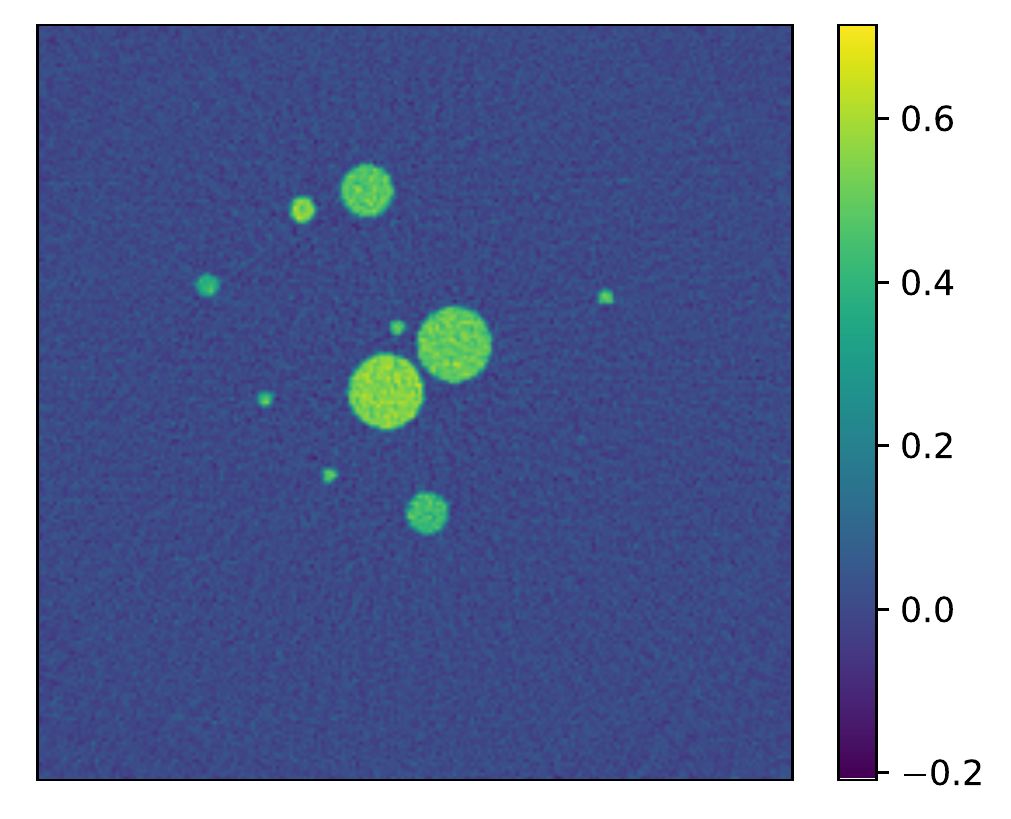}
& \includegraphics[height=.27\textwidth]{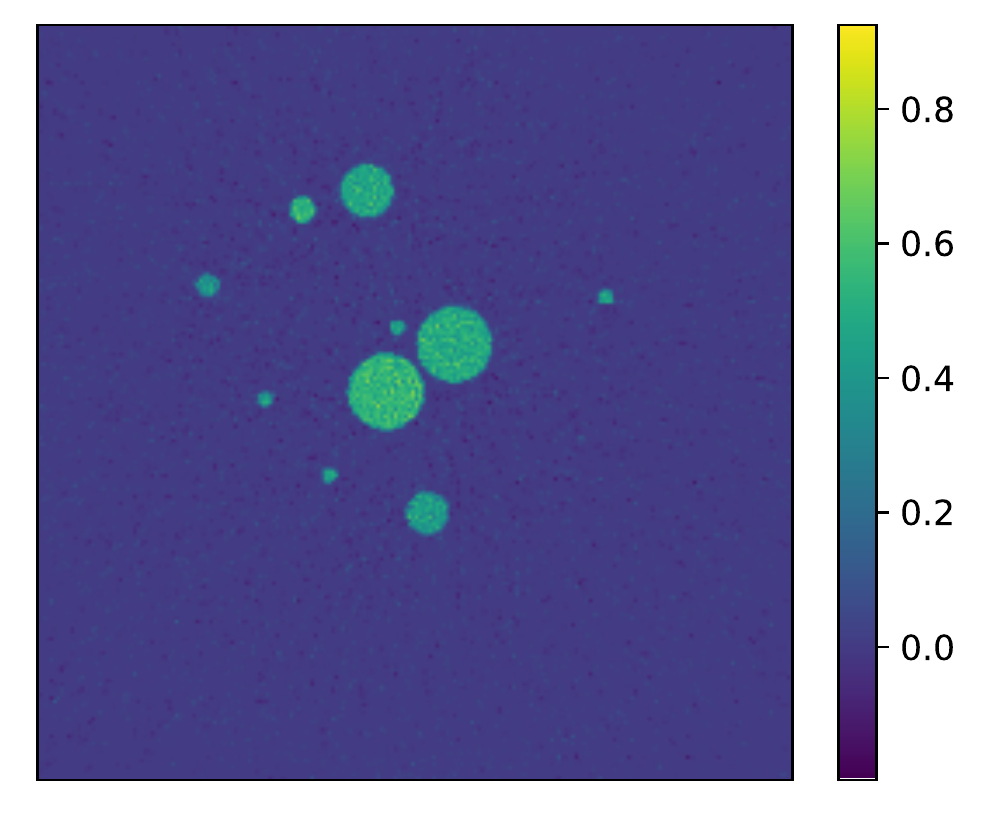} & \includegraphics[height=.27\textwidth]{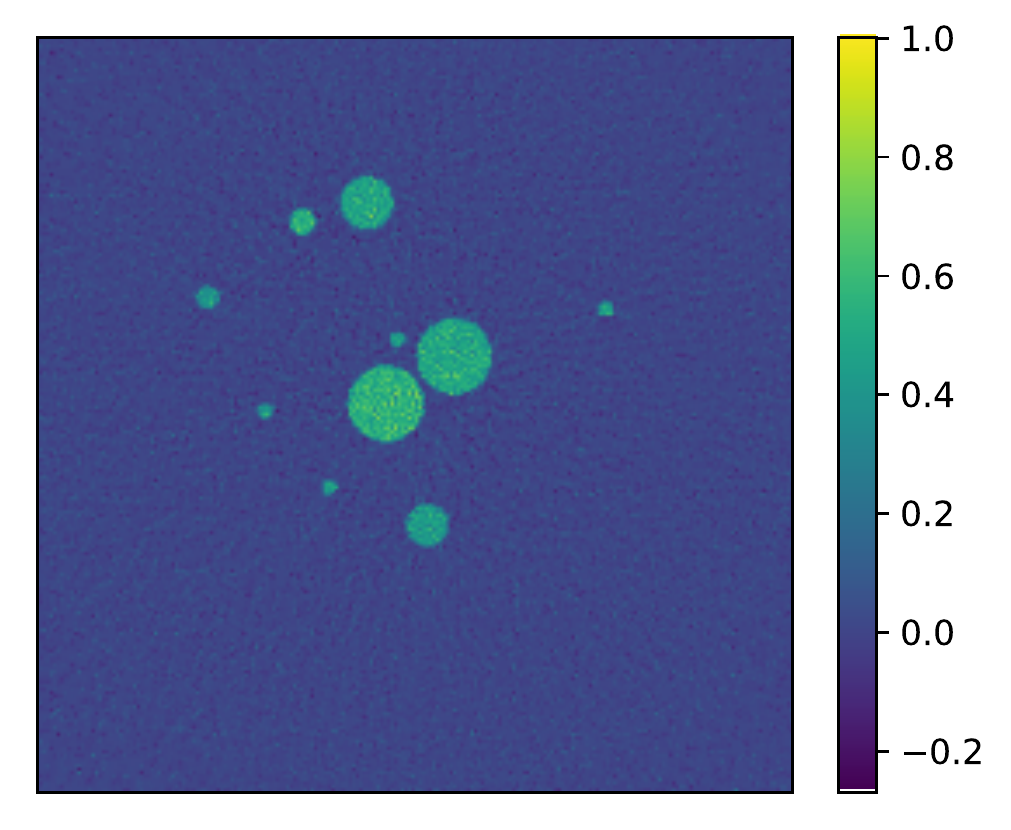} \\
{\scriptsize{(a) $\CX=\CY=\CL^2$ }}& {\scriptsize{(b)$\CX=\CL^{1.1}$, $\CY=\CL^{2}$ }}& {\scriptsize{(c) $\CX=\CY=\CL^{1.1}$ }}\\
{\scriptsize{$\delta_1(\xbs)/\delta_2(\xbs)$: $2.643/0.528$ }}& {\scriptsize{$\delta_1(\xbs)/\delta_2(\xbs)$: $0.711/0.341$ }}& {\scriptsize{$\delta_1(\xbs)/\delta_2(\xbs)$: $2.195/0.620$}}
\end{tabular}
\caption{The reconstruction of the phantom from the observed sinograms degraded by Gaussian noise, cf. Fig. \ref{fig:CT_data}{\rm(}b{\rm)}.
The algorithms use $N_b=60$ batches and were run for $200$ epochs.}
\label{fig:CT_gaussian_60}
\end{figure}

\begin{figure}[h!]
\centering
\small
\setlength{\tabcolsep}{0pt}
\begin{tabular}{ccc}
\includegraphics[height=.27\textwidth]{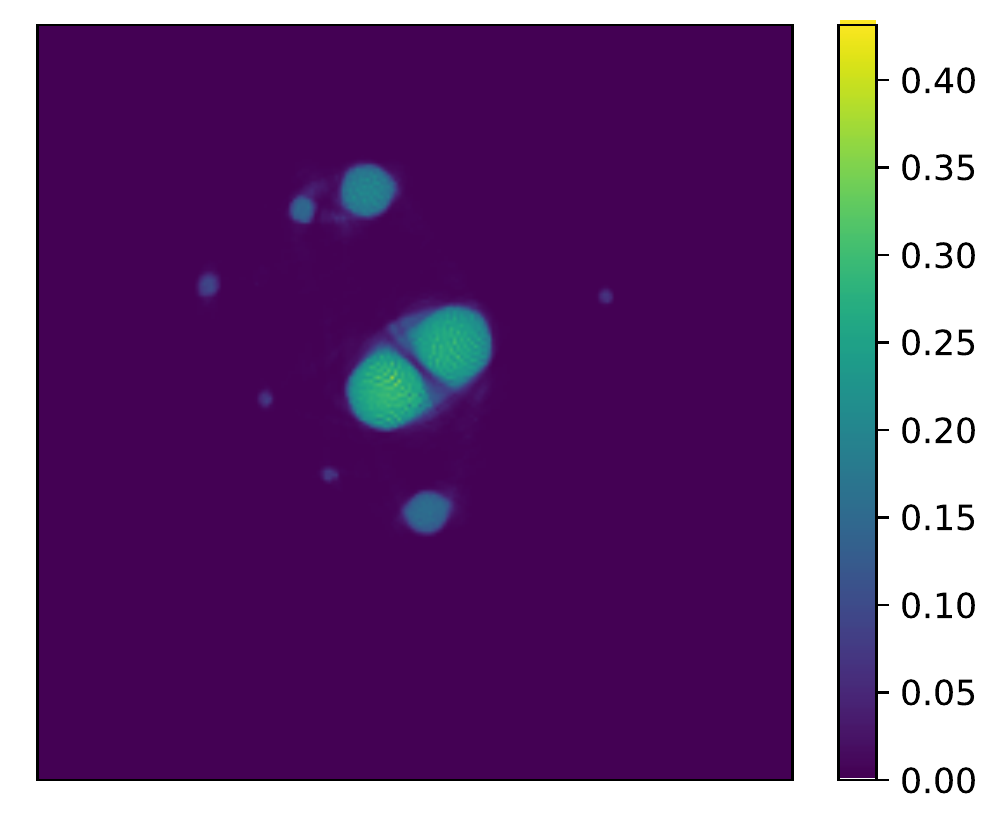}
& \includegraphics[height=.27\textwidth]{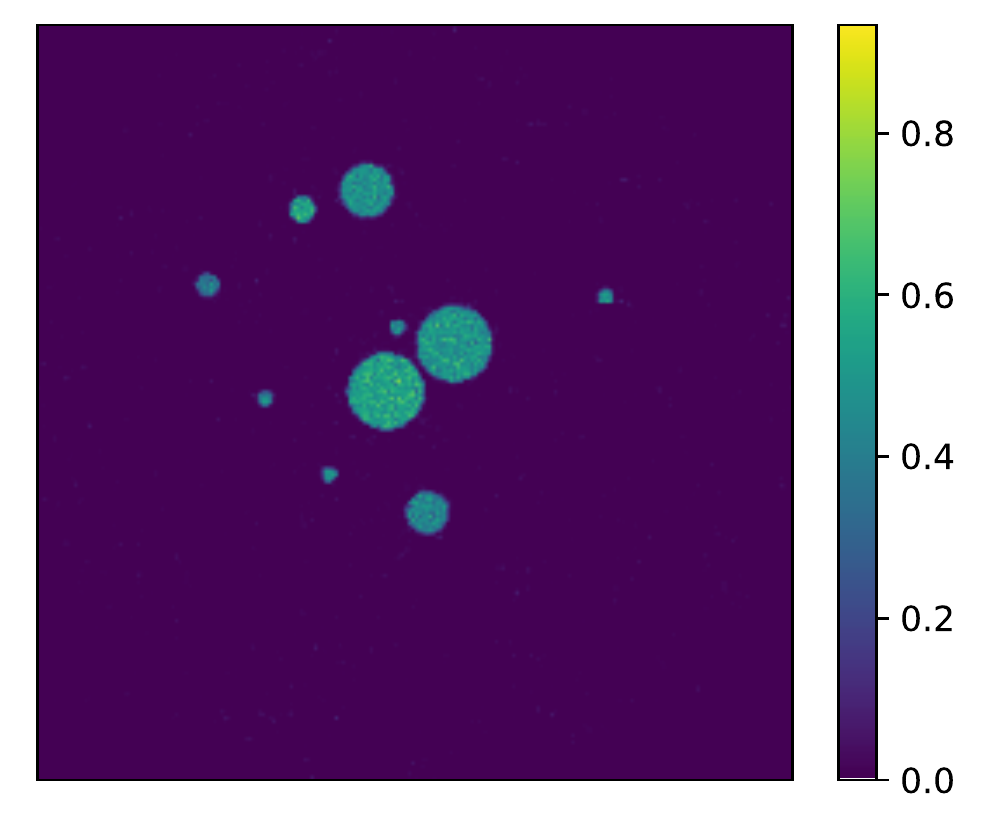} & \includegraphics[height=.27\textwidth]{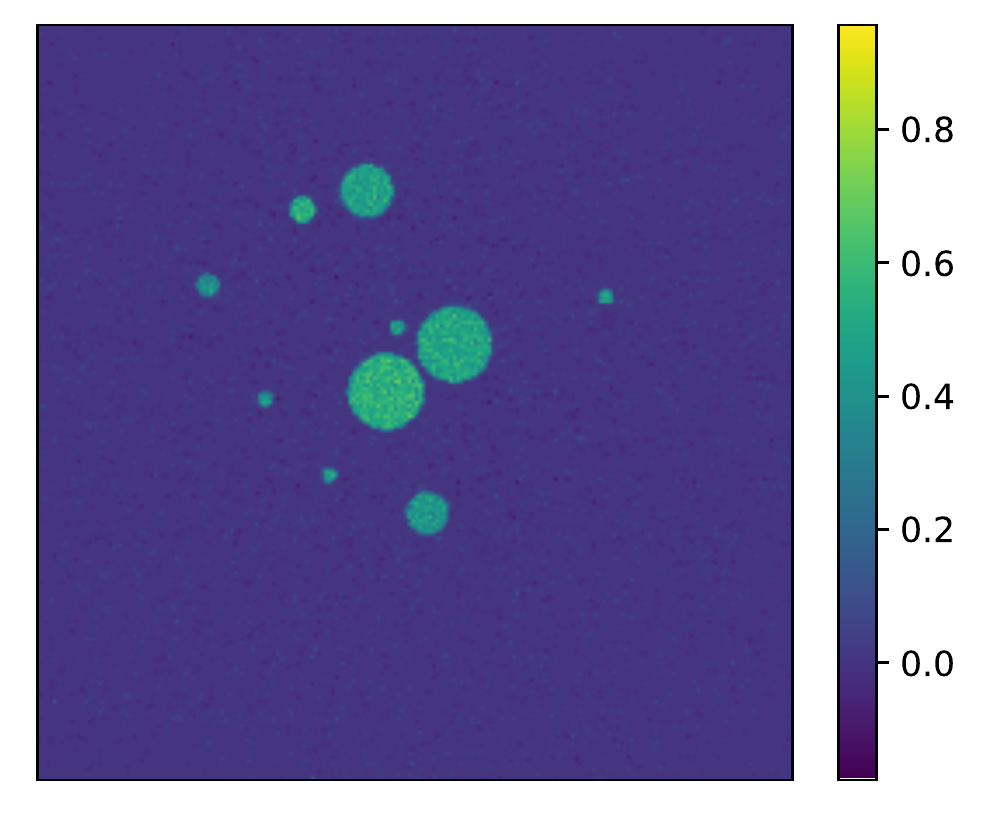} \\
{\scriptsize{(a) $5$ epochs }}& {\scriptsize{(b) $50$ epochs}}& {\scriptsize{(c) $200$ epochs}}\\
{\scriptsize{$\delta_1(\xbs)/\delta_2(\xbs)$: $0.702/0.627$ }}& {\scriptsize{$\delta_1(\xbs)/\delta_2(\xbs)$: $0.263/0.235$ }}& {\scriptsize{$\delta_1(\xbs)/\delta_2(\xbs)$: $0.604/0.308$}}
\end{tabular}
\caption{The evolution of the quality of reconstruction from sinograms degraded by Gaussian noise with respect to the number of epochs. The algorithm uses $\CX=\CY=\CL^{1.1}$ and $p_{\CY}=1.1$, with $N_b=60$ batches.}
\label{fig:CT_gaussian_epochs}
\end{figure}

We next investigate the performance for low and high salt-and-pepper noise.
We compare the Hilbert setting with two Banach settings: the standard SGD with $\CX=\CY=\CL^{1.1}$ and the generalised model \eqref{eqn:q_kaczmarz} with  $\CX=\CY=\CL^{1.1}$ and $p_{\CY}=1.1$.
For the reconstruction, we employ step-sizes $\mu_k = \frac{0.5}{1+0.05 (k/N_b)^{1/p^\ast+0.01}}$.
The results in Fig. \ref{fig:CT_sp_60} show the reconstructions after $200$ epochs with $N_b=60$ batches. In the low noise regime, the Hilbert setting can reconstruct the general shape of the phantom, but retains a lot of the noise and exhibits streaking artefacts in the background. The reconstruction in the high noise regime is of much poorer quality.
The standard Banach SGD shows good behaviour in the low-noise setting, reconstructing well both the sparsity structure and the magnitudes, but its performance degrades in the high noise setting.
In sharp contrast, the model \eqref{eqn:q_kaczmarz} shows a nearly perfect reconstruction performance - the phantom is well recovered, with intensities on the correct scale, for both low and high noise regimes.
Similar as before, we observe that Banach methods tend to slightly overestimate the overall intensities, though the recovered values are comparable to the true solution.
Overall, the Hilbert setting shows a qualitatively worst performance, in both $\ell^1$- and $\ell^2$-norm sense, and the model \eqref{eqn:q_kaczmarz} shows the best performance.

\begin{figure}[h!]
\centering
\small
\setlength{\tabcolsep}{-2pt}
\begin{tabular}{ccc}
\includegraphics[height=.27\textwidth]{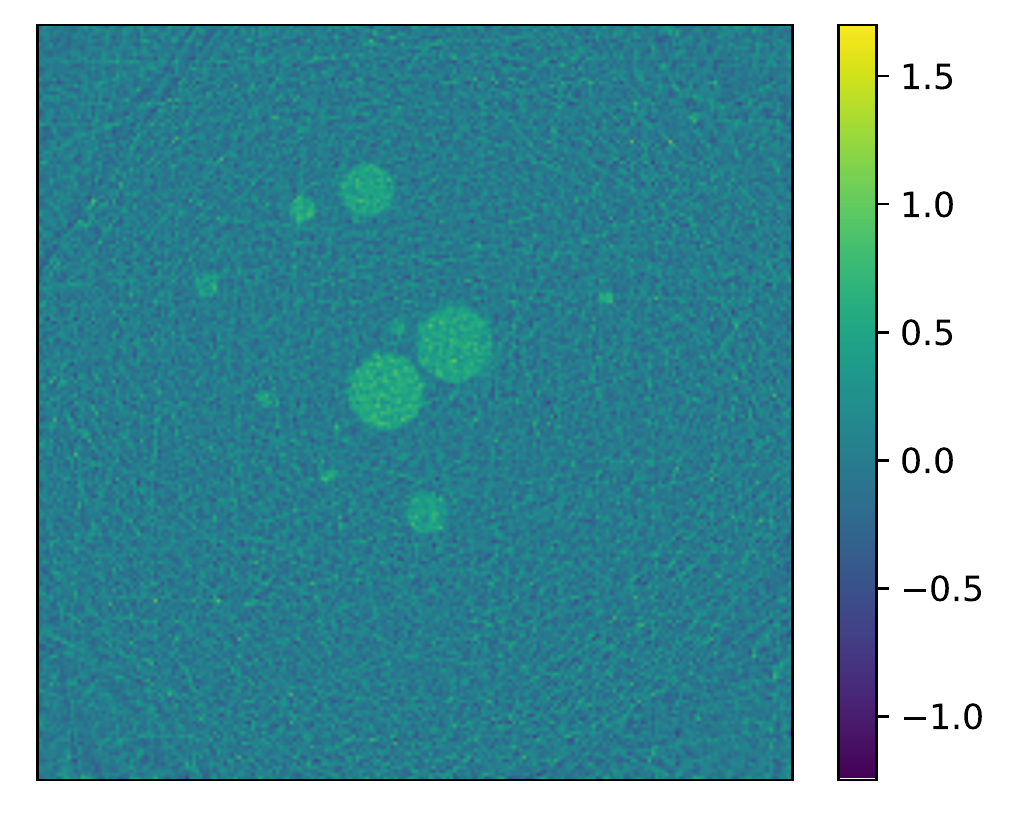}
& \includegraphics[height=.27\textwidth]{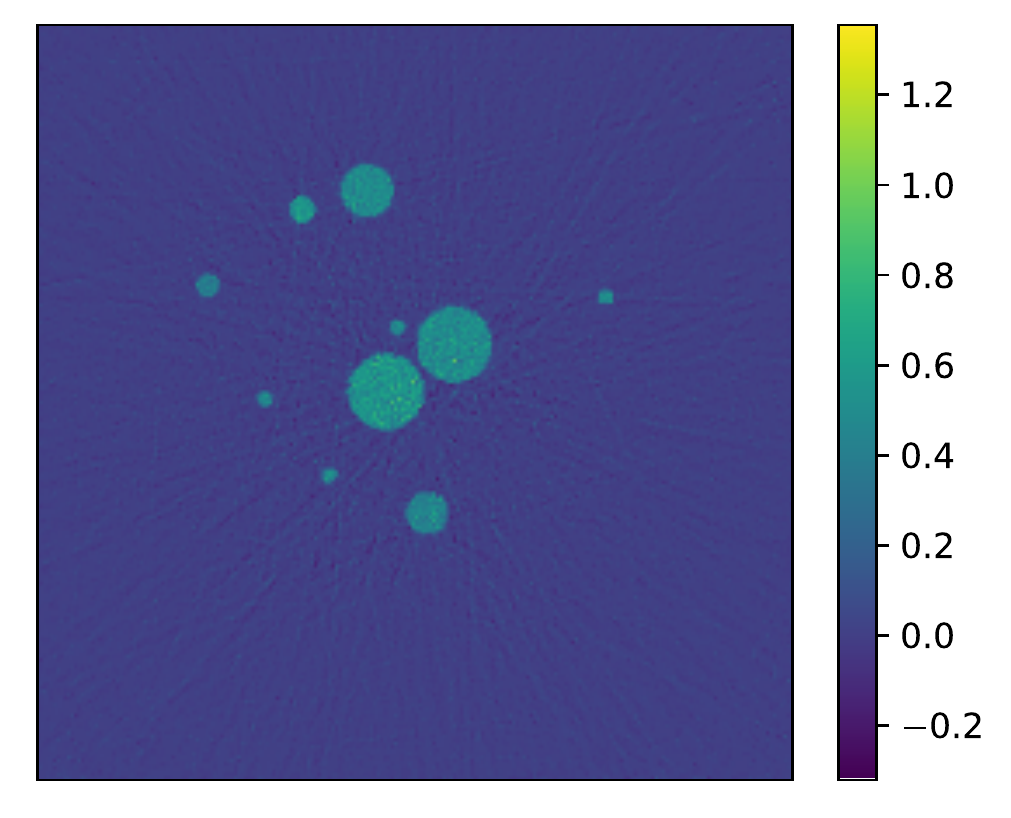} & \includegraphics[height=.27\textwidth]{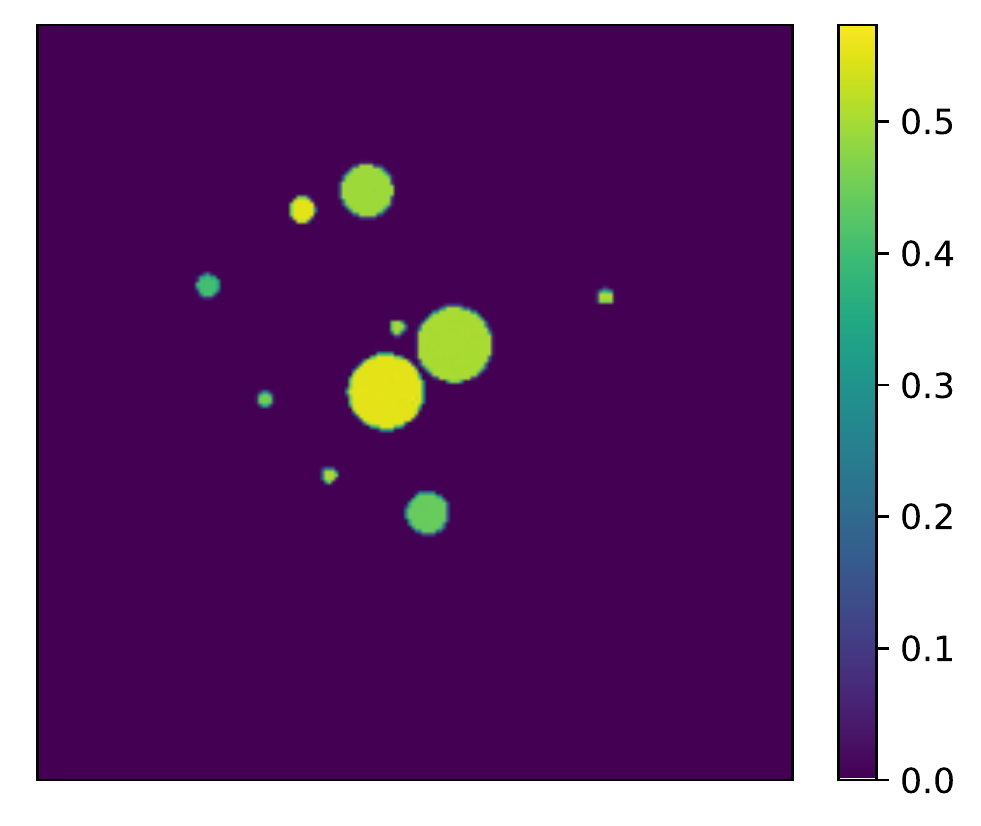} \\
{\scriptsize{(a) $\CX=\CY=\CL^2$ in low noise}}& {\scriptsize{(b)$\CX=\CY=\CL^{1.1}$ in low noise}}& {\scriptsize{(c) $\CX=\CY=\CL^{1.1}$, $p_{\CY}=1.1$ in low noise}}\\
{\scriptsize{$\delta_1(\xbs)/\delta_2(\xbs)$: $18.67/3.71$ }}&{\scriptsize{ $\delta_1(\xbs)/\delta_2(\xbs)$: $1.80/0.544$}} &{\scriptsize{ $\delta_1(\xbs)/\delta_2(\xbs)$: $2.43/3.68\cdot \text{e-}3$ }}\\
\includegraphics[height=.27\textwidth]{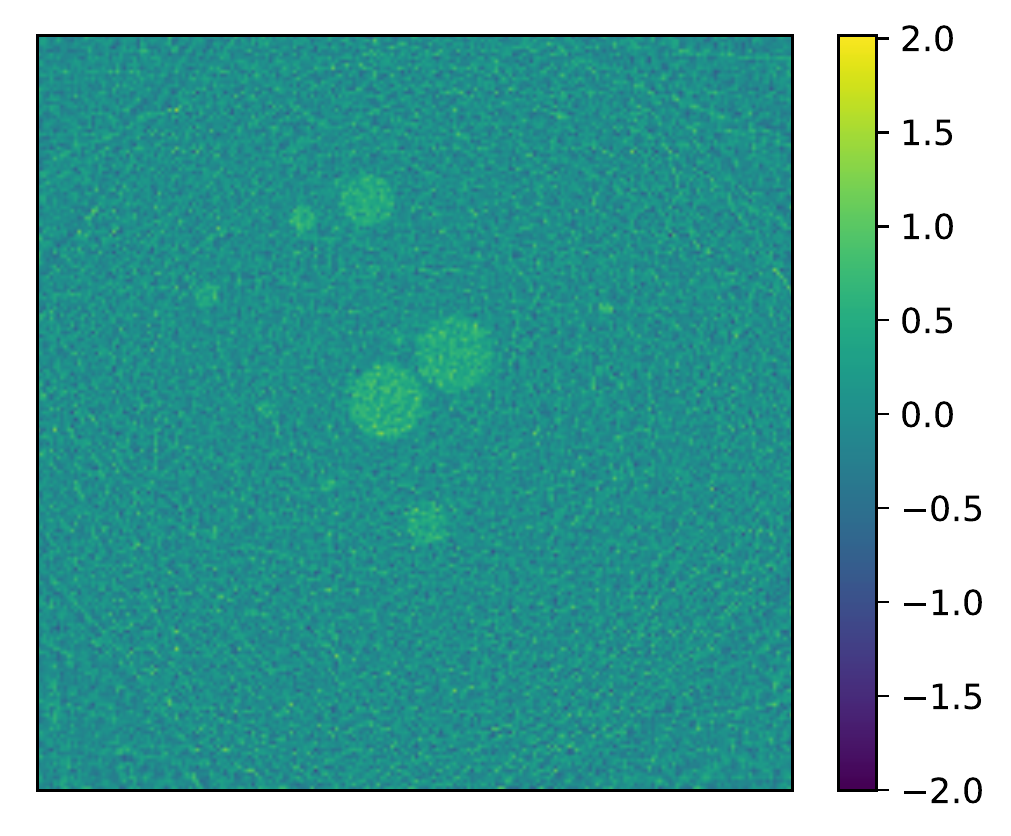}
& \includegraphics[height=.27\textwidth]{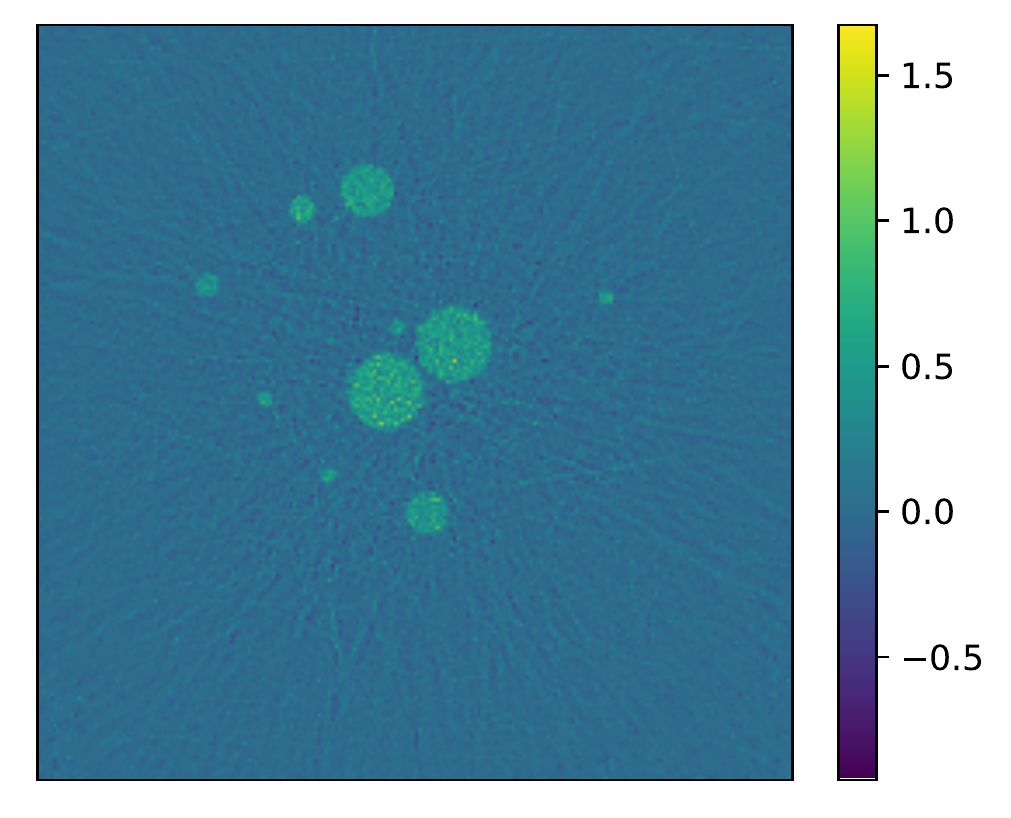} & \includegraphics[height=.27\textwidth]{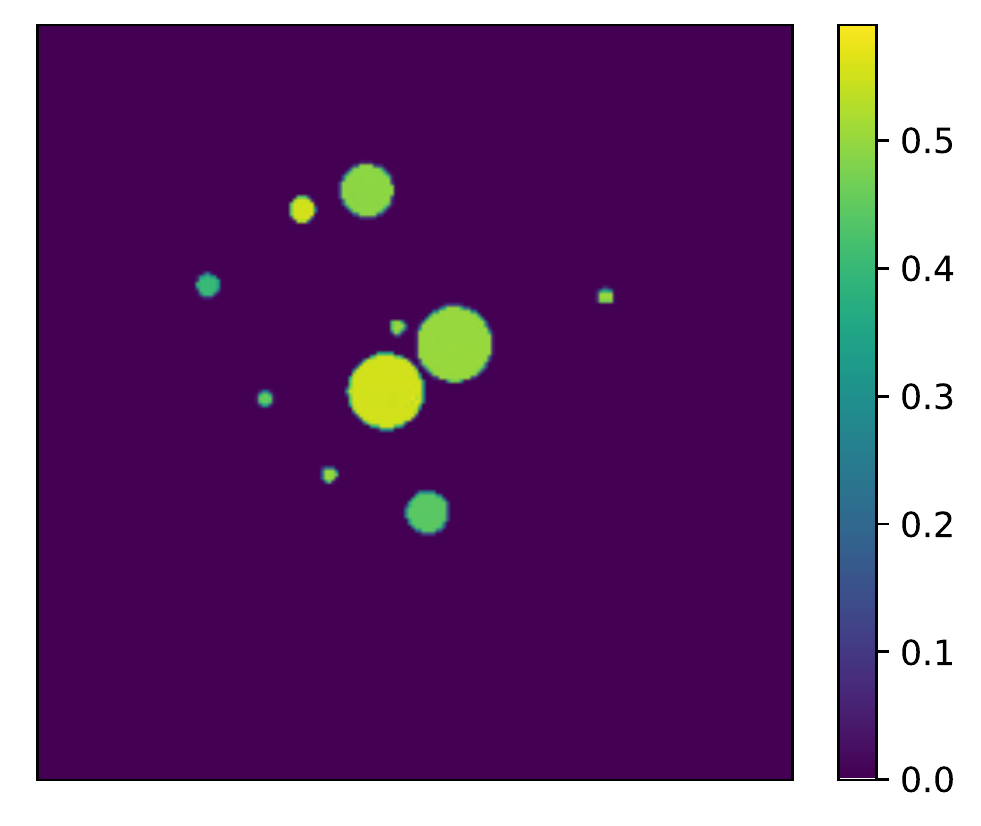} \\
{\scriptsize{(a) $\CX=\CY=\CL^2$ in high noise}}& {\scriptsize{(b) $\CX=\CY=\CL^{1.1}$ in high noise}}& {\scriptsize{(c) $\CX=\CY=\CL^{1.1}$, $p_{\CY}=1.1$ in high noise }}\\
{\scriptsize{$\delta_1(\xbs)/\delta_2(\xbs)$: $26.61/5.19$ }}& {\scriptsize{$\delta_1(\xbs)/\delta_2(\xbs)$: $5.37/1.54$ }}& {\scriptsize{$\delta_1(\xbs)/\delta_2(\xbs)$: $3.72/6.03\cdot\text{e-}3$}}
\end{tabular}
\caption{The reconstruction of the phantom from the observed sinograms, degraded with low {\rm(}top{\rm)} and high {\rm(}bottom{\rm)} salt-and-pepper noise, respectively, 
obtained using the Hilbert space  model {\rm(}$\CX=\CY=\CL^2${\rm)} {\rm(}left{\rm)}, the Banach model {\rm(}$\CX=\CY=\CL^{1.1}${\rm)} {\rm(}middle{\rm)} and the Banach model {\rm(}$\CX=\CY=\CL^{1.1}${\rm)} with the generalised Kaczmarz scheme {\rm(}$p_{\CY}=1.1${\rm)} {\rm(}right{\rm)}. The algorithms use $N_b=60$ batches and were run for $200$ epochs.}
\label{fig:CT_sp_60}
\end{figure}

Lastly, we investigate a more challenging setting with noise affecting not only the sinograms, but also the original phantoms. Then the ground-truth image is only approximately sparse.
The phantom is degraded with Gaussian noise (zero mean and variance $0.01$) after which
we apply the forward operator to the resulting noisy phantom. We then add either Gaussian (zero mean and variance $0.01$)  or salt-and-pepper noise (affecting $3\%$ of measurements); see Fig. \ref{fig:CT_prepost_data} for
representative images. The reconstruction algorithms use SGD with a decaying step-size schedule, $\mu_k = \frac{0.2}{1+0.05 (k/N_b)^{1/p^\ast+0.01}}$.

\begin{figure}[h!]
\centering
\small
\setlength{\tabcolsep}{0pt}
\begin{tabular}{ccc}
\includegraphics[height=.22\textwidth]{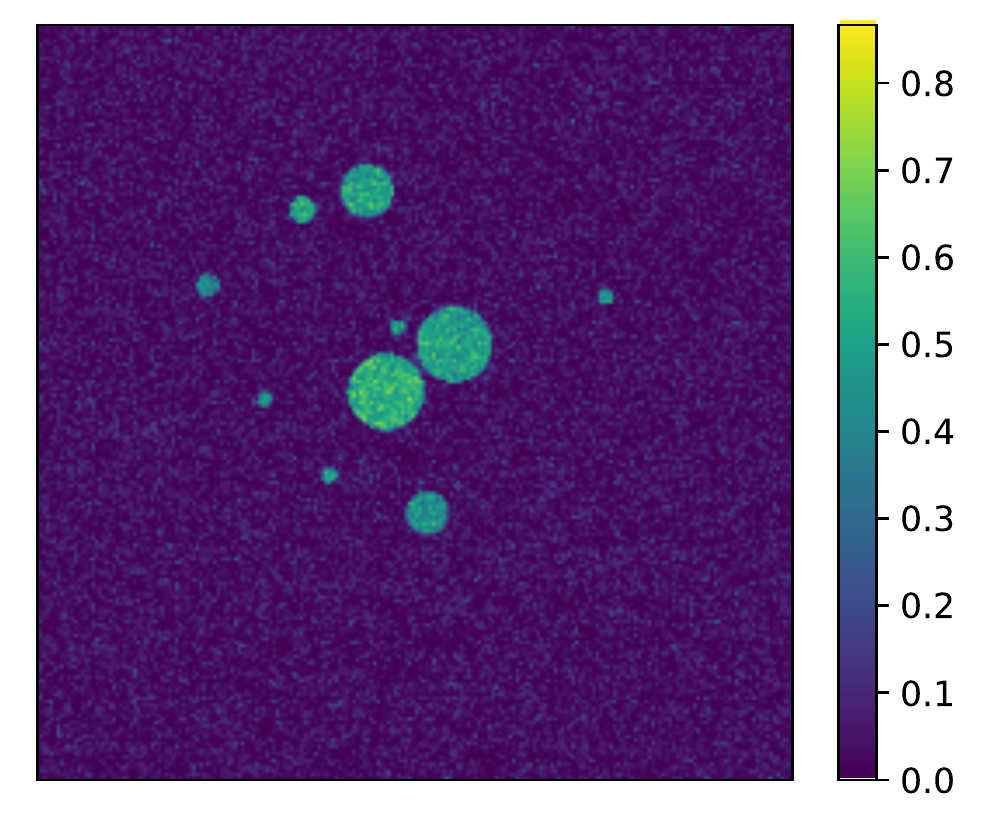}
& \includegraphics[height=.22\textwidth]{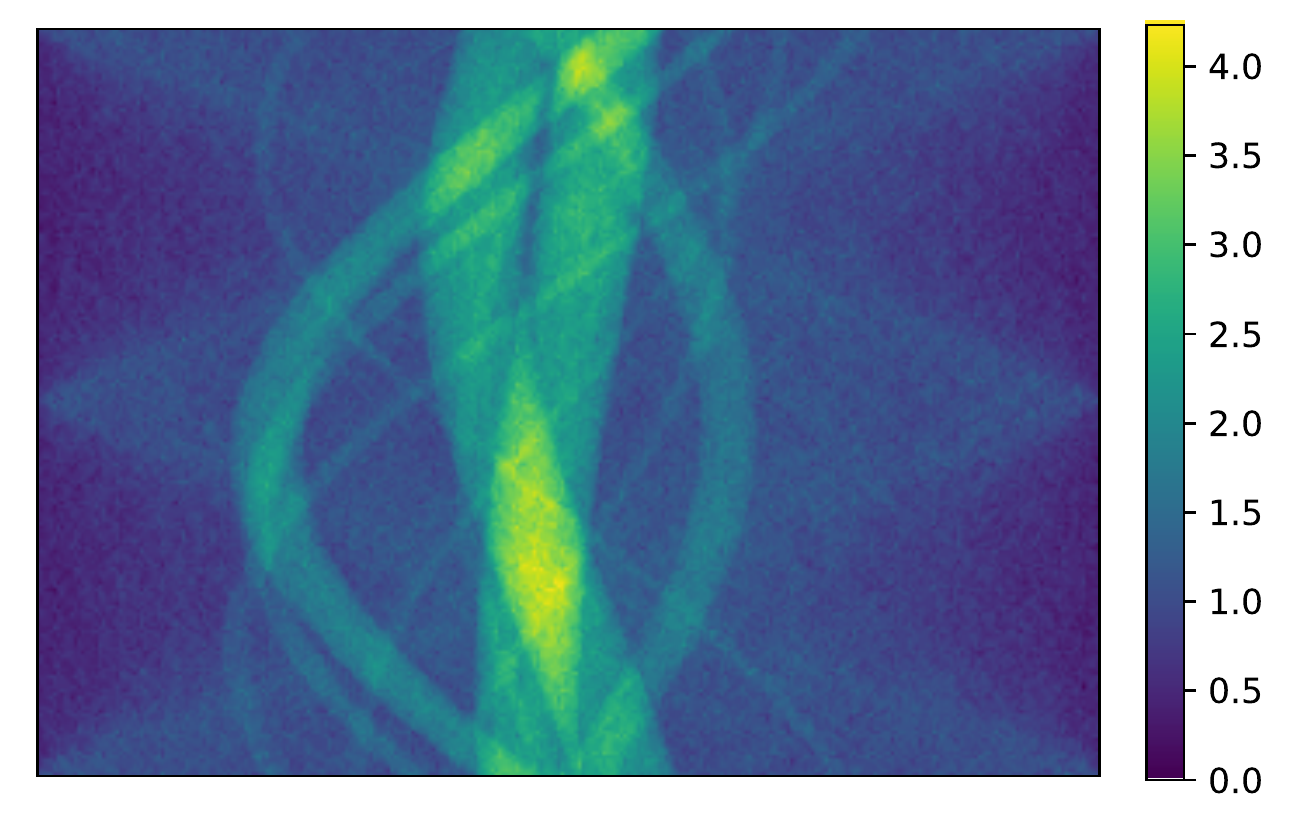} & \includegraphics[height=.22\textwidth]{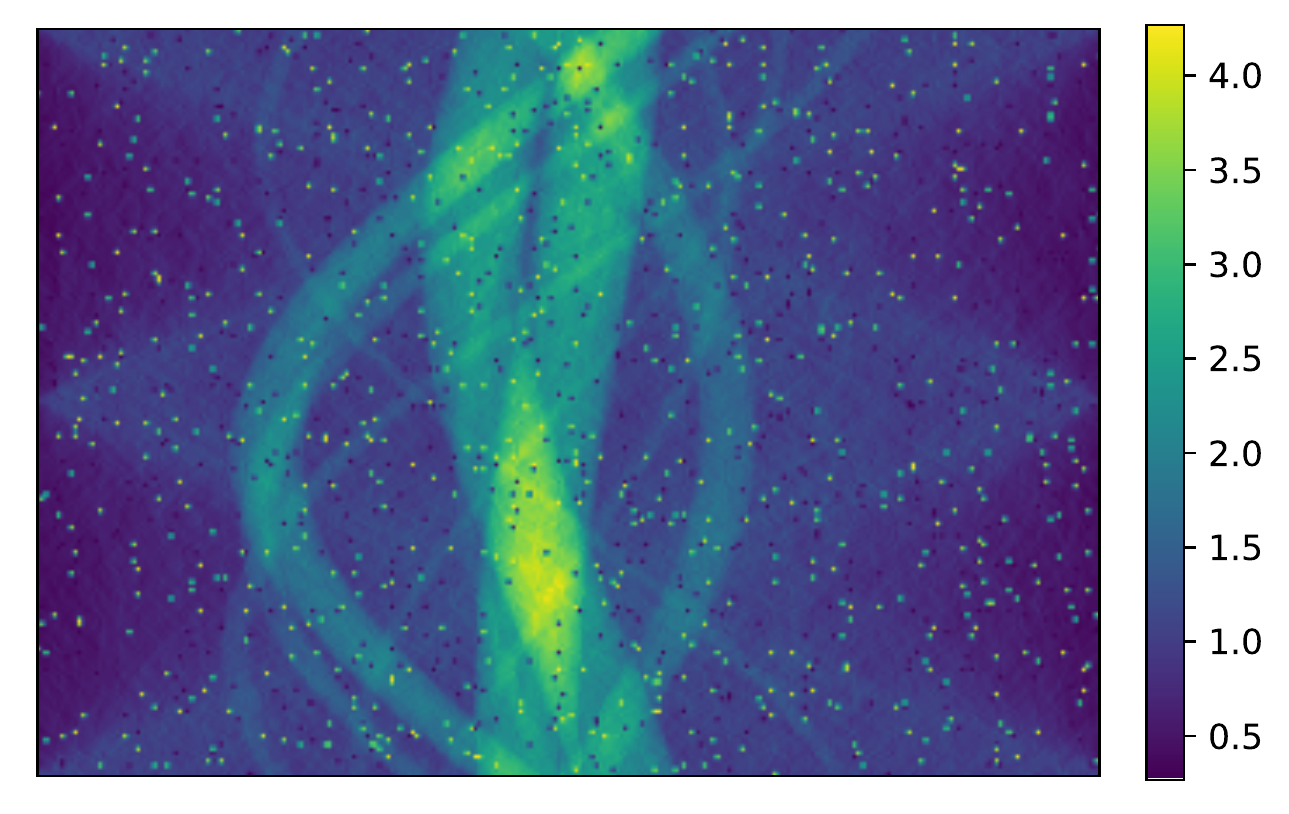} \\
{\scriptsize{(a) Noisy Phantom }}& {\scriptsize{(b) Gaussian measurement noise }}& {\scriptsize{(c) Salt-and-pepper measurement noise}}
\end{tabular}
\caption{The phantoms and sinograms for the forward model with both pre and post measurement noise. The phantom on the left is degraded by Gaussian noise. After applying the forward operator, either Gaussian {\rm(}middle{\rm)} or salt-and-pepper noise {\rm(}right{\rm)} is added to the sinogram.}
\label{fig:CT_prepost_data}
\end{figure} 

The reconstructions for data with Gaussian noise in both phantom and sinogram are shown in Fig. \ref{fig:CT_prepost_gaussgauss}. As before, reconstructions in the Hilbert setting are comparable, but slightly worse than that with the Banach ones. Banach methods are better at recovering the sparsity structure of the solution, and have better reconstruction quality metrics, though they do not completely remove the noise. 
In the second setting, with the Gaussian noise affecting the phantom and salt-and-pepper noise affecting the sinogram, the difference in reconstruction quality in the Hilbert space and Banach space settings is significantly more pronounced, cf. Fig. \ref{fig:CT_prepost_gausssnp}. In both settings, the choice of spaces $\CX$ and $\CY$ can have a big impact on the reconstruction quality, especially on the amount of noise retained in the background. Moreover, further improvements can be achieved by explicitly penalising the objective function.

\begin{figure}[h!]
\centering
\small
\setlength{\tabcolsep}{0pt}
\begin{tabular}{ccc}
\includegraphics[height=.27\textwidth]{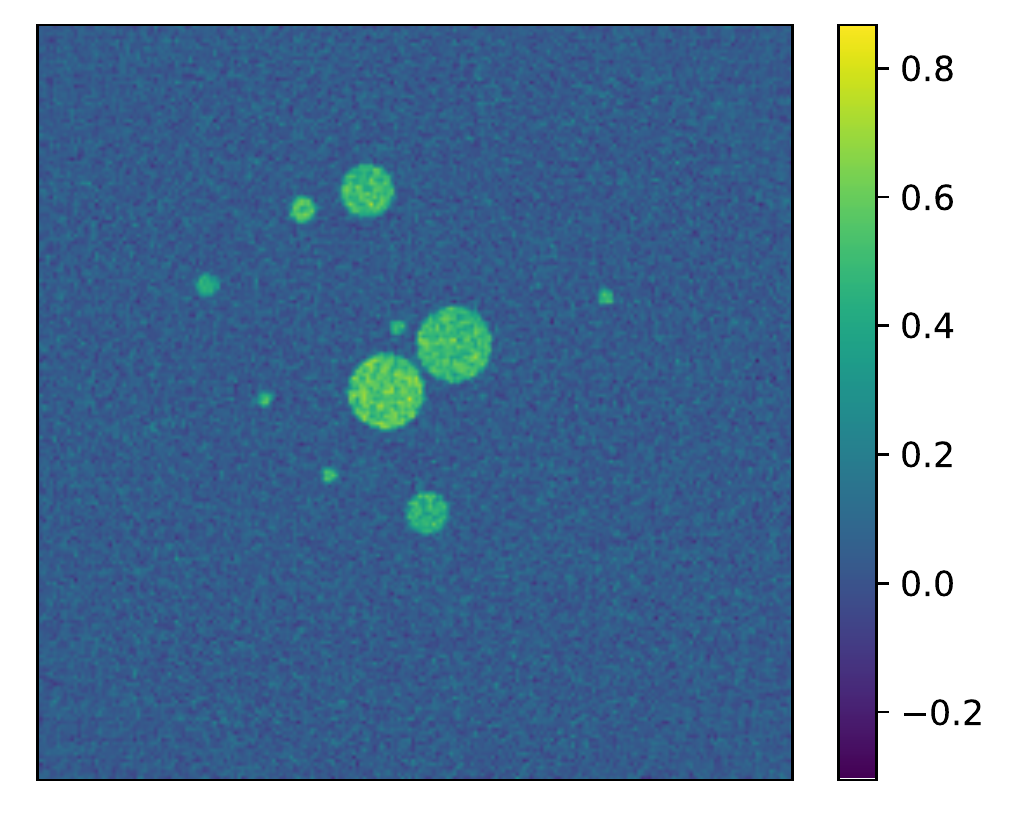}
& \includegraphics[height=.27\textwidth]{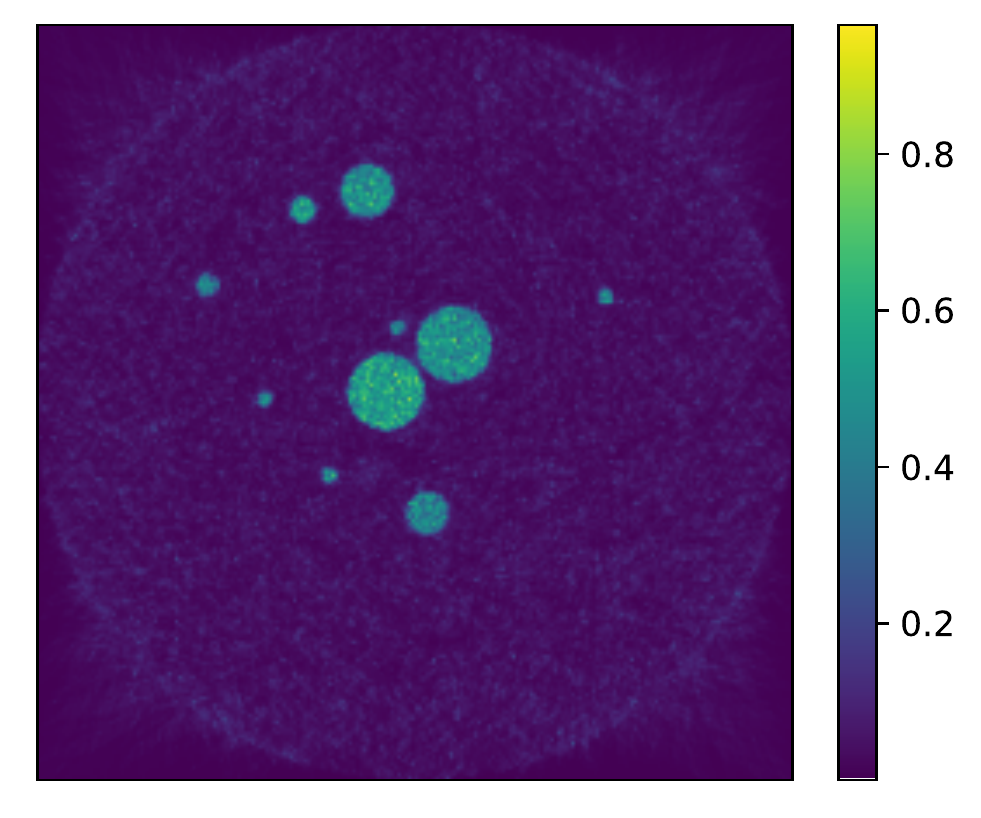} & \includegraphics[height=.27\textwidth]{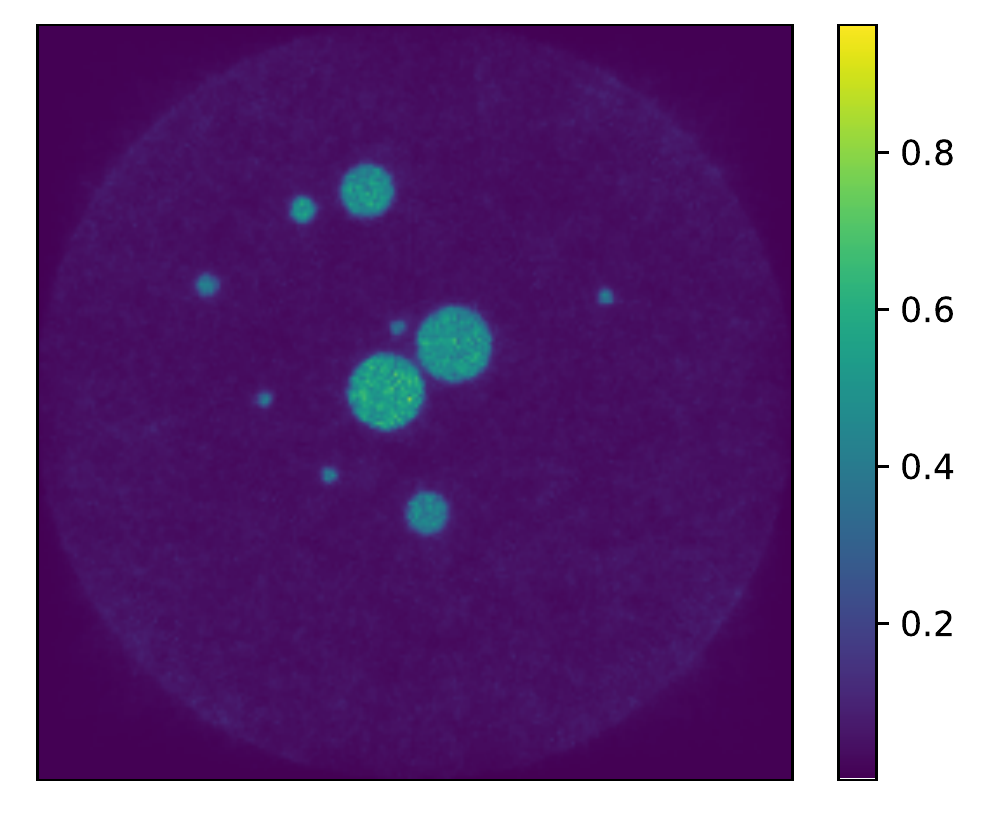} \\
{\scriptsize{(a) $\CX=\CY=\CL^2$ }}& {\scriptsize{(b) $\CX=\CY=\CL^{1.1}$, $p_{\CY}=1.1$ }}& {\scriptsize{(c) $\CX=\CL^{1.1}$, $\CY=\CL^{1.9}$, $p_{\CY}=1.9$ }}\\
{\scriptsize{$\delta_1(\xbs)/\delta_2(\xbs)$: $5.65/1.12$ }}& {\scriptsize{$\delta_1(\xbs)/\delta_2(\xbs)$: $3.16/0.632$}} & {\scriptsize{$\delta_1(\xbs)/\delta_2(\xbs)$: $2.99/0.561$}}
\end{tabular}
\caption{The reconstruction of the phantom from the observed sinograms with pre- and post-measurement Gaussian noise.
The algorithms use $N_b=60$ batches and were run for $200$ epochs.}
\label{fig:CT_prepost_gaussgauss}
\end{figure}

\begin{figure}[h!]
\centering
\small
\setlength{\tabcolsep}{0pt}
\begin{tabular}{ccc}
\includegraphics[height=.27\textwidth]{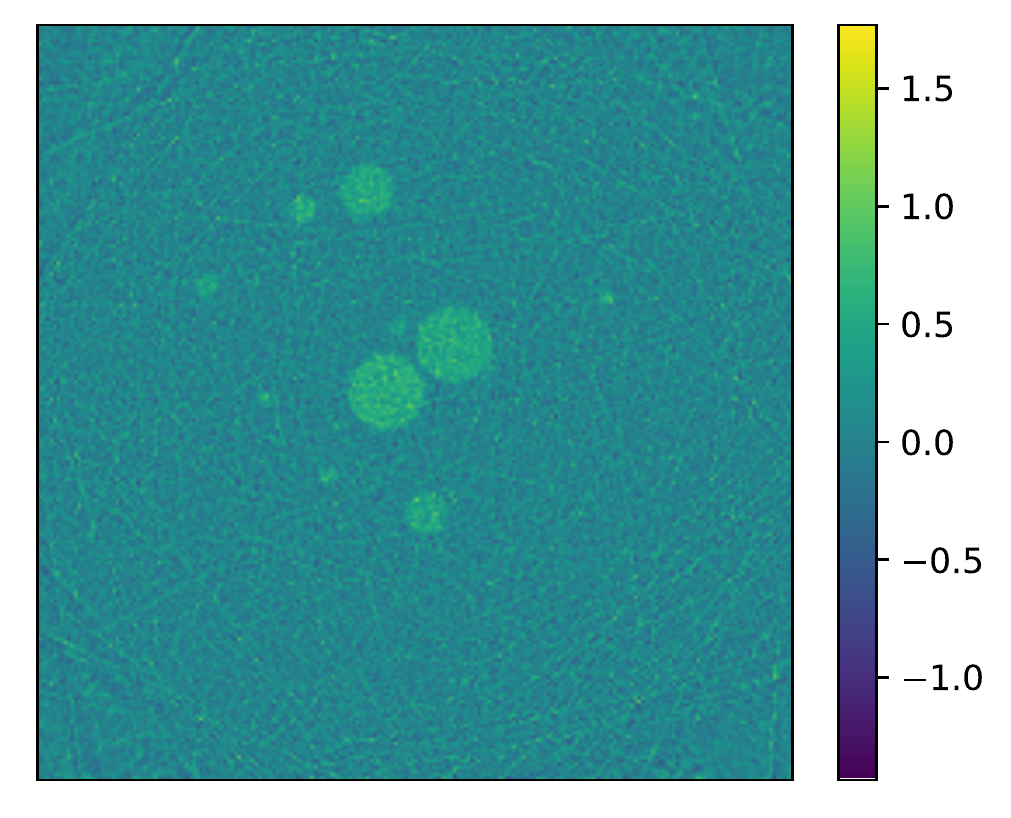}
& \includegraphics[height=.27\textwidth]{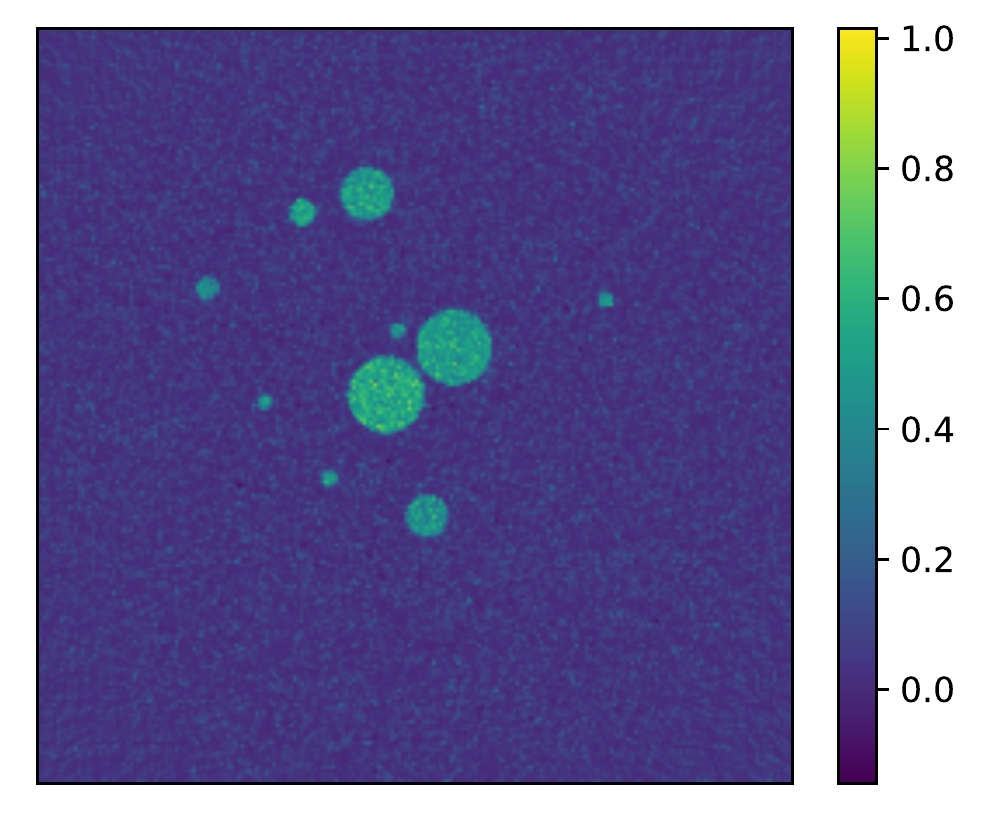} & \includegraphics[height=.27\textwidth]{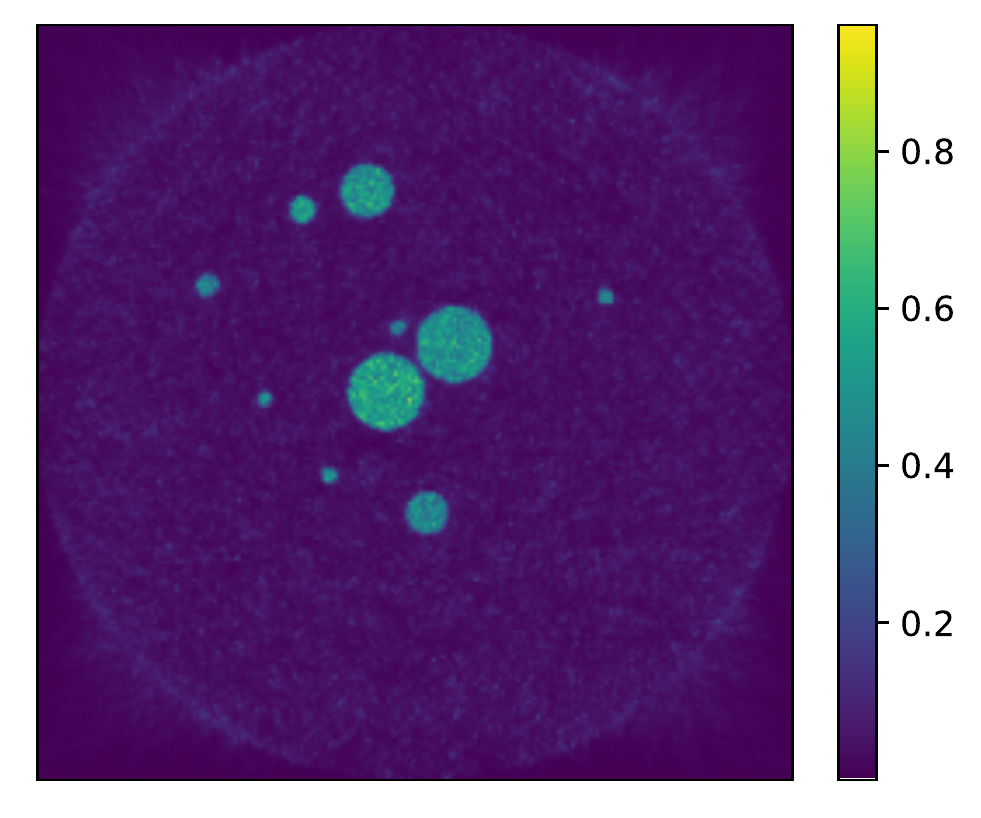} \\
(a) $\CX=\CY=\CL^2$ & (b)$\CX=\CY=\CL^{1.3}$, $p_{\CY}=1.3$ & (c) $\CX=\CY=\CL^{1.1}$, $p_{\CY}=1.1$ \\
$\delta_1(\xbs)/\delta_2(\xbs)$: $17.48/3.52$ & $\delta_1(\xbs)/\delta_2(\xbs)$: $3.46/0.86$ & $\delta_1(\xbs)/\delta_2(\xbs)$: $3.14/0.62$
\end{tabular}
\caption{The reconstructed phantom from the sinograms with a Gaussian pre-measurement and a salt-and-pepper {\rm(}post-{\rm)}measurement noise.
The algorithms use $N_b=60$ batches and were run for $400$ epochs.}
\label{fig:CT_prepost_gausssnp}
\end{figure}

\section*{Acknowledgements}
{We are very grateful to three anonymous referees for their constructive comments which have led to a significant improvement of the quality of the paper.}
\appendix
\section{Technical results and proofs}
\begin{lemma}[{\cite[Lemma 6]{P87}}]\label{lem:polyak_series}
Let $(\delta_n)_n$ be a sequence of non-negative scalars, $(\mu_n)_n$ a sequence of positive scalars, and $\alpha>0$.
If
\[
\delta_{n+1}\leq \delta_n-\mu_{n+1}\delta_n^{1+\alpha}, \text{ for all } n=0,\ldots,N,
\]
then
\[
\delta_N\leq {\delta_0}{\Big(1+\alpha\delta_0^\alpha\sum_{n=1}^{N}\mu_n \Big)^{-1/\alpha}}.
\]
\end{lemma}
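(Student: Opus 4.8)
The plan is to pass to the reciprocal power $\delta_n^{-\alpha}$ and turn the multiplicative one-step decay into an additive lower bound that telescopes. First I would dispose of the degenerate case: since $\mu_{n+1}>0$ and $\delta_n\geq 0$, the recursion gives $\delta_{n+1}\leq\delta_n$, so $(\delta_n)_n$ is non-increasing; if $\delta_m=0$ for some $m\leq N$ then $\delta_N=0$ and the claimed bound holds trivially (its right-hand side is non-negative). Hence I may assume $\delta_n>0$ for all $n\leq N$. Moreover, combining the hypothesis $\delta_{n+1}\leq\delta_n-\mu_{n+1}\delta_n^{1+\alpha}$ with $\delta_{n+1}\geq 0$ forces $\delta_n-\mu_{n+1}\delta_n^{1+\alpha}\geq 0$, i.e. $\mu_{n+1}\delta_n^\alpha\leq 1$, which is what makes the subsequent estimate well-defined.

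Next I would establish the per-step inequality $\delta_{n+1}^{-\alpha}\geq\delta_n^{-\alpha}+\alpha\mu_{n+1}$. Since $t\mapsto t^{-\alpha}$ is decreasing on $(0,\infty)$ and both $\delta_{n+1}$ and $\delta_n-\mu_{n+1}\delta_n^{1+\alpha}$ are positive, the recursion yields
\[
\delta_{n+1}^{-\alpha}\geq\big(\delta_n-\mu_{n+1}\delta_n^{1+\alpha}\big)^{-\alpha}=\delta_n^{-\alpha}\big(1-\mu_{n+1}\delta_n^\alpha\big)^{-\alpha}.
\]
The key analytic step is the generalised Bernoulli inequality $(1-t)^{-\alpha}\geq 1+\alpha t$, valid for $t\in[0,1)$ and $\alpha>0$ (equivalently, convexity of $t\mapsto(1-t)^{-\alpha}$ together with its tangent line at $t=0$). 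Applying it with $t=\mu_{n+1}\delta_n^\alpha\in[0,1)$ gives $\delta_{n+1}^{-\alpha}\geq\delta_n^{-\alpha}(1+\alpha\mu_{n+1}\delta_n^\alpha)=\delta_n^{-\alpha}+\alpha\mu_{n+1}$, as desired.

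Finally I would telescope this bound over $n=0,\ldots,N-1$ to obtain
\[
\delta_N^{-\alpha}\geq\delta_0^{-\alpha}+\alpha\sum_{n=1}^N\mu_n=\delta_0^{-\alpha}\Big(1+\alpha\delta_0^\alpha\sum_{n=1}^N\mu_n\Big),
\]
and conclude by raising both sides to the power $-1/\alpha$ (again a decreasing map), which reverses the inequality and yields exactly $\delta_N\leq\delta_0\big(1+\alpha\delta_0^\alpha\sum_{n=1}^N\mu_n\big)^{-1/\alpha}$. Since the result is an elementary scalar recursion, there is no serious obstacle; the only points requiring care are the degenerate case $\delta_n=0$ (and the associated verification $\mu_{n+1}\delta_n^\alpha\leq 1$) and the justification of the Bernoulli/convexity step, which is where essentially all the content concentrates.
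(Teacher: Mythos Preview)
Your argument is correct and is the standard proof of this recursion (the ``Polyak trick'' of passing to $\delta_n^{-\alpha}$, using the Bernoulli/convexity bound $(1-t)^{-\alpha}\ge 1+\alpha t$, and telescoping). The paper does not supply its own proof of this lemma; it is stated in the appendix with a citation to \cite[Lemma~6]{P87} and used as a black box in Theorem~\ref{thm:lin-main}, so there is nothing to compare against beyond noting that your proof matches the classical one.
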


\subsection{Two elementary estimates}
In this section, we present two elementary estimates on the SGD iterates for exact data that are useful in establishing the regularising property.
\begin{lemma}\label{lem:iterate_boundedness}
Let the sequence $(\xbs_k)_{k\in\bbN}$ be generated by iterations \eqref{eqn:sgd}, and let the step-sizes $(\mu_k)_{k\in\mathbb{N}}$ satisfy $\mu_k^{p^\ast-1}\leq\frac{p^\ast}{G_{p^\ast}L_{\max}^{p^\ast}}$ for all $k\in\bbN$, and stochastic update directions $\gbs_k$ be of the form \eqref{eqn:Kaczmarz_linear_gradients}. Then for any $\widehat\xbs\in\CX_{\min}$, the sequence $(\bregman{\xbs_k}{\widehat\xbs})_{k\in\bbN}$ is non-increasing. In particular, if $\bregman{\xbs_0}{\widehat\xbs}\leq \rho$, then $\bregman{\xbs_k}{\widehat\xbs}\leq\rho$ for all $k$.
\end{lemma}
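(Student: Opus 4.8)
The plan is to obtain the monotonicity directly from the descent property of Lemma \ref{lem:descent_property}, applied pathwise (i.e. for each fixed realisation of the sampled indices $i_k$), with $\widehat\xbs\in\CX_{\min}$ so that $\VA\widehat\xbs=\ybs$ and hence $\VA_i\widehat\xbs=\ybs_i$ for every $i\in[N]$. First I would specialise the two terms on the right-hand side of \eqref{eqn:descent_property} to the Kaczmarz update direction $\gbs_{k+1}=\VA_{i_{k+1}}^\ast\svaldmapY{p}(\VA_{i_{k+1}}\xbs_k-\ybs_{i_{k+1}})$. Using $\VA_{i_{k+1}}\widehat\xbs=\ybs_{i_{k+1}}$ and the defining property \eqref{eqn:dmap_defined} of the duality map gives the per-datum identity $\DP{\gbs_{k+1}}{\xbs_k-\widehat\xbs}=\DP{\svaldmapY{p}(\VA_{i_{k+1}}\xbs_k-\ybs_{i_{k+1}})}{\VA_{i_{k+1}}\xbs_k-\ybs_{i_{k+1}}}=p\Psi_{i_{k+1}}(\xbs_k)$, exactly as in the proof of Lemma \ref{lem:Kaczmarz-basic}. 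Likewise, $\xsN{\gbs_{k+1}}^{p^\ast}\leq\|\VA_{i_{k+1}}\|^{p^\ast}\ysN{\svaldmapY{p}(\VA_{i_{k+1}}\xbs_k-\ybs_{i_{k+1}})}^{p^\ast}\leq L_{\max}^{p^\ast}\yN{\VA_{i_{k+1}}\xbs_k-\ybs_{i_{k+1}}}^{(p-1)p^\ast}=pL_{\max}^{p^\ast}\Psi_{i_{k+1}}(\xbs_k)$, where the last equality uses $(p-1)p^\ast=p$.

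Substituting both expressions into \eqref{eqn:descent_property} with $\widehat\xbs$ in place of the generic point yields
\[
\bregman{\xbs_{k+1}}{\widehat\xbs}\leq\bregman{\xbs_k}{\widehat\xbs}-p\mu_{k+1}\Big(1-\tfrac{G_{p^\ast}L_{\max}^{p^\ast}}{p^\ast}\mu_{k+1}^{p^\ast-1}\Big)\Psi_{i_{k+1}}(\xbs_k).
\]
The step-size hypothesis $\mu_{k+1}^{p^\ast-1}\leq\frac{p^\ast}{G_{p^\ast}L_{\max}^{p^\ast}}$ makes the bracketed factor non-negative, and $\Psi_{i_{k+1}}(\xbs_k)\geq0$, so the subtracted term is non-negative and $\bregman{\xbs_{k+1}}{\widehat\xbs}\leq\bregman{\xbs_k}{\widehat\xbs}$. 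Since this holds for every $k$ and every realisation of the indices, $(\bregman{\xbs_k}{\widehat\xbs})_{k\in\bbN}$ is non-increasing; the ``in particular'' assertion then follows by a trivial induction, $\bregman{\xbs_k}{\widehat\xbs}\leq\bregman{\xbs_{k-1}}{\widehat\xbs}\leq\cdots\leq\bregman{\xbs_0}{\widehat\xbs}\leq\rho$.

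The only subtlety worth flagging is that the argument must be run pathwise rather than in expectation, so one should work with the per-datum objective $\Psi_{i_{k+1}}$ (not the averaged $\Psi$) and the per-datum version of the identity from the proof of Lemma \ref{lem:Kaczmarz-basic}; once both terms on the right of the descent property are expressed through the single quantity $\Psi_{i_{k+1}}(\xbs_k)$, the step-size bound finishes the proof. No uniform convexity or smoothness of $\CY$ is needed, only Assumption \ref{ass:space-basic}; in particular Robbins--Siegmund theorem is not invoked here.
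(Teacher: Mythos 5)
Your proposal is correct and follows essentially the same route as the paper's proof: apply the descent property of Lemma \ref{lem:descent_property} pathwise with $\widehat\xbs\in\CX_{\min}$, evaluate $\DP{\gbs_{k+1}}{\xbs_k-\widehat\xbs}=\yN{\VA_{i_{k+1}}\xbs_k-\ybs_{i_{k+1}}}^p$ via the duality-map identity, bound $\xsN{\gbs_{k+1}}^{p^\ast}\le L_{\max}^{p^\ast}\yN{\VA_{i_{k+1}}\xbs_k-\ybs_{i_{k+1}}}^p$, and absorb the correction term using the step-size condition. Writing the two terms through $\Psi_{i_{k+1}}(\xbs_k)$ rather than the norm is only a notational difference; the argument and the constants match the paper exactly.
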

\begin{proof}
Let $\Delta_k=\bregman{\xbs_k}{\widehat\xbs}$. By Lemma \ref{lem:descent_property}, we have
\begin{align*}
\Delta_{k+1} &\leq \Delta_k-\mu_{k+1}\DP{\gbs_{k+1}}{\xbs_k-\widehat\xbs} + \frac{G_{p^\ast}}{p^\ast}\mu_{k+1}^{p^\ast}\xsN{\gbs_{k+1}}^{p^\ast}.
\end{align*}
By the definition of duality map and the choice of the update directions $\gbs_k$, we have
\begin{align*}
\DP{\gbs_{k+1}}{\xbs_k-\widehat\xbs}&=\DP{\svaldmapY{p}(\VA_{i_{k+1}}\xbs_k-\ybs_{i_{k+1}})}{\VA_{i_{k+1}}\xbs_k-\ybs_{i_{k+1}}}=\yN{\VA_{i_{k+1}}\xbs_k-\ybs_{i_{k+1}}}^p,\\
\xsN{\gbs_{k+1}}^{p^\ast}&=\xsN{\VA_{i_{k+1}}^\ast \svaldmapY{p}(\VA_{i_{k+1}}\xbs_k-\ybs_{i_{k+1}})}^{p^\ast}\leq \|\VA_{i_{k+1}}^\ast\|^{p^\ast}\ysN{\svaldmapY{p}(\VA_{i_{k+1}}\xbs_k-\ybs_{i_{k+1}})}^{p^\ast}\\
&\leq L_{\max}^{p^\ast}\yN{\VA_{i_{k+1}}\xbs_k-\ybs_{i_{k+1}}}^{(p-1)p^\ast}=L_{\max}^{p^\ast} \yN{\VA_{i_{k+1}}\xbs_k-\ybs_{i_{k+1}}}^{p}.
\end{align*}
Consequently,
\begin{align*}
\Delta_{k+1}&\leq\Delta_k-\mu_{k+1}\DP{\gbs_{k+1}}{\xbs_k-\widehat\xbs} + \frac{G_{p^\ast}}{p^\ast}\mu_{k+1}^{p^\ast}\xsN{\gbs_{k+1}}^{p^\ast}\\
&\leq \Delta_k-\Big(1-L_{\max}^{p^\ast}\frac{G_{p^\ast}}{p^\ast}\mu_{k+1}^{p^\ast-1}\Big)\mu_{k+1}\yN{\VA_{i_{k+1}}\xbs_k-\ybs_{i_{k+1}}}^{p}.
\end{align*}
Since $\mu_k^{p^\ast-1}\leq\frac{p^\ast}{G_{p^\ast}L_{\max}^{p^\ast}}$ by assumption, $\Delta_{k+1}\leq \Delta_k\leq \Delta_0$, completing the proof. 
\end{proof}

\begin{lemma}[Coercivity of the Bregman distance]\label{lem:bregman_bound_xk_bound}
If $\Delta_k=\bregman{\xbs_k}{\xref}\leq C<\infty$ for all $k$, then $\xN{\xbs_k}^p\leq (2p^\ast)^{p}(\xN{\xref}^p\vee C)$, for all $k\in\bbN$.
\end{lemma}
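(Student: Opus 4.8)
The plan is to unfold the Bregman distance by Definition \ref{defn:Bregman_distance}, isolate $\xN{\xbs_k}^p$, and control the remaining cross term by the Cauchy--Schwarz inequality \eqref{eqn:Cauchy_Schwarz} together with the defining identity $\xsN{\dmapX{p}(\xbs_k)}=\xN{\xbs_k}^{p-1}$ from Definition \ref{defn:duality_map}. Writing $\Delta_k=\bregman{\xbs_k}{\xref}=\tfrac{1}{p^\ast}\xN{\xbs_k}^p+\tfrac1p\xN{\xref}^p-\DP{\dmapX{p}(\xbs_k)}{\xref}$ and bounding $\DP{\dmapX{p}(\xbs_k)}{\xref}\le\xsN{\dmapX{p}(\xbs_k)}\xN{\xref}=\xN{\xbs_k}^{p-1}\xN{\xref}$, one obtains, after discarding the non-negative term $\tfrac1p\xN{\xref}^p$,
\[
\tfrac1{p^\ast}\xN{\xbs_k}^p\le\Delta_k+\xN{\xbs_k}^{p-1}\xN{\xref}.
\]

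Next I would treat the trivial case $\xbs_k=0$ separately and then split according to which of the two terms on the right dominates. If $\xN{\xbs_k}^{p-1}\xN{\xref}\le\Delta_k$, the displayed bound gives $\xN{\xbs_k}^p\le 2p^\ast\Delta_k\le 2p^\ast C$; if instead $\Delta_k<\xN{\xbs_k}^{p-1}\xN{\xref}$, then $\tfrac1{p^\ast}\xN{\xbs_k}^p\le 2\xN{\xbs_k}^{p-1}\xN{\xref}$, and dividing through by $\xN{\xbs_k}^{p-1}>0$ yields $\xN{\xbs_k}\le 2p^\ast\xN{\xref}$, hence $\xN{\xbs_k}^p\le(2p^\ast)^p\xN{\xref}^p$. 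Since $2p^\ast\le(2p^\ast)^p$ for $p>1$, in either case $\xN{\xbs_k}^p\le(2p^\ast)^p(C\vee\xN{\xref}^p)$, which is the claim.

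This is a purely pointwise estimate, so I do not anticipate a genuine obstacle; the only point requiring a little care is extracting the clean constant $(2p^\ast)^p$ from the cross term $\xN{\xbs_k}^{p-1}\xN{\xref}$ — the dichotomy above does this, whereas a single (weighted) application of Young's inequality would also close the argument but with a slightly larger constant. No properties of the iteration \eqref{eqn:sgd} enter; everything follows from the definitions of the Bregman distance and of the duality map.
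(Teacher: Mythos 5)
Your proof is correct and follows essentially the same route as the paper's: both unfold the Bregman distance, bound the cross term via Cauchy--Schwarz and $\xsN{\dmapX{p}(\xbs_k)}=\xN{\xbs_k}^{p-1}$, drop the non-negative $\tfrac1p\xN{\xref}^p$, and close with a two-case dichotomy yielding the constant $(2p^\ast)^p$. The only cosmetic difference is the choice of dichotomy (you split on whether $\xN{\xbs_k}^{p-1}\xN{\xref}\le\Delta_k$, the paper on whether $\xN{\xbs_k}\le 2p^\ast\xN{\xref}$), which changes nothing of substance.
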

\begin{proof}
By the definition of $\Delta_k$ and the Cauchy-Schwarz inequality, we have
\[ \Delta_k \ge \frac{1}{p^\ast}\xN{\xbs_{k}}^p+\frac{1}{p}\xN{\xref}^p-\xN{\xref}\xN{\xbs_k}^{p-1}.\]
Then we have $\xN{\xbs_{k}}^{p-1}(\frac{1}{p^\ast}\xN{\xbs_{k}}-\xN{\xref})\le \Delta_k.$
If now $\frac{1}{p^\ast}\xN{\xbs_{k}}-\xN{\xref}\le\frac{1}{2p^\ast}\xN{\xbs_{k}}$, it follows $\xN{\xbs_{k}}^p\le(2p^\ast)^{p}\xN{\xref}^p$.
Otherwise, if $\frac{1}{p^\ast}\xN{\xbs_{k}}-\xN{\xref}\ge\frac{1}{2p^\ast}\xN{\xbs_{k}}$, we have
\[ \frac{1}{2p^\ast}\xN{\xbs_{k}}^p \le \xN{\xbs_{k}}^{p-1}\LRR{\frac{1}{p^\ast}\xN{\xbs_{k}}-\xN{\xref}}\le \Delta_k.\]
Combining these two bounds gives $\xN{\xbs_{k}}^p\le(2p^\ast)^{p}\LRR{\xN{\xref}^p\vee\Delta_k}$.
\end{proof}

\subsection{Proof of Lemma \ref{lem:coupled_noise_convergence}}
To prove Lemma \ref{lem:coupled_noise_convergence}, we need the following simple fact.
\begin{lemma}\label{lem:uniform_bounded_on_filtration}
For any fixed $k\in\bbN$, the clean iterates $\xbs_k$ generated by \eqref{eqn:sgd_cleaniterates} are uniformly bounded, i.e. there exists $C_k>0$ such that
$\sup_{\omega\in\CF_k} \xN{\xbs_k}\leq C_k<\infty.$
\end{lemma}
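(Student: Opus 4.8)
The plan is to prove this by induction on $k$, producing an explicit deterministic bound $C_k$ (which is allowed to depend on $k$). For the base case $k=0$ the iterate $\xbs_0$ is deterministic, so one takes $C_0=\xN{\xbs_0}$. For the inductive step, assume $\xN{\xbs_k}\le C_k$ for every outcome in $\CF_k$.

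First I would bound the stochastic direction uniformly over the realised index $i_{k+1}\in[N]$: by the triangle inequality
\[
\yN{\VA_{i_{k+1}}\xbs_k-\ybs_{i_{k+1}}}\le L_{\max}\xN{\xbs_k}+\max_{i\in[N]}\yN{\ybs_i}\le L_{\max}C_k+\max_{i\in[N]}\yN{\ybs_i}=:R_k,
\]
and then, using the estimate $\xsN{g(\xbs,\ybs,i)}\le L_{\max}\yN{\VA_i\xbs-\ybs_i}^{p-1}$ already derived in the proof of Theorem \ref{thm:as_lin_convergence}, one gets $\xsN{\gbs_{k+1}}\le L_{\max}R_k^{p-1}=:D_k<\infty$.

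Next I would transfer this bound through the duality maps. Applying $\dmapX{p}$ to \eqref{eqn:sgd_cleaniterates} and using $\dmapXs{p}=(\dmapX{p})^{-1}$ (Theorem \ref{rem:dmap_properties}(iii), valid under Assumption \ref{ass:space-basic}) gives $\dmapX{p}(\xbs_{k+1})=\dmapX{p}(\xbs_k)-\mu_{k+1}\gbs_{k+1}$, so that by Definition \ref{defn:duality_map} and the triangle inequality in $\CX^\ast$,
\[
\xN{\xbs_{k+1}}^{p-1}=\xsN{\dmapX{p}(\xbs_{k+1})}\le\xsN{\dmapX{p}(\xbs_k)}+\mu_{k+1}\xsN{\gbs_{k+1}}=\xN{\xbs_k}^{p-1}+\mu_{k+1}\xsN{\gbs_{k+1}}\le C_k^{p-1}+\mu_{k+1}D_k.
\]
Setting $C_{k+1}:=\big(C_k^{p-1}+\mu_{k+1}D_k\big)^{1/(p-1)}$ closes the induction and yields the claim with a bound uniform over $\CF_k$.

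I do not expect a genuine obstacle here: the statement is elementary, and the only subtlety is that $C_k$ is permitted to grow with $k$ --- uniformity in $k$ (for exact data, under the extra step-size condition $\mu_k^{p^\ast-1}\le p^\ast/(G_{p^\ast}L_{\max}^{p^\ast})$) is the separate content of Lemma \ref{lem:iterate_boundedness}. As an even shorter alternative, one could note that $\CF_k=\sigma(i_1,\ldots,i_k)$ is generated by finitely-many-valued random variables and hence has at most $N^k$ atoms; on each atom $\xbs_k$ is a single deterministic element of $\CX$, and the maximum of these finitely many norms is finite. I would keep the inductive argument in the main text, since the explicit constant is what is used downstream.
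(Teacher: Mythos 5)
Your inductive argument is correct and is essentially the paper's own proof: the paper likewise applies the triangle inequality in $\CX^\ast$ to $\dmapX{p}(\xbs_{k+1})=\dmapX{p}(\xbs_k)-\mu_{k+1}\gbs_{k+1}$, uses $\xN{\xbs_{k+1}}^{p-1}=\xsN{\dmapX{p}(\xbs_{k+1})}$ and the bound $\xsN{\gbs_{k+1}}\le L_{\max}\yN{\VA_{i_{k+1}}\xbs_k-\ybs_{i_{k+1}}}^{p-1}$, the only cosmetic difference being that the paper controls the residual through $\xN{\xbs_k-\xref}$ rather than through $\max_i\yN{\ybs_i}$ (and it additionally remarks that under the step-size condition of Lemma \ref{lem:iterate_boundedness} one gets a $k$-independent constant, which you also note). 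Your finite-atoms observation is a valid and even more elementary alternative, but the main route matches the paper.
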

\begin{proof}
If stepsizes $\mu_k$ satisfy the conditions of Lemma \ref{lem:iterate_boundedness}, the statement 
is direct from Lemma \ref{lem:bregman_bound_xk_bound}, and moreover, $C_k$ can be chosen to 
be independent of $k$. Otherwise we proceed by induction.
The induction basis is trivial. Indeed, by the triangle inequality and the definition of duality maps, we have
\begin{align*}
    \xN{\xbs_{k+1}}^{p-1}&=\xsN{\dmapX{p}(\xbs_k)-\mu_{k+1}\gbs_{k+1}}\\
    &\leq \xN{\xbs_k}^{p-1} + L_{\max}\mu_{k+1}\ysN{\svaldmapY{p}(\VA_{i_{k+1}}\xbs_k-\ybs_{i_{k+1}})}\\
    &\leq \xN{\xbs_k}^{p-1} + L_{\max}\mu_{k+1} \yN{\VA_{i_{k+1}}\xbs_k-\ybs_{i_{k+1}}}^{p-1}\\
    &\leq \xN{\xbs_k}^{p-1} +  L_{\max}^p\mu_{k+1}\xN{\xbs_k-\xref}^{p-1}.
\end{align*}
Now under the inductive hypothesis $\sup_{\omega\in\CF_k} \xN{\xbs_k}\leq C_k<\infty$, we have
\begin{align*}
    \xN{\xbs_{k+1}}^{p-1}&\leq \xN{\xbs_k}^{p-1} + L_{\max}^p\mu_{k+1} (\xN{\xbs_k}^{p-1}+\xN{\xref}^{p-1})\\
    &\leq C_k^{p-1}(1+L_{\max}^p\mu_{k+1}) + L_{\max}^p\mu_{k+1}\xN{\xref}^{p-1}.
\end{align*}
This directly proves the statement of the lemma.
\end{proof}

Now we can present the proof of Lemma \ref{lem:coupled_noise_convergence}.
\begin{proof}[Proof of Lemma \ref{lem:coupled_noise_convergence}]
For any sequence $(\delta_j)_{j\in\bbN}$, with $\lim_{j\rightarrow\infty} \delta_j=0$, we consider 
a sequence of random vectors $(\xbs_k^{\delta_j},\xbs_k)_{j\in\bbN}$. We will show by induction that 
(for any fixed $k\in\bbN$) the sequence $(\bregman{\xbs^{\delta_j}_k}{\xbs_k})_j$ is uniformly 
bounded, i.e. $\sup_{\omega\in\CF_k} \bregman{\xbs^{\delta_j}_k}{\xbs_k}<\infty$, converges to 
$0$ point-wise, and that $\xbs_k^{\delta_j}$ is uniformly bounded. The remaining two claims regarding 
the convergence of $\xN{\xbs^{\delta}_k -\xbs_k}$ and $\xsN{\dmapX{p}(\xbs_k^\delta)-\dmapX{p}(\xbs_k)}$ then follow directly.
For notational brevity, we also suppress the sequence notation $\delta_j$, and only use $\delta$. For the induction base, by Theorem \ref{thm:bregman_properties}(i) and (iv), we have
\begin{align*}
    &\bregman{\xbs^\delta_1}{\xbs_1}=\bregmanS{\xbs_1}{\xbs^\delta_1}\\
    \leq& \frac{G_{p^\ast}}{p^\ast} \xsN{\dmapX{p}(\xbs_0)-\dmapX{p}(\xbs_0)-\mu_1(\gbs_1^\delta- \gbs_1)}^{p^\ast}
    =\frac{G_{p^\ast}}{p^\ast}\mu_1^{p^\ast}\xsN{\gbs_1^\delta- \gbs_1}^{p^\ast},
\end{align*}
where $\gbs_1^\delta=g(\xbs_0, \ybs^\delta, i_1)$ and $\gbs_1=g(\xbs_0, \ybs, i_1)$.
Specifically, in the case \eqref{eqn:Kaczmarz_linear_gradients}, we have
\begin{align*}
    \xsN{\gbs_1^\delta\!-\! \gbs_1\!}^{p^\ast} &\!=\!\xsN{\VA_{i_1}^\ast\!\big(\svaldmapY{p}(\VA_{i_1}\!\xbs_0 \!-\!\ybs_{i_1}^\delta\!)\!-\!\svaldmapY{p}(\VA_{i_1}\!\xbs_0\!-\!\ybs_{i_1}\!)\big)\!}^{p^\ast}\!\\
    &\leq\! L_{\max}^{p^\ast} \ysN{\svaldmapY{p}(\VA_{i_1}\!\xbs_0\!-\!\ybs^\delta_{i_1}\!)\!-\!\svaldmapY{p}(\VA_{i_1}\!\xbs_0\!-\!\ybs_{i_1}\!)\!}^{p^\ast}.
\end{align*}
Since $\CY$ is by assumption uniformly smooth, by Theorem \ref{rem:dmap_properties}(iv), we have
\begin{align*}
   &\ysN{\svaldmapY{p}(\VA_{i_1}\!\xbs_0 \!-\!\ybs_{i_1}^\delta\!)\!-\!\svaldmapY{p}(\VA_{i_1}\!\xbs_0 \!-\!\ybs_{i_1}\!)\!}
    \!\\
\leq& C\! \max\{1,\yN{\VA_{i_1}\!\xbs_0 \!-\!\ybs_{i_1}^\delta\!},\yN{\VA_{i_1}\!\xbs_0\! -\!\ybs_{i_1}\!}\}^{p-1}\bar{\rho}_\CY(\yN{\ybs_{i_1}\!-\!\ybs_{i_1}^\delta\!}\!).
\end{align*}
Upon maximising over $\CF_1$, the term in the maximum is uniformly bounded.
Since $\bar{\rho}_\CY:=\rho_\CY(\tau)/\tau\leq 1$, $\bregman{\xbs_1^\delta}{\xbs_1}$ is uniformly bounded.
Since $\lim_{\tau\rightarrow0}\bar{\rho}_\CY(\tau)=0$, it follows that $\lim_{\delta\searrow0}\bregman{\xbs_1^\delta}{\xbs_1}=0$, point-wise.
By the $p$-convexity of $\CX$, we have
\begin{align*}
    0\leq\frac{C_p}{p}\xN{\xbs_1^\delta-\xbs_1}^p\leq\bregman{\xbs_1^\delta}{\xbs_1}.
\end{align*}
Thus,  $\xN{\xbs_1^\delta-\xbs_1}$ is uniformly bounded and $\lim_{\delta\searrow0}\xN{\xbs_1^\delta-\xbs_1}=0$, point-wise.
By the uniform boundedness of $\xN{\xbs_1^\delta-\xbs_1}$ and Lemma \ref{lem:uniform_bounded_on_filtration}, the sequence $\xbs_1^\delta$ is also uniformly bounded:
\begin{align}\label{eqn:bound_on_noisy_iterates}
\xN{\xbs^{\delta}_1}\leq \xN{\xbs_1^\delta-\xbs_1} + \xN{\xbs_1}.
\end{align}
For some $k>0$, assume that $\bregman{\xbs_k^\delta}{\xbs_k}$ is uniformly bounded, converges to $0$ point-wise, as $\delta\to 0^+$.
Using the $p$-convexity of $\CX$, it follows that $\xN{\xbs_k^\delta-\xbs_k}$ is uniformly bounded and converges to $0$ point-wise, and using again Lemma \ref{lem:uniform_bounded_on_filtration}, it follows that $\xbs_k^\delta$ is also uniformly bounded. Then by Theorem \ref{thm:bregman_properties}(i) and (iv), we have
\begin{align*}
    &\bregman{\xbs^\delta_{k+1}}{\xbs_{k+1}}=\bregmanS{\xbs_{k+1}}{\xbs^\delta_{k+1}}\\
    \leq& \frac{G_{p^\ast}}{p^\ast} \xsN{\dmapX{p}(\xbs_k^\delta)-\dmapX{p}(\xbs_k)-\mu_{k+1}(\gbs_{k+1}^\delta- \gbs_{k+1})}^{p^\ast}\\
    \leq&\frac{G_{p^\ast}}{p^\ast}\big(\xsN{\dmapX{p}(\xbs_k^\delta)-\dmapX{p}(\xbs_k)}+\mu_{k+1}\xsN{\gbs_{k+1}^\delta- \gbs_{k+1}}\big)^{p^\ast}.
\end{align*}
Now we separately analyse the two terms in the parenthesis. First, using the uniform smoothness of $\CX$ (and Theorem \ref{rem:dmap_properties}(iv) with  $\bar{\rho}_{\CX^\ast}(\tau)<C\tau^{p^\ast-1}$, cf. Definition \ref{defn:smoothness_and_convexity}), we have
\begin{align}\label{eqn:bounds_on_duality_distance}
\xsN{\dmapX{p}(\xbs_k^\delta)-\dmapX{p}(\xbs_k)} \leq C \max\{1, \xN{\xbs_k^\delta},\xN{\xbs_k}\}^{p-1} \bar{\rho}_\CX(\xN{\xbs^\delta_k-\xbs_k}).
\end{align}
Since the right hand side is uniformly bounded and converges to $0$ point-wise by the induction hypothesis, the same holds for the left hand side.
Next we decompose the second term into a sum of two perturbation terms
\begin{align*}
    \xsN{g(\xbs_k^\delta,\ybs^\delta, i_{k+1})-g(\xbs_k,\ybs, i_{k+1})}&\leq
     \xsN{g(\xbs_k,\ybs^\delta, i_{k+1})-g(\xbs_k,\ybs, i_{k+1})}\\
    &\quad + \xsN{g(\xbs_k^\delta,\ybs^\delta, i_{k+1})-g(\xbs_k,\ybs^\delta, i_{k+1})}:={\rm I}+{\rm II}.
\end{align*}
First, by the assumption $\CY$ being uniformly smooth and Theorem \ref{rem:dmap_properties}(iv), we have
\begin{align*}
{\rm I} &=\xsN{\VA_{i_{k+1}}^\ast\big(\svaldmapY{p}(\VA_{i_{k+1}}\xbs_k -\ybs^\delta_{i_{k+1}})-\svaldmapY{p}(\VA_{i_{k+1}}\xbs_k -\ybs_{i_{k+1}})\big)} \\
&\leq L_{\max} \ysN{\svaldmapY{p}(\VA_{i_{k+1}}\xbs_k -\ybs^\delta_{i_{k+1}})-\svaldmapY{p}(\VA_{i_{k+1}}\xbs_k -\ybs_{i_{k+1}})}\\
    &\leq C L_{\max} \max\{1,\yN{\VA_{i_{k+1}}\xbs_k -\ybs^\delta_{i_{k+1}}},\yN{\VA_{i_{k+1}}\xbs_k -\ybs_{i_{k+1}}}\}^{p-1}\bar{\rho}_\CY(\yN{\ybs_{i_{k+1}}-\ybs_{i_{k+1}}^\delta}).
\end{align*}
By the induction hypothesis and repeating the arguments from the base of induction, the right hand side is uniformly bounded and converges to $0$ point-wise. Second, similarly, we have
\begin{align*}
{\rm II} &=\xsN{\VA_{i_{k+1}}^\ast\big(\svaldmapY{p}(\VA_{i_{k+1}}\xbs_k^\delta -\ybs^\delta_{i_{k+1}})-\svaldmapY{p}(\VA_{i_{k+1}}\xbs_k -\ybs^\delta_{i_{k+1}})\big)} \\
    &\leq L_{\max} \ysN{\svaldmapY{p}(\VA_{i_{k+1}}\xbs_k^\delta -\ybs^\delta_{i_{k+1}})-\svaldmapY{p}(\VA_{i_{k+1}}\xbs_k^\delta -\ybs^\delta_{i_{k+1}})}\\
    &\leq C L_{\max} \max\{1,\yN{\VA_{i_{k+1}}\xbs_k^\delta -\ybs^\delta_{i_{k+1}}},\yN{\VA_{i_{k+1}}\xbs_k -\ybs^\delta_{i_{k+1}}}\}^{p-1}\bar{\rho}_\CY(\yN{\VA_{i_{k+1}}(\xbs_k^\delta-\xbs_k)}).
\end{align*}
By the same arguments, the right hand side is uniformly bounded.
Moreover, $\yN{\VA_{i_{k+1}}(\xbs_k^\delta-\xbs_k)}\leq L_{\max} \xN{\xbs_k^\delta-\xbs_k}$, which by the induction hypothesis converges point-wise to $0$. Putting all these bounds together yields that $ \bregman{\xbs^\delta_{k+1}}{\xbs_{k+1}}$ is uniformly bounded and converges point-wise to $0$.
Using Vitaly's theorem, the desired statement follows directly.
Since $\bregman{\xbs^\delta_{k}}{\xbs_{k}}$ is uniformly bounded and converges point-wise to $0$ for any $k$, then so does $\xN{\xbs^\delta_{k}-\xbs_{k}}$, and consequently by the inequality \eqref{eqn:bounds_on_duality_distance} (and \eqref{eqn:bound_on_noisy_iterates}) so does $\xsN{\dmapX{p}(\xbs_k^\delta)-\dmapX{p}(\xbs_k)}$.
The second part of the claim thus follows. This completes the proof of the induction step, and hence also the lemma.
\end{proof}
\bibliographystyle{abbrv}

\end{document}